\theoremstyle{plain}
\newtheorem{theorem}{Theorem}[section]
\theoremstyle{definition}
\newtheorem{definition}[theorem]{Definition}
\theoremstyle{remark}
\newtheorem{remark}[theorem]{Remark}
\newcommand{\hide}[1]{} 
\newcommand{\ceil}[1]{\ensuremath{\left\lceil#1\right\rceil}}
\newcommand\Rbb{\ensuremath{{\mathbb{R}}}}
\newcommand\Fc{\ensuremath{\mathcal{F}}}
\newcommand\Xc{\ensuremath{{\mathcal{X}}}}
\icmltitlerunning{Accelerating Parallel Sampling of Diffusion Models}
\begin{document}

\twocolumn[
\icmltitle{Accelerating Parallel Sampling of Diffusion Models}



\icmlsetsymbol{equal}{*}

\begin{icmlauthorlist}
\icmlauthor{Zhiwei Tang}{yyy,equal}
\icmlauthor{Jiasheng Tang}{sch,comp}
\icmlauthor{Hao Luo}{sch,comp}
\icmlauthor{Fan Wang}{sch}
\icmlauthor{Tsung-Hui Chang}{yyy,bbb}
\end{icmlauthorlist}

\icmlaffiliation{yyy}{School of Science and Engineering, The Chinese University of Hong Kong, Shenzhen, China}
\icmlaffiliation{comp}{Hupan Lab, Zhejiang Province}
\icmlaffiliation{sch}{DAMO Academy, Alibaba Group}
\icmlaffiliation{bbb}{Shenzhen Research Institute of Big Data, Shenzhen, China}

\icmlcorrespondingauthor{Zhiwei Tang}{zhiweitang1@link.cuhk.edu.cn}

\icmlkeywords{Machine Learning, ICML}

\vskip 0.3in
]



\printAffiliationsAndNotice{\textsuperscript{*}This work was done when Zhiwei Tang was intern at DAMO Academy.} 

\begin{abstract}
  Diffusion models have emerged as state-of-the-art generative models for image generation. However, sampling from diffusion models is usually time-consuming due to the inherent autoregressive nature of their sampling process.  In this work, we propose a novel approach that accelerates the sampling of diffusion models by parallelizing the autoregressive process. Specifically, we reformulate the sampling process as solving a system of \emph{triangular nonlinear equations} through fixed-point iteration. With this innovative formulation, we explore several systematic techniques to further reduce the iteration steps required by  the solving process. Applying these techniques, we introduce \textbf{ParaTAA}, a universal and \textbf{training-free} parallel sampling algorithm that can leverage extra computational and memory resources to increase the sampling speed. Our experiments demonstrate that ParaTAA can decrease the inference steps required by common sequential sampling algorithms such as DDIM and DDPM by a factor of \textbf{4$\sim$14 times}. Notably, when applying ParaTAA with 100 steps DDIM for Stable Diffusion, a widely-used text-to-image diffusion model, it can produce the same images as the sequential sampling in only \textbf{7 inference steps}. The code is available at \url{https://github.com/TZW1998/ParaTAA-Diffusion}.
\end{abstract}

\section{Introduction}

In recent years, diffusion models have been recognized as state-of-the-art for generating high-quality images, demonstrating exceptional resolution, fidelity, and diversity \cite{ho2020denoising,dhariwal2021diffusion,song2020score}. These models are also notably easy to train and can be effectively extended to conditional generation \cite{ho2022classifier}. Broadly speaking, diffusion models work by learning to reverse the diffusion of data into noise, a process that can be described by a stochastic differential equation (SDE) \cite{song2020score,karras2022elucidating}:
\begin{align}
\label{p:diffusion}
d {x}_t = f(t) {x}_tdt+g(t) d w_t,
\end{align}
where $d w_t$ is the standard Wiener process, and $f(t)$ and $g(t)$ are the drift and diffusion coefficients, respectively. The reverse process relies on the score function $ \epsilon({x}_t,t) \stackrel{\text{def.}}= \nabla_x\log p({x}_t)$, and its closed form can be expressed either as an ordinary differential equation (ODE) \cite{song2020score}:
\begin{align}
\label{p:ode}
&d {x}_t = \left(f(t){x}_t-\frac{1}{2}g^2(t)\epsilon({x}_t,t)\right)dt,
\end{align}
or as an SDE:
\begin{align}
\label{p:sde}
&d{x}_t = \left(f(t){x}_t-g^2(t)\epsilon({x}_t,t)\right)dt + g(t)d{ w_t}.
\end{align}
With the ability to evaluate $\epsilon({x}_t,t)$, it becomes possible to generate samples from noise by numerically solving the ODE \eqref{p:ode} or the SDE \eqref{p:sde}. The training process, therefore, involves learning a parameterized surrogate $\epsilon_\theta({x}_t,t)$ for $\epsilon({x}_t,t)$ following a denoising score matching framework described in  \cite{song2020score,karras2022elucidating}.

\textbf{Accelerating Diffusion Sampling.} As previously mentioned, the sampling process in diffusion generative models involves solving the ODE \eqref{p:ode} or SDE \eqref{p:sde}. This process requires querying the learned neural network $\epsilon_\theta$ in an autoregressive way, which can limit sampling speed particularly when $\epsilon_\theta$ represents a large model such as Stable Diffusion (SD) \cite{rombach2022high}. To accelerate the sampling process, existing works explore several avenues, which we summarize briefly here.

One avenue is to distill the ODE trajectory of the diffusion sampling process  into another neural network that enables fewer-step sampling, with representative works including \cite{song2023consistency,liu2023instaflow,sauer2023adversarial,salimans2022progressive,meng2023distillation,geng2023one}. However, this class of methods often leads to  degradation in image quality and diversity.

Another direction involves developing faster sequential ODE/SDE solvers for \eqref{p:ode}/\eqref{p:sde} based on mathematical principles, with contributions from \cite{lu2022dpm,song2020denoising,karras2022elucidating,zhao2023unipc}. However, the improvements from these approaches tend to be incremental, given the years of progress in the field.

A recent and promising direction, pioneered by \cite{shih2023parallel}, aims to parallelize the autoregressive sampling process of diffusion models by employing Picard-Lindelöf (PL) iterations for solving the corresponding ODE/SDE. This approach has three main advantages over other existing methods: 1. It does not require extra training; 2. It can lead to (almost) the same images as sequential sampling; 3. It can significantly reduce the inference steps by leveraging extra computing resources. Similar concepts of parallelizing autoregressive inference have also been investigated in the acceleration of Large Language Models (LLMs), such as speculative sampling \cite{leviathan2023fast,sun2023spectr}, and in common autoregressive procedures  \cite{song2021accelerating,lim2023parallelizing}. We focus on this direction in this work, proposing a novel and more efficient algorithm for parallelizing the sampling process of diffusion models.

\subsection{Prior Work}
To the best of our knowledge, the recent work by  \cite{shih2023parallel} stands as the only study focusing  on the parallel sampling of diffusion models.  For a general ODE expressed as $x_t = \int_{0}^t S(x_u,u)du$, the PL iteration adopted in \cite{shih2023parallel} refines an initial discretized trajectory $x_0^{\text{old}},...,x_{T}^{\text{old}}$ through the following fixed-point iteration:
\begin{align}
\label{p:PL}
x_{i}^{\text{new}} =\frac{1}{T} \sum_{u=0}^{i-1} S\left(x_u^{\text{old}},\frac{u}{T}\right),\ \text{for}\ i=0,...,T.
\end{align}
This approach allows the computationally intensive task, evaluating ${S\left(x_u^{\text{old}},\frac{u}{T}\right): u=0,...,T}$, to be executed in parallel. In practice, \cite{shih2023parallel} observed that the PL iteration \eqref{p:PL}  requires significantly fewer than $T$ steps to converge, thus expediting the autoregressive sampling process.

\subsection{Our Contributions}
In this paper, we introduce a novel and principled formulation for the parallel sampling of diffusion models, which includes the method proposed by  \cite{shih2023parallel} as a special case. The primary advantage of this new formulation is that it enables us to rigorously investigate its convergence properties, thus new techniques to improve sampling efficiency are made possible. Besides, differing from \cite{shih2023parallel}, our study is exclusively concentrated on image generation. Specifically,  our contributions are:

\textbf{(1)} We formulate the parallel sampling of diffusion models as solving a system of \emph{triangular nonlinear equations} using fixed-point iteration (FP), which can be seamlessly integrated with any existing sequential sampling algorithms by adjusting the coefficients in the equations.

\textbf{(2)} Inspired by classical optimization theory on nonlinear equations, we develop several techniques to enhance the efficiency of FP. Firstly, we reveal that the convergence behavior of FP is largely attributed to the iteration function, and propose a systematic way to construct an improved iteration function via equivalent transformation on the nonlinear equations. Secondly, to efficiently bootstrap the information from previous iterations, we propose a new variant of the Anderson Acceleration technique \cite{walker2011anderson} tailored for the triangular nonlinear equations. Lastly, we identify two practical tricks through experiments: early stopping—terminating the iteration once a perceptual criterion is met in the generated image; and a useful initialization strategy—initializing the process with the solution from a similar, previously solved equation.

\textbf{(3)} As a byproduct, particularly for text-to-image generation with Stable Diffusion, we observe that when initializing with the sampling trajectory of a similar prompt, one can obtain a smooth interpolation between the source image and the target image in very few steps. This can have implications for tasks such as image variation, editing \cite{meng2022sdedit}, and prompt optimization \cite{hao2022optimizing}.

\textbf{Paper Outline.} We begin by formulating the diffusion sampling problem as solving triangular nonlinear systems in Section \ref{sec:formulation}, and then discuss how to obtain a better iteration function for FP. In Section \ref{sec:anderson}, we introduce how data from previous iterations should be used to speed up the iteration process. Subsequently, we discuss the two useful tricks to further enhance sampling efficiency in Section \ref{sec:early_init}. Lastly, Section \ref{sec:exp} presents experimental results on cutting-edge image diffusion models, demonstrating the effectiveness of our proposed methods.

\section{Formulating Diffusion Sampling as Solving Triangular Nonlinear Equations}
\label{sec:formulation}

We observe that every existing sampling algorithm for diffusion models, such as DDIM \cite{song2020denoising}, DPM-Solver \cite{lu2022dpm}, and Heun \cite{karras2022elucidating}, follows the autoregressive procedure in \eqref{p:autoregressive_complicated}. Let $T$ denote the discretization steps for the ODE/SDE, and $\xi_0,..,\xi_{T}$ be noise vectors drawn from standard Gaussian distribution. Starting with $x_{T} = \xi_{T}$, one computes $x_{T-1},...,x_0$ sequentially via  the following equation from $t=T$ to $t=1$:
\begin{align}
    \label{p:autoregressive_complicated}
    x_{t-1} &= \sum_{i=t}^{T}a_{t,i} x_i+\sum_{i=t}^{T}b_{t,i}\epsilon_\theta(x_i,i)+c_{t-1}\xi_{t-1},
\end{align}
where $a_{t,i},b_{t,i},c_t$ are coefficients determined by the specific sampling algorithm. Notably, for ODE solvers like DDIM \cite{dhariwal2021diffusion}, it holds that $c_0=...=c_{T-1}=0$, whereas for SDE solvers like DDPM \cite{ho2020denoising}, $c_0,...,c_{T-1}$ are all non-zero.

For simplicity and due to a limit time, this work focuses on commonly used first-order solvers such as DDIM and DDPM, while leaving extensions to higher-order solvers like DPM-Solver and Heun as future works. For first-order solvers, \eqref{p:autoregressive_complicated} can be simplified to:
\begin{align}
    \label{p:autoregressive}
    x_{t-1} &= a_tx_t+b_t\epsilon_\theta(x_t,t)+c_{t-1}\xi_{t-1},\ t=1,...,T.
\end{align} 
Following the insights from \cite{song2021accelerating}, we found that this autoregressive procedure \eqref{p:autoregressive} can be viewed as  triangular nonlinear equations with $x_0,...,x_{T-1}$ as the unknown variables. Besides, by further examination on \eqref{p:autoregressive}, we reveal that these equations can be expressed in various equivalent forms. For instance, by incorporating the $(t+1)$-th equation into the first term of the $t$-th equation in \eqref{p:autoregressive}, we derive an alternative $t$-th equation:
\begin{align}
    \label{p:order2}
    x_{t-1}=& a_t\bigg(\underbrace{{ a_{t+1}x_{t+1}+b_{t+1}\epsilon_\theta(x_{t+1},t+1)+c_{t}\xi_{t}}}_{{=x_t}}\bigg)\notag\\ &\qquad\qquad+b_t\epsilon_\theta(x_t,t)+c_{t-1}\xi_{t-1}.
\end{align}

This leads us to define a series of equivalent nonlinear systems for the autoregressive procedure \eqref{p:autoregressive}.
\begin{definition}[$k$-th order nonlinear equations]\label{def:orderk} 
    For any $1\leq k\leq T$ with $x_T = \xi_T$, we define 
    \begin{align}
        \label{p:orderk}
        x_{t-1}=F^{(k)}_{t-1}(x_t,x_{t+1},...,x_{t_k})\ , t=1,...,T
    \end{align}
     as {\it the $k$-th order nonlinear equations} for the  autoregressive sampling procedure \eqref{p:autoregressive}, where $F^{(k)}_{t-1}$ is defined as
    \begin{align}
        \label{p:F_orderk}
        &F^{(k)}_{t-1}(x_t,x_{t+1},...,x_{t_k}) \stackrel{\text{def.}}=
       \bar a_{t,t_k}x_{t_k}\notag\\ +&\sum_{j=t}^{t_k}\bar a_{t,j-1}b_j\epsilon_\theta(x_j,j)  +\sum_{j=t}^{t_k}\bar a_{t,j-1}c_{j-1}\xi_{j-1},
    \end{align} and $t_k\stackrel{\text{def.}}=\min\{t+k-1,T\}$, $\bar a_{i,s}=\prod_{j=i}^{s}a_j$.  We denote $\bar a_{i,s}=1$ for $s<i$.
\end{definition}

From this definition, it is evident that the equations \eqref{p:orderk} with $k=1$ correspond exactly to the autoregressive sampling procedure \eqref{p:autoregressive}. Regarding this family of nonlinear equations, we assert the following:
\begin{theorem}
\label{thm:unique}
 The nonlinear equations \eqref{p:orderk} with different orders $k$  are all equivalent and possess a unique solution.
\end{theorem}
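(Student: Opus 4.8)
The plan is to prove the two assertions in sequence: uniqueness (with existence) of a solution for each fixed order $k$, and then the equivalence of all orders. For uniqueness at any fixed $k$, the key observation is that \eqref{p:orderk} is genuinely triangular. In the $t$-th equation the unknown $x_{t-1}$ is expressed through $F^{(k)}_{t-1}$ as a function of $x_t,x_{t+1},\dots,x_{t_k}$, and since $t_k=\min\{t+k-1,T\}\geq t>t-1$, every variable on the right-hand side carries a strictly larger index than the one on the left. Because the boundary value $x_T=\xi_T$ is prescribed, I would solve by back-substitution from $t=T$ downward: for $t=T$ one has $t_k=T$, so $x_{T-1}=F^{(k)}_{T-1}(x_T)$ is determined outright; proceeding through $t=T-1,\dots,1$, each $x_{t-1}$ is obtained by evaluating $F^{(k)}_{t-1}$ on already-determined higher-index variables. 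This yields exactly one solution, and notably no hypothesis on $\epsilon_\theta$ (such as Lipschitz continuity) is required, since the recursion never inverts $\epsilon_\theta$: it is evaluated only at indices that have already been fixed.

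For the equivalence of different orders, I would show that the unique solution of the order-$1$ system (the sequence produced by the sequential sampler \eqref{p:autoregressive}) also satisfies the order-$k$ system for every $k$; combined with the uniqueness just established for order $k$, this forces all orders to share a single common solution. Concretely, let $(x_0,\dots,x_T)$ be the order-$1$ solution, and recall that the order-$1$ relation expressing any $x_i$ reads $x_i=a_{i+1}x_{i+1}+b_{i+1}\epsilon_\theta(x_{i+1},i+1)+c_i\xi_i$. I would prove by induction on the number of substitutions that repeatedly replacing the current highest-index variable in the $k=1$ relation for $x_{t-1}$ by its own defining expression yields, after reaching index $t_k$, precisely the closed form $F^{(k)}_{t-1}$ in \eqref{p:F_orderk}. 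Each such substitution replaces a quantity by something equal to it on the solution set, so it is an equivalent transformation that preserves the solution.

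The inductive step is a telescoping computation. After $m-1$ substitutions the leading term is $\bar a_{t,t+m-2}\,x_{t+m-1}$; substituting the relation for $x_{t+m-1}$ multiplies this coefficient by $a_{t+m-1}$, collapsing it to $\bar a_{t,t+m-1}$, and simultaneously emits an $\epsilon_\theta$ term and a $\xi$ term whose coefficients are exactly $\bar a_{t,j-1}b_j$ and $\bar a_{t,j-1}c_{j-1}$ at $j=t+m-1$, matching the two sums in \eqref{p:F_orderk}. Since the unrolling halts upon reaching the prescribed $x_T$ (hence the cap $t_k=\min\{t+k-1,T\}$), the order-$1$ solution satisfies every order-$k$ system, and by uniqueness no other solution exists. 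Running this for all $k$ shows that the systems \eqref{p:orderk} are pairwise equivalent and share the same unique solution.

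The main obstacle I anticipate is not conceptual but the index and coefficient bookkeeping in the inductive unrolling: verifying that the telescoping products collapse to $\bar a_{i,s}=\prod_{j=i}^{s}a_j$ exactly as written, keeping the off-by-one alignment among the subscripts of $\bar a$, $b_j$, $c_{j-1}$, and $\xi_{j-1}$ correct, and handling the boundary case $t+k-1>T$ where the summation caps at $t_k=T$ and the convention $\bar a_{i,s}=1$ for $s<i$ must be invoked for the lowest-order terms.
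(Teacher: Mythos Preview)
Your proposal is correct and rests on the same two ingredients as the paper's proof: the triangular structure for uniqueness, and repeated substitution (unrolling) for equivalence. The organization differs in a way worth noting. The paper proves uniqueness only for the order-$1$ system and then argues equivalence \emph{bidirectionally} (an order-$1$ solution satisfies the order-$2$ system, and conversely an order-$2$ solution satisfies the order-$1$ system, the latter shown by descending induction from $t=T$). You instead prove uniqueness directly for \emph{every} order $k$ via back-substitution, which lets you get away with only the forward direction (the order-$1$ solution satisfies order-$k$); uniqueness at order $k$ then forces the solutions to coincide. Your route is marginally more economical, since the reverse implication that the paper spells out is precisely the step your uniqueness-at-all-$k$ argument renders unnecessary. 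The telescoping bookkeeping you flag as the main obstacle is routine and matches the definition of $\bar a_{i,s}$ in \eqref{p:F_orderk}.
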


Fixed-point iteration is a classical method for solving nonlinear equations like \eqref{p:orderk}. Given the set of variables $x_0^{i},...,x_{T-1}^{i}$ at the $i$-th iteration, the fixed-point iteration calculates the $(i+1)$-th iteration as follows:
\begin{align}
    \label{p:fixedpoint}
    x_{t-1}^{i+1}=F^{(k)}_{t}(x_{t}^{i},x_{t+1}^{i},...,x_{t_k}^{i}), \quad t=1,...,T.
\end{align}
As can be seen, performing one iteration in \eqref{p:fixedpoint} involves evaluating $\epsilon_\theta(x_1^{i},1),...,\epsilon_\theta(x_{T}^{i},T)$, which equates to inferring the neural network $\epsilon_\theta$ $T$ times. Fortunately, with sufficient computational resources like GPUs, these evaluations can be processed all in parallel, making the time cost comparable to a single query of $\epsilon_\theta$. Crucially, as demonstrated in Section \ref{sec:exp} and also \cite{shih2023parallel}, fixed-point iteration \eqref{p:fixedpoint} typically requires significantly less than $T$ steps to generate a sample matching the one obtained via autoregressive procedure \eqref{p:autoregressive}, thus accelerating the sampling process. 

Notably, the selection of order $k$ for the nonlinear equations influences the computational graph in the fixed-point iteration \eqref{p:fixedpoint}—determining the number of variables from later timesteps that are employed to update the variables from earlier timesteps. We will explore the effect of order $k$ on the convergence of the fixed-point iteration in Section \ref{sec:order_effect} with greater details.

\subsection{Stopping Criterion} 
\label{sec:stopping}
To examine the convergence of the fixed-point iteration \eqref{p:fixedpoint}, we can employ the residuals of the nonlinear equations \eqref{p:orderk} for a stopping criterion. Furthermore, given the equivalence of nonlinear equations \eqref{p:orderk} across different orders $k$, a universal stopping criterion is applicable for all. In this study, we choose to use the residuals of the first-order equations for the stopping criterion. Specifically, the residual for the $t$-th equation in \eqref{p:orderk} is defined as:
\begin{align}
    \label{p:residual}
    r_{t-1}\stackrel{\text{def.}}=\|x_{t-1}-a_tx_t-b_t\epsilon_{\theta}(x_t,t)-c_{t-1}\xi_{t-1}\|_2^2
\end{align} 
Owing to the triangular structure of \eqref{p:orderk}, for any $0<t\leq T$, we can conclude the convergence of the variables $x_{t-1},...,x_{T-1}$ if the conditions $r_{t-1}\leq \varepsilon_{t-1},...,r_{T-1}\leq \varepsilon_{T-1}$ are met, where $\varepsilon_0,...,\varepsilon_{T-1}$ represent predetermined time-dependent thresholds. Following previous research \cite{shih2023parallel}, we set $\varepsilon_t$ to $\tau^2g^2(t)d$, with $\tau$ as the tolerance hyperparameter, $d$ as the data dimension, and $g(t)$ as the diffusion coefficient from \eqref{p:diffusion}. Once the variables $x_{t-1},...,x_{T-1}$ have converged, further updates are unnecessary, and they can remain fixed.

\subsection{Saving Computation By Solving Subequations }When $T$ is large, computing $\epsilon_\theta(x_1^i,1),...,\epsilon_\theta(x_T^i,T)$ simultaneously may demand substantial memory. To address this, prior work \cite{shih2023parallel} introduced the concept of a sliding window—solving only a  lower triangular subequations in \eqref{p:orderk} at a time. For instance, with a window size $w$, one could initially iterate over the variables $x_{T-w},...,x_{T-1}$ by resolving the corresponding subequations. Once the variables $x_{t-1},...,x_{T-1}$ converge, as determined by the stopping criterion detailed in Section \ref{sec:stopping}, the iteration window can be shifted to update $x_{t-w},...,x_{t-1}$ through their respective subequations.

\subsection{Effect of the Order of Nonlinear Equations}
\label{sec:order_effect}
We have found that despite the equivalence of the nonlinear system \eqref{p:orderk} across different orders $k$, the order $k$ influences the optimization landscape of the nonlinear system \eqref{p:orderk}, and consequently, the convergence speed of the fixed-point iteration. 
It is known that the speed of convergence is associated with the Lipschitz constant of the function $F^{(k)}_{t-1}$ \cite{argyros2013computational}.
If $k$ is excessively large, the Lipschitz constant of $F^{(k)}_{t-1}$ could be potentially large, since it incorporates more variables,  leading to instability and slower convergence. Conversely, the fixed-point iteration \eqref{p:fixedpoint} generally requires at least $\ceil{\frac{T-1}{k}}$ steps to converge due to the structure of the computational graph. This is because $x_{t-1}$ is updated using information from $x_{t},...,x_{t_k}$, meaning the initial condition $x_{T}=\xi_T$ can only influence $x_0$ after $\ceil{\frac{T-1}{k}}$ iterations. 

Hence, an appropriate value of $k$ is crucial for expediting the fixed-point iteration. We examined this by running fixed-point iteration \eqref{p:fixedpoint} under various $k$ for the DDIM \cite{song2020denoising} and DDPM \cite{ho2020denoising} sampling algorithms with 100 steps, using the DiT model \cite{peebles2023scalable}. The window size $w$ is set to 100. Figure \ref{fig:order} illustrates the impact of $k$ on the convergence of residuals $\sum_{t=1}^T r_{t-1}$. As observed, small values of $k$ lead to slow convergence of residuals, whereas large $k$ values result in instability, particularly at the beginning for DDIM with $T=100$.

\begin{remark}
While we provide insight into how the order affects fixed-point iteration convergence, predicting the optimal $k$ from a theoretical standpoint is generally not feasible, since the neural network $\epsilon_\theta$ is a black-box.  Thus, we recommend treating $k$ as a hyperparameter and selecting the optimal one based on empirical performance. Appendix \ref{app:hyperparameter} contains grid search results on the effect of $k$ on the convergence speed for different sampling algorithms.
\end{remark}

\begin{remark}
It is noteworthy that the PL iteration employed by prior work \cite{shih2023parallel} is equivalent to applying a fixed-point iteration to solve the nonlinear equations \eqref{p:orderk} with order $k$ equal to the chosen window size $w$, and thus it corresponds to the $k=100$ in Figure \ref{fig:order}.
\end{remark}

\begin{figure}[htpb]
	\centering
	\begin{subfigure}[b]{0.23\textwidth}
	\centering
	\includegraphics[width=\textwidth]{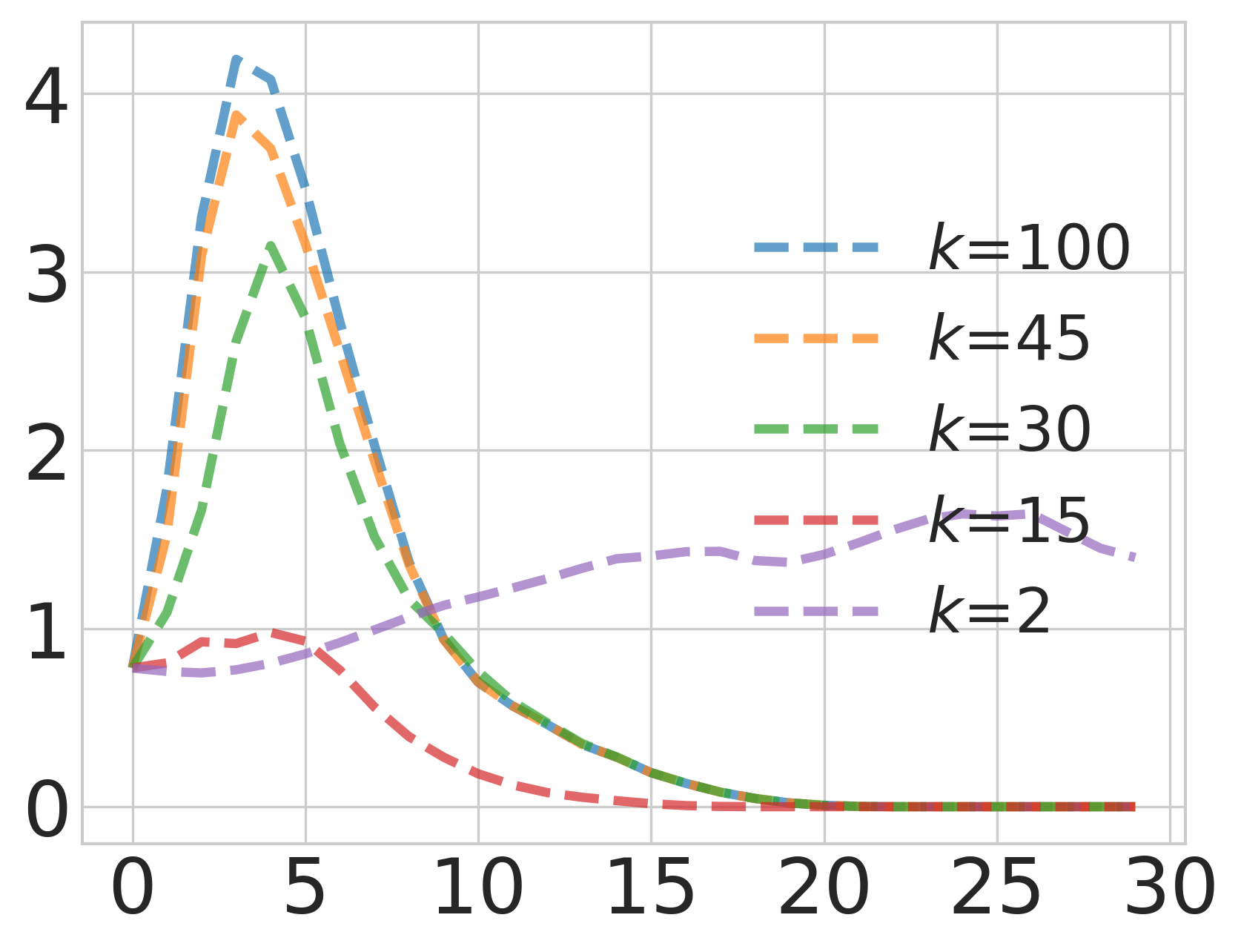}
 	\caption{DDIM 100 steps}
	\label{}
\end{subfigure}
	\begin{subfigure}[b]{0.22\textwidth}
	\centering
	\includegraphics[width=\textwidth]{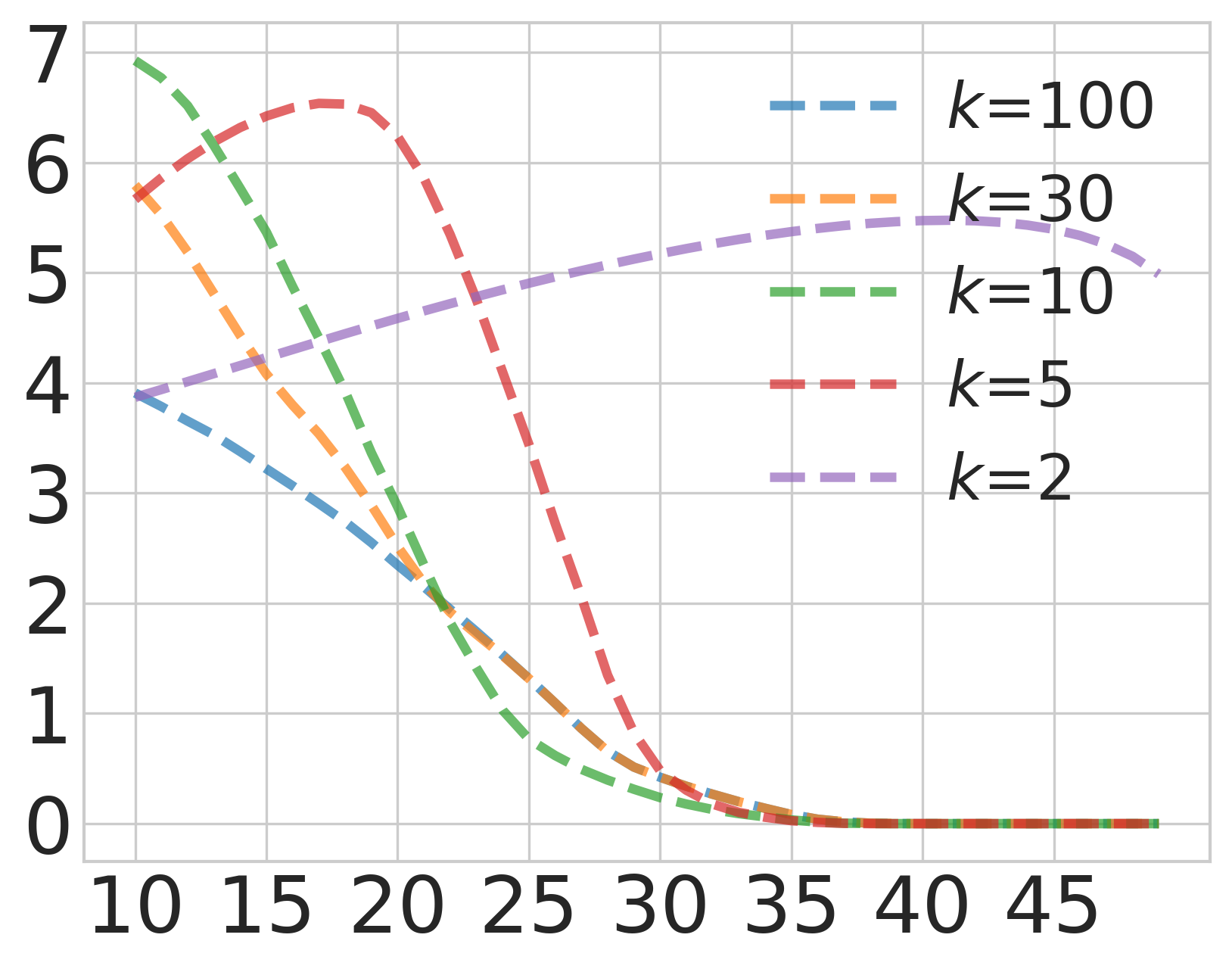}
	\caption{DDPM 100 steps}
	\label{}
\end{subfigure}
\caption{Convergence of residuals under different orders. x-axis is the iteration steps while y-axis is the value of $\sum_{t=1}^T r_{t-1}$.}
	\label{fig:order}
\end{figure}

\section{Anderson Acceleration for Triangular Nonlinear Equations}
\label{sec:anderson}

\textbf{Anderson Acceleration (AA)} \cite{anderson1965iterative} is a classical method for expediting fixed-point iterations, which is extensively utilized across various engineering disciplines \cite{walker2011anderson}. The central idea of AA is to leverage information from previous iterations to approximate the inverse Jacobian of the nonlinear system and to implement a Newton-like update using this approximation.
In this section, we explore the use of AA within the context of parallel sampling of diffusion models and the resolution of triangular nonlinear systems. First of all, let us describe a straightforward implementation of standard AA for the fixed-point iteration \eqref{p:fixedpoint} with the use of up to $m \ (m\geq1)$ previous iterations. With the initialization $x_0^0,...,x_{T-1}^0$, the process begins with a standard fixed-point iteration as indicated by \eqref{p:fixedpoint}. For the $i$-th iteration with $i\geq1$, we introduce the following notations.

\textbf{Notations. } Let $m_i=\min\{m,i\}$, $\Delta x_{t}^i = x_t^{i+1} - x_t^i$, $\Xc^i_t=\left[\Delta x_{t}^{i-m_i},...,\Delta x_{t}^{i-1}\right]$, 
$R_t^i= F^{(k)}_{t}\left(x_{t+1}^{i},...,x_{{(t+1)}_k}^{i}\right) - x_{t}^{i}$, 
    $\Delta R_t^i=R^{i+1}_t-R^i_t$, 
    $\Fc^i_t= \left[\Delta R^{i-m_i}_t,...\Delta R^{i-1}_t\right]$. For any $0\leq t_1\leq t_2<T$ and any vectors/matrixes $v_1,...,v_{T-1}$, we denote $v_{t_1:t_2}=\left[v^\top_{t_1},...,v^\top_{t_2}\right]^\top$. For any matrix $V$, we denote $V[i:j,t:s]$ as the submatrix of $V$ with rows $i,...,j$ and columns $t,...,s$. If $j$ and $s$ are not specified, denote $j=T-1$ and $s=T-1$.

Assuming that the subequations in \eqref{p:orderk} for $t=t_1,...,t_2$ are being solved and that $\Fc^{i\top}_{t_1:t_2} \Fc^i_{t_1:t_2}$ has full rank, the update rule for  (AA) is provided by the following equation:
\begin{align}
    \label{p:anderson}
    x^{i+1}_{t_1:t_2} = x^i_{t_1:t_2} - G^i R^i_{t_1:t_2},
\end{align}
where $G^i$ is considered an approximate inverse Jacobian of $R^i_{t_1:t_2}$, and is computed as follows:
\begin{align}
    \label{p:G}
    &G^i = -I + (\Xc^i_{t_1:t_2}+\Fc^i_{t_1:t_2})(\Fc^{i\top}_{t_1:t_2} \Fc^i_{t_1:t_2})^{-1}\Fc^{i\top}_{t_1:t_2}
\end{align}
The justification for \eqref{p:G} is that $G^i$ satisfies the \textbf{Inverse Multisecant Condition} \cite{fang2009two}:
\begin{align}
    \label{p:inverse_multisecant}
    G^i \Fc^i_{t_1:t_2} = \Xc^i_{t_1:t_2},
\end{align}
and the Frobenius norm $\left\|G^i+I\right\|_F$ is the smallest possible for all matrices meeting this condition \eqref{p:inverse_multisecant} \cite{walker2011anderson}. It is evident from \eqref{p:anderson} that when $G^i$ is set to $-I$, the AA update simplifies to the standard fixed-point iteration.


\subsection{Triangular Anderson Acceleration}

We identified a critical issue in the update rule of the AA as given in \eqref{p:anderson}: For some timesteps  $j<t$, the update of $x^{i+1}_t$ could be influenced by the value of $x_{j}^i$ due to the matrix $G^i$ potentially being dense. This has occasionally led to numerical instability in our practices\footnote{Specifically, we have observed instances of numerical overflow when applying AA with 16-bit precision.}. To understand this instability, we find that $x_t$ always converges before $x_j$\footnote{Refer to Figure \ref{fig:residual_pattern} in Appendix \ref{app:triangular_anderson} for empirical evidence.}, which suggests that using the state of $x_j$ to update $x_t$ can be counterproductive, particularly when $x_t$ is near convergence but $x_j$ is not.

Armed with this key observation, we propose an adapted version of AA that is well-suited for triangular nonlinear equations like \eqref{p:orderk}. The principal idea is to constrain the matrix $G^i$ in \eqref{p:anderson} to be \emph{block upper triangular}—A formal definition is given as follows:
\begin{definition}[Block Upper Triangular Matrix] \label{def:block_upper_triangular}
Consider a matrix $G\in\Rbb^{(t_2-t_1)d\times (t_2-t_1)d}$. We define $G$ as block upper triangular if, for any $t_1 \leq t \leq t_2$, $j \leq (t-t_1)d$, and $1 \leq s \leq d$, it holds that $G[(t-t_1)d + s,j] = 0$.
\end{definition}

By doing so, the updated value $x^{i+1}_t$ in \eqref{p:anderson} receives information exclusively from those $x_{j}^i$ with $j \geq t$. In the subsequent theorem, we present a closed-form solution that fulfills both the inverse multisecant condition \eqref{p:inverse_multisecant} and the block upper triangular stipulation as defined in Definition \ref{def:block_upper_triangular}, while also being optimally close to $-I$ with respect to the Frobenius norm.

\begin{theorem}
\label{thm:G_upper}
Assume $m<d$ and that $\Fc^{i\top}_{t_2} \Fc^i_{t_2}$ has full rank. Let $Q^i\in \Rbb^{(t_2-t_1)d\times (t_2-t_1)d}$ be a block upper triangular matrix, and for any $t_1\leq t\leq t_2$:
\begin{align}
\label{p:Q}
Q^i\left[t':t'',t':\right] = (\Xc^i_{t}+\Fc^i_{t})(\Fc^{i\top}_{t:t_2} \Fc^i_{t:t_2})^{-1}\Fc^{i\top}_{t:t_2},
\end{align}where $t' \stackrel{\text{def.}}{=} (t-t_1)d + 1$ and $t'' \stackrel{\text{def.}}{=} (t-t_1)d + d$.
Then the matrix $T^i=-I+Q^i$ meets both the inverse multisecant condition \eqref{p:inverse_multisecant} and the block upper triangular requirement from Definition \ref{def:block_upper_triangular}, and $\left\|T^i+I\right\|_F$ is minimal among all matrices that comply with these conditions.
\end{theorem}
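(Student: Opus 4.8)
The plan is to exploit the fact that the block upper triangular constraint from Definition \ref{def:block_upper_triangular} makes the whole problem \emph{separate across block rows}, so that minimizing $\|G+I\|_F$ subject to the inverse multisecant condition \eqref{p:inverse_multisecant} reduces to independent minimum-norm least-squares problems, one per block row $t$ with $t_1\le t\le t_2$. First I would write the candidate matrix $G$ in its $d\times d$ block form indexed by $t$ and read off what the two requirements say block row by block row. Because $G$ is block upper triangular, block row $t$ has nonzero blocks only in column-blocks $j\ge t$; denote by $G_t \stackrel{\text{def.}}= G[t':t'',\,t':]$ this nonzero part (a $d\times (t_2-t+1)d$ matrix, in the notation of \eqref{p:Q}). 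Then the condition \eqref{p:inverse_multisecant}, read on block row $t$, collapses to $G_t\,\Fc^i_{t:t_2}=\Xc^i_t$, since the blocks forced to zero multiply exactly the top part $\Fc^i_{t_1},\dots,\Fc^i_{t-1}$ of the stacked matrix and contribute nothing. Simultaneously, adding $I$ only changes the diagonal block $(t,t)$, so with the selection matrix $E_t\stackrel{\text{def.}}=[\,I,\,0,\dots,0\,]$ (identity in its first $d\times d$ block) one gets the exact separation $\|G+I\|_F^2=\sum_{t=t_1}^{t_2}\|G_t+E_t\|_F^2$, where the constraint on the $t$-th summand involves only $G_t$. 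Hence the global problem splits into the subproblems $\min_{G_t}\|G_t+E_t\|_F^2$ subject to $G_t\Fc^i_{t:t_2}=\Xc^i_t$.

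Next I would solve each subproblem in closed form. Substituting $H_t=G_t+E_t$ turns it into the classical least-Frobenius-norm problem $\min\|H_t\|_F$ subject to $H_t\Fc^i_{t:t_2}=\Xc^i_t+\Fc^i_t$, using $E_t\Fc^i_{t:t_2}=\Fc^i_t$. Its unique solution is $H_t=(\Xc^i_t+\Fc^i_t)(\Fc^{i\top}_{t:t_2}\Fc^i_{t:t_2})^{-1}\Fc^{i\top}_{t:t_2}$, i.e.\ the row-wise application of the Moore--Penrose pseudoinverse of $\Fc^i_{t:t_2}$. This is exactly the block $Q^i[t':t'',t':]$ prescribed in \eqref{p:Q}, so $G_t=H_t-E_t$ is the $t$-th block row of $T^i=-I+Q^i$. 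Reassembling over $t$ shows $T^i$ is simultaneously block upper triangular, multisecant, and minimal in $\|\cdot+I\|_F$, and uniqueness of each piece gives uniqueness of $T^i$ as the global minimizer.

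The conceptual crux is the decoupling above: once one sees that the forced-zero blocks annihilate the top rows of the multisecant condition and that the Frobenius norm separates across block rows with the identity sitting only on the diagonal, everything else is the routine pseudoinverse computation. The one technical point needing care---and the only place the hypotheses enter---is well-posedness of each subproblem, since the pseudoinverse formula requires $\Fc^i_{t:t_2}$ to have full column rank for \emph{every} $t$, not merely for $t=t_2$. I would close this by noting that $\Fc^i_{t:t_2}$ contains $\Fc^i_{t_2}$ as a sub-block of rows, so $\mathrm{rank}\,\Fc^i_{t:t_2}\ge \mathrm{rank}\,\Fc^i_{t_2}=m_i$ whenever $\Fc^{i\top}_{t_2}\Fc^i_{t_2}$ is full rank (which is feasible precisely because $m<d$ forces $m_i\le m<d$). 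Thus the assumed full rank at $t=t_2$ propagates to all $t\le t_2$, each subproblem has a unique minimizer, and the reassembled $T^i$ is the unique matrix satisfying all three requirements.
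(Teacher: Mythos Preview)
Your proposal is correct and follows essentially the same route as the paper: decouple the problem across block rows using the block upper triangular structure, reduce each block row to a minimum-Frobenius-norm underdetermined linear system, and solve via the pseudoinverse formula. Your treatment is in fact more careful than the paper's in two respects---you make explicit why $\|G+I\|_F^2$ separates row-wise (the identity only touches the diagonal block), and you justify why the full-rank hypothesis at $t=t_2$ propagates to every $t\le t_2$, a point the paper glosses over.
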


Employing the $T^i$ derived from Theorem \ref{thm:G_upper}, we introduce a tailored update rule for AA in the context of triangular nonlinear equations:
$x^{i+1}_{t_1:t_2} = x^i_{t_1:t_2} - T^i R^i_{t_1:t_2}$, and we refer this method as \textbf{Triangular Anderson Acceleration} (\textbf{TAA}). In this study, we do not undertake a detailed theoretical analysis on TAA. This omission is because even the theoretical aspects of standard AA are still actively being researched in the field of optimization \cite{evans2020proof,rebholz2023effect}. Instead, we concentrate on assessing the empirical performance of this new type of Anderson Acceleration approach. 

Figure \ref{fig:ad5_order} shows the results of comparing fixed-point iteration, AA, and TAA in the same scenario as Figure \ref{fig:order}. We observe that both AA and TAA improve upon the optimal fixed-point iteration from Figure \ref{fig:order} by a large margin, regardless of the $k$ used. Moreover, TAA is notably faster than AA, especially for the DDPM with 100 steps, and it remains stable even when using 16-bit precision for calculations. Additionally, similar to the fixed-point iteration, TAA  can also benefit from selecting an optimal $k$.

\begin{remark}
In practice, we utilize $(\Fc^{i\top}_{t:t_2} \Fc^i_{t:t_2} + \lambda I)^{-1}$ with $\lambda > 0$ being a small constant, for stabilizing the computation of $T^i$ in \eqref{p:Q}. 
\end{remark}

\begin{remark}
Apart from the method for determining $T^i$ as outlined in Theorem \ref{thm:G_upper}, we also explored a heuristic approach to acquire a block upper triangular matrix by directly extracting the upper triangular portion of $G^i$ from \eqref{p:G}. While this method also enhances standard AA, it still faced numerical instability and was less effective compared to the approach using $T^i$ from Theorem \ref{thm:G_upper}. Further details are available in Appendix \ref{app:triangular_anderson}.
\end{remark}

\begin{remark}
The computation of the matrix $T^i$ in Theorem \ref{thm:G_upper} adds only minimal computational and memory overhead to the standard fixed-point iteration \eqref{p:fixedpoint}. Firstly, the storage for the history matrices $\Fc^{i}_{t_1:t_2}$ and $\Xc^i_{t_1:t_2}$, of dimension $(t_2-t_1)d \times m_i $, is neglectable compared to that of the neural network $\epsilon_\theta$. Secondly, the operations in \eqref{p:Q} consist of simple matrix multiplication and inversion; the matrix $\Fc^{i\top}_{t:t_2} \Fc^i_{t:t_2} \in \Rbb^{m_i \times m_i}$ can be efficiently computed as the value of $m$ is typically chosen to be between 2 and 5.
\end{remark}

\begin{figure}[htpb]
	\centering
	\begin{subfigure}[b]{0.23\textwidth}
	\centering
	\includegraphics[width=\textwidth]{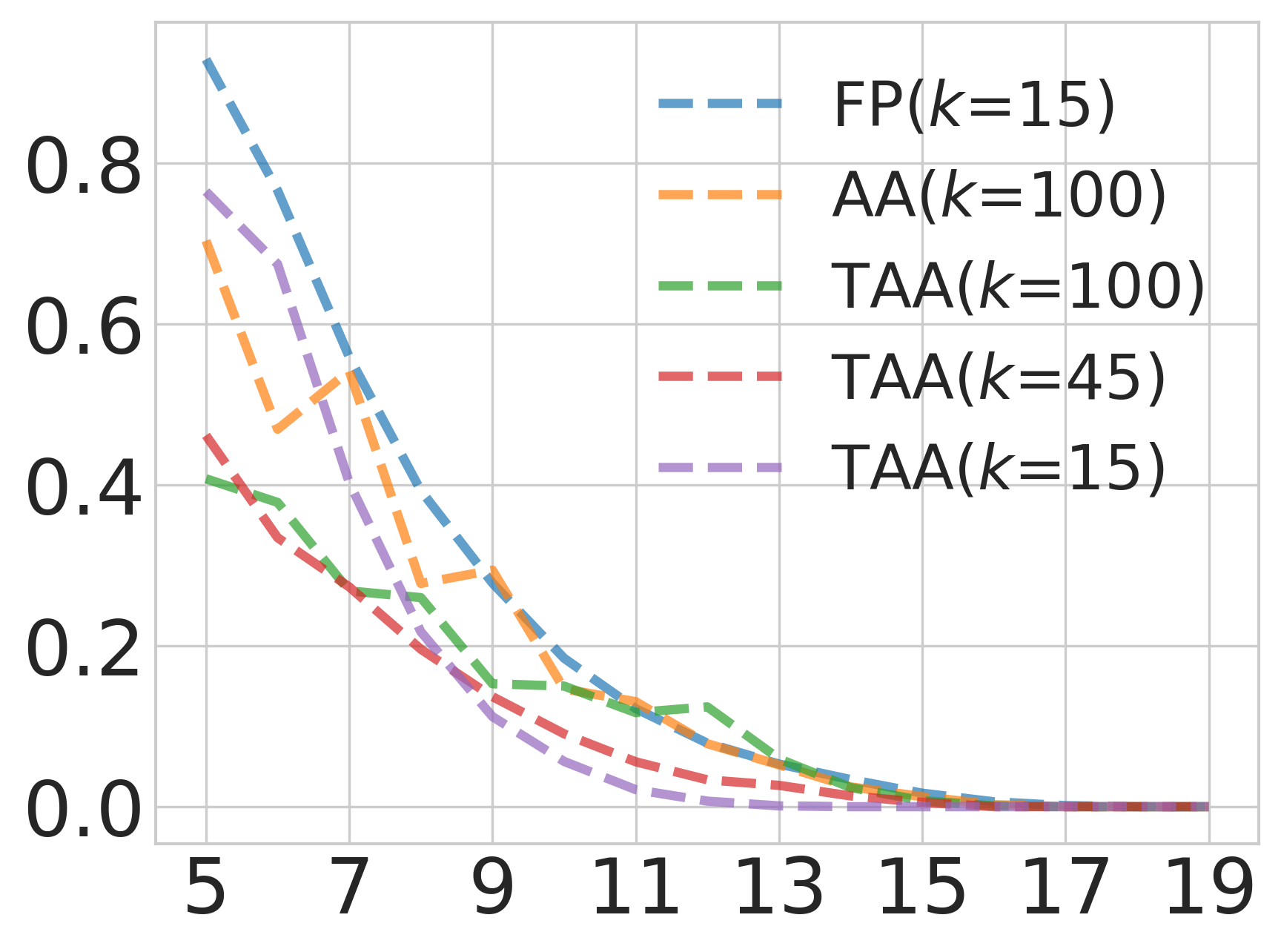}
 	\caption{DDIM 100 steps}
	\label{}
\end{subfigure}
	\begin{subfigure}[b]{0.22\textwidth}
	\centering
	\includegraphics[width=\textwidth]{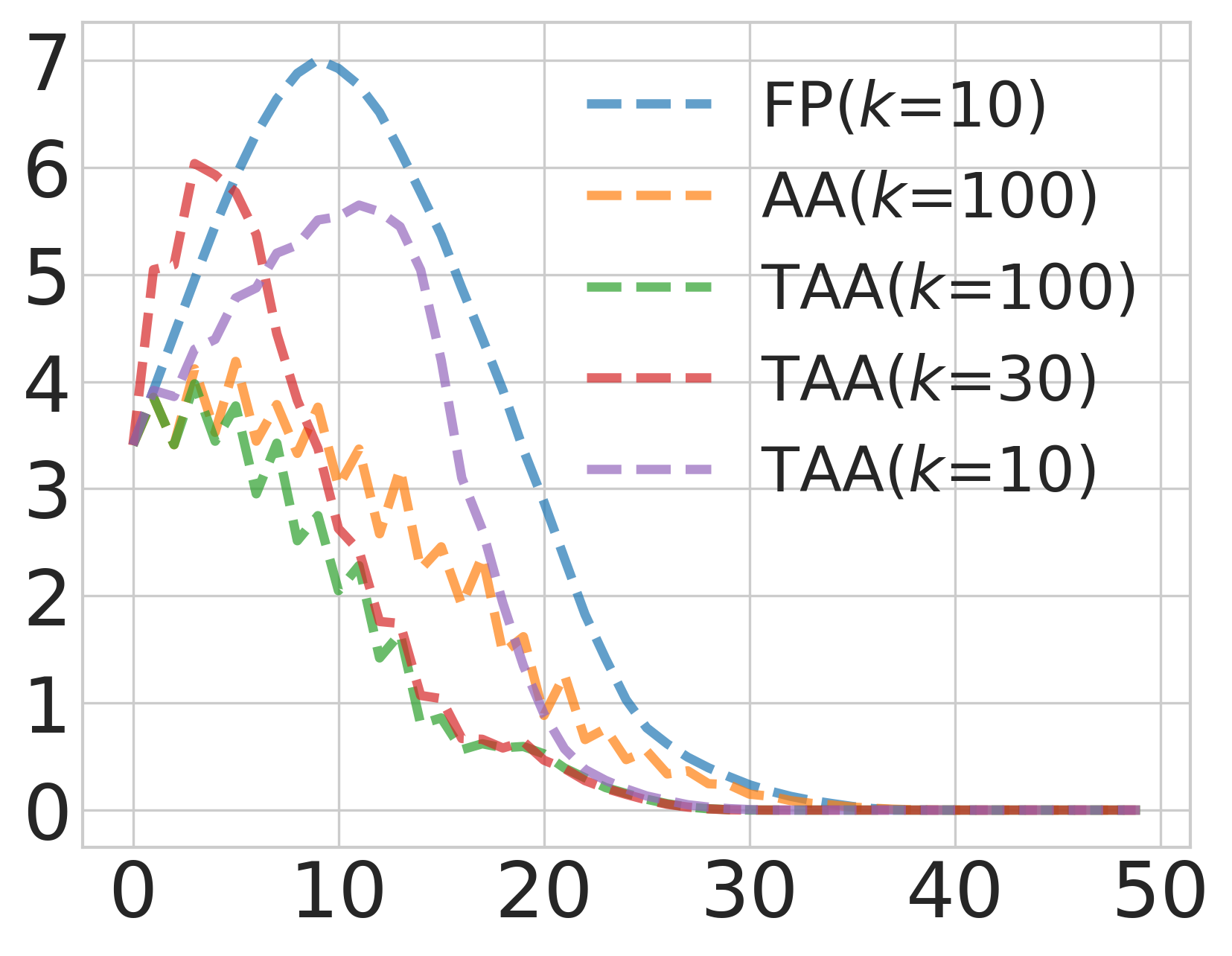}
	\caption{DDPM 100 steps}
	\label{}
\end{subfigure}
\caption{Convergence of FP, AA, TAA under different $k$.}
	\label{fig:ad5_order}
\end{figure}

\subsection{Safeguarding Triangular Anderson Acceleration}
Fixed-point iteration, as described in \eqref{p:fixedpoint}, is known to converge within $T$ steps for triangular systems like \eqref{p:orderk} (see Proposition 1 in \cite{song2021accelerating}). Unfortunately, neither the original AA nor the TAA possesses this worst-case convergence guarantee. To address this, we have identified a sufficient condition for the general update rule in the form of \eqref{p:anderson} to ensure convergence within $T$ steps.

\begin{theorem}
\label{thm:convergence}
Consider a general update rule: In the $i$-th iteration, the update is $x^{i+1}_{0:T-1} = x^i_{0:T-1} - G^i R^i_{0:T-1}$, with $G^i$ being any arbitrary matrix. If for any $j$ where $R_{j+1}^i=...=R_{T}^i=\boldsymbol{0}$ \footnote{Note that $R_{T}^i\equiv\boldsymbol{0}$ since the initial variable $x_T$ is set to $\xi_T$.}, the matrix $G^i$ satisfies $G^i[jd:,jd:]=-I$, then the update rule will converge within $T$ steps.
\end{theorem}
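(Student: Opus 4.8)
The plan is to prove convergence by tracking a \emph{converged front} that advances by at least one coordinate block per iteration. Let $x^*_{0:T-1}$ denote the unique solution guaranteed by Theorem~\ref{thm:unique}, and for each iterate define $j_i$ to be the smallest index with $R^i_{j_i+1}=\cdots=R^i_T=\boldsymbol{0}$; since $R^i_T\equiv\boldsymbol{0}$ this is well defined and $j_i\le T-1$. The goal is to show $j_{i+1}\le j_i-1$ whenever $j_i\ge 0$, so that after at most $T$ iterations $j_i$ drops below $0$, i.e.\ $R^i_{0:T-1}=\boldsymbol{0}$ and the iterate coincides with $x^*$.

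First I would record the elementary equivalence furnished by the triangular structure of \eqref{p:orderk}: because $R_t=F^{(k)}_t(x_{t+1},\dots,x_{(t+1)_k})-x_t$ depends only on the blocks with index $\ge t$, one has for any $j$ that $R^i_t=\boldsymbol{0}$ for all $t>j$ if and only if $x^i_t=x^*_t$ for all $t>j$. The forward direction follows by downward induction starting from $x_{T-1}$, whose equation involves only the fixed $\xi_T$; the reverse direction is immediate since $x^*$ solves every equation. This lets me freely convert between statements about residuals and statements about the variables themselves.

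For the inductive step, suppose $j_i=j\ge0$, so $x^i_t=x^*_t$ for all $t>j$. Applying the theorem's hypothesis with this $j$ gives $G^i[jd:,jd:]=-I$, and I would then expand the block rows $x^{i+1}_t=x^i_t-\sum_b G^i_{t,b}R^i_b$ for each $t\ge j$. The columns $b>j$ contribute nothing because $R^i_b=\boldsymbol{0}$ there, while the diagonal block contributes $-(-I)R^i_t=R^i_t$; the crux is that the remaining coupling to the not-yet-converged residuals $R^i_b$ with $b<j$ drops out, reducing the update for every $t\ge j$ to the plain fixed-point step $x^{i+1}_t=x^i_t+R^i_t=F^{(k)}_t(x^i_{t+1},\dots)$. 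Evaluating this at the already-converged tail yields $x^{i+1}_t=x^*_t$ for all $t\ge j$ (for $t>j$ the arguments sit at $x^*$, and for $t=j$ the arguments $x^i_{j+1},\dots$ also sit at $x^*$), so $R^{i+1}_t=\boldsymbol{0}$ for all $t\ge j$ by the equivalence above, which is precisely $j_{i+1}\le j-1$.

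The main obstacle is exactly the vanishing of the coupling $\sum_{b<j}G^i_{t,b}R^i_b$ for $t\ge j$: the stated condition only pins down the bottom-right block $G^i[jd:,jd:]$, whereas these terms involve the bottom-left block $G^i[jd:,\,:jd]$ multiplied by genuinely nonzero residuals. I expect to close this by invoking the block upper triangular form of the safeguarded matrix (Definition~\ref{def:block_upper_triangular}), which forces $G^i_{t,b}=\boldsymbol{0}$ whenever $t\ge j>b$; equivalently, one reads the safeguard as making the whole bottom strip $G^i[jd:,\,:]$ act as $[\,\boldsymbol{0}\ \ {-}I\,]$ on the converged coordinates. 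Pinning down this localization, and reconciling it with the ``arbitrary $G^i$'' phrasing, is the delicate point; once it is in place the surrounding induction and the triangular equivalence lemma are routine.
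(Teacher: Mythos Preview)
Your induction on the converged front is exactly the paper's argument. The gap you flag, however, is real and is \emph{not} addressed in the paper either: there the step $x_{T-t-1}^{t+1}=x_{T-t-1}^{t}+R_{T-t-1}^{t}$ is asserted directly from $G^{t}[(T{-}t{-}1)d{:},(T{-}t{-}1)d{:}]=-I$, silently dropping the cross terms $\sum_{b<T-t-1}G^{t}_{T-t-1,b}R_b^{t}$. With a genuinely arbitrary $G^i$ constrained only on its bottom-right square block, those terms need not vanish and the conclusion fails (already for $T=2$: the hypothesis only forces $G^i_{1,1}=-I$, so a nonzero $G^i_{1,0}$ acting on the nonzero $R_0^i$ keeps $x_1$ from ever landing on $F^{(k)}_1(x_2)$).

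Your proposed fix---reading the safeguard as applied on top of the block upper triangular matrix $T^i$ from Theorem~\ref{thm:G_upper}, so that the entire bottom strip $G^i[jd{:},\,{:}]$ equals $[\,\boldsymbol{0}\ \ {-}I\,]$---is precisely what is needed, and it is consistent with how the safeguard is actually used in Algorithm~\ref{alg:ParaTAA}. Once you make that assumption explicit (rather than the paper's ``arbitrary $G^i$'' phrasing), your residual--solution equivalence and the one-block-per-step induction go through verbatim and coincide with the paper's intended proof.
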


Please see Appendix \ref{app:proof} for the proof and a more detailed explanation of this theorem. In practice, we impose this condition from Theorem \ref{thm:convergence} on the TAA update rule by post-processing. Fortunately, this post-processing step applied to the matrix $T^i$ has virtually no impact on the empirical performance of TAA in our experiments. Additional details can be found in Appendix \ref{app:triangular_anderson}.

\section{Early Stopping and Initialization}
\label{sec:early_init}
This section introduces two practical techniques that can further accelerate the fixed-point iteration.

\subsection{Early Stopping}
In our experiments, we observe that high-quality images are often produced much earlier than the residuals of fixed-point iteration meet the stopping criterion. As an example, while using TAA with the fixed-point iteration for DDIM 100 steps, we find that high-quality images, nearly indistinguishable from those generated sequentially, can appear as early as step $7\sim 11$. In contrast, the stopping criterion is typically met at around step $13\sim 17$. More demonstrations on this point can be found in Section \ref{sec:exp} and Appendix \ref{app:generated_image}. Consequently, it is feasible to halt the parallel sampling process whenever a satisfactory image is obtained.

From a practical standpoint, such early termination is easy to implement, particularly for tasks like interactive image generation where users can judge the quality of the current image and decide when to terminate the iteration process.

\subsection{Initialize from Existing Sampling Trajectory}
To further accelerate parallel sampling, one can initialize the fixed-point iteration using an existing sampling trajectory with a similar input condition. The underlying principle is that if two nonlinear equations are similar, the solution to one can serve as the initial point for solving the other.  For instance, in text-to-image generation with two similar prompts P1 and P2, if we have already obtained a sampling trajectory $x_{0}, ..., x_{T-1}$ for P1, we can use this  to initialize parallel sampling for P2. This is a common scenario as users often adjust prompts to achieve the desired image, leading to a wealth of available trajectories for initialization. Additionally, akin to the method in SDEdit \cite{meng2022sdedit}, one can also fix the later steps of the trajectory (e.g., $x_{T_{\text{init}}}, ..., x_{T-1}$) when initializing sampling for P2, and only update the earlier steps (e.g., $x_{0}, ..., x_{T_{\text{init}}-1}$). This ensures the resulting image to be  close to the one from prompt $p_1$. 

As we will show in Section \ref{sec:init} and Appendix \ref{app:image_variation}, starting from an existing sampling trajectory can greatly reduce the steps needed for parallel sampling to converge. Furthermore, this  method often transforms the image to abide the new prompt in a smooth way if $T_{\text{init}}$ is properly set.

With all the techniques discussed in Sections \ref{sec:formulation}, \ref{sec:anderson}, and here, the complete version of our proposed algorithm, ParaTAA, is summarized in Algorithm \ref{alg:ParaTAA}.

\begin{algorithm}[htbp]
    \small
        \begin{algorithmic}[1]
            \REQUIRE Diffusion model $\epsilon_\theta$, $k$-th order nonlinear equations $\left[F_{0}^{(k)},...,F_{T-1}^{(k)}\right]$, histroy size $m$, tolerance $\tau$, diffusion coefficients $g(t)$, window size $w$, initialization $x_{0:T-1}^0$ and $x_T$, fixed initaizliation steps $T_{\text{init}}$, maximum iteration steps $s_{\max}$.
            
            \STATE $t_1,t_2\leftarrow \max\{0,T_{\text{init}}-w\},T_{\text{init}}-1$

            \FOR{$s=1$ to $s=s_{\max}$}
            \STATE Compute $\epsilon_\theta(x_{t+1}^{s-1},t+1)$, $t=t_1,...,t_2$ in parallel.
            \STATE Compute the residuals $r_{t_1:t_2}$ as \eqref{p:residual}.
            \STATE Update $t_2\leftarrow \max\{t_1\leq t\leq t_2|r_t>\tau g^2(t)d\}$
            \IF{$t_2$ is Null}
            \STATE Break loop
            \ENDIF
            \STATE Update $t_1\leftarrow \max\{0,t_2-w\}$
            \STATE Compute and store $R_{t_1:t_2}^{s-1}$, $\Xc^{s-1}_{t_1:t_2}$, $\Fc^{s-1}_{t_1:t_2}$ as in Sec. \ref{sec:anderson}.
            \STATE Compute $T^{s-1}$ as in Theorem \ref{thm:G_upper} and \ref{thm:convergence}, do $$
            x^{s}_{t_1:t_2} = x^{s-1}_{t_1:t_2} - T^{s-1} R^{s-1}_{t_1:t_2}$$
            \ENDFOR
            \STATE Return $x_{0:T-1}^s$.

    
    
    
        \end{algorithmic}
        \caption{ParaTAA: Parallel Sampling of Diffusion Models with Triangular Anderson Acceleration}
        \label{alg:ParaTAA}
        \end{algorithm}

\section{Experiments}
\label{sec:exp}

\subsection{Accelerating Image Diffusion Sampling}
\label{sec:main_exp}
In this section, we present the effectiveness of our approach in accelerating the sampling process for two prevalent diffusion models: DiT \cite{peebles2023scalable}, a class-conditioned diffusion model trained on the Imagenet dataset at a resolution of 256x256, and text-conditioned Stable Diffusion v1.5 (SD) \cite{rombach2022high} with a resolution of 512x512.

\textbf{Scenarios.} We consider accelerating four typical sequential sampling algorithms: Euler-type ODE sampling algorithm DDIM \cite{song2020denoising} with 25, 50, and 100 steps, respectively, and SDE sampling algorithm DDPM \cite{ho2020denoising}\footnote{Following \cite{song2020denoising}, we treat DDIM with $\eta=1$ as a DDPM sampler.} with 100 steps.

\textbf{Algorithms.} We compare our proposed algorithm ParaTAA with two baselines: (1) The fixed-point (FP) iteration \eqref{p:fixedpoint} with the order of equations $k$ equal to the window-size $w$, equivalent to the method proposed in \cite{shih2023parallel}, and (2) The fixed-point iteration \eqref{p:fixedpoint} with the optimal order of equations $k$ determined by grid search, referred as FP+. ParaTAA has two hyperparameters: the history size $m$ and the order $k$, both of which are chosen via grid search. For hyperparameter analysis in all four tested scenarios, please refer to Appendix \ref{app:hyperparameter}. For all algorithms, we use the same stopping threshold $\varepsilon_t=\tau^2g^2(t)d$ with $\tau=10^{-3}$, and initialize all variables with standard Gaussian Distribution.

\begin{table*}
    \small
        \centering 
    \begin{tabular}{c|c|c|c|c|c|c|c|c|c|c|c|c|c|c|}
        \toprule
        \multirow{2}{*}{Method}& \multicolumn{4}{c|}{DiT  DDIM-25}& \multicolumn{4}{c|}{DiT DDIM-50} & \multicolumn{3}{c|}{SD  DDIM-25} & \multicolumn{3}{c}{SD DDIM-50} \\
        
        & Steps
        & Time & FID $\downarrow$ &IS$\uparrow$& Steps& Time & FID $\downarrow$&IS$\uparrow$& Steps & Time & CS$\uparrow$ & Steps& Time & CS$\uparrow$ \\
    
        Sequential& 25  & 0.41s & 20.5&442.6 & 50 &  0.84s & 20.3 & 443.4 & 25  & 0.73s&23.9&50&1.44s  & 24.0\\
    
        FP& 17.8 & 0.42s & 19.8& 441.2 & 21.6 &  0.69s & 20.2 & 442.0 & 14.1  & 0.98s &23.8& 15.7& 1.36s & 24.0  \\
     
        FP+ & 13 & 0.32s &18.7&436.5& 17 &  0.58s  & 18.7 & 436.8& \textbf{7} & \textbf{0.62s} &23.8&\textbf{7}& \textbf{0.93s} &23.9 \\
     
        ParaTAA& \textbf{9} & \textbf{0.25s} & 18.8&441.0& \textbf{9} &  \textbf{0.34s} & 19.1 & 441.9 & \textbf{7} & 0.63s &23.8&\textbf{7}& \textbf{0.93s} &23.9 \\
        \midrule
       & \multicolumn{4}{c|}{DiT DDIM-100}& \multicolumn{4}{c|}{DiT DDPM-100} & \multicolumn{3}{c|}{SD DDIM-100} & \multicolumn{3}{c}{SD DDPM-100} \\
        
    
        Sequential& 100&1.65s& 20.6& 446.9 &100  & 1.69s & 22.7 & 464.8 & 100& 2.95s & 24.2 &100 & 2.98s & 24.8  \\
    
        FP& 23.0& 0.98s & 19.7 & 444.2 & 42.3 & 1.90s & 21.4 & 459.6 & 15.8& 2.16s & 24.2&28.9& 3.23s & 24.8  \\
     
        FP+ & 19 & 0.81s & 19.8 & 443.7 & 31 & 1.29s & 17.0 & 432.3& \textbf{7}& 1.56s &24.2 &21&2.45s  &24.5 \\
     
        ParaTAA& \textbf{11} & \textbf{0.56s} & 20.0 &448.3& \textbf{21}& \textbf{0.95s} & 22.1 & 457.8 & \textbf{7}& \textbf{1.53s} & 24.2&\textbf{15} &\textbf{1.97s} & 24.8  \\
        \bottomrule
    \end{tabular}
    \caption{Performance comparison of various parallel sampling methods across different scenarios. It should be noted that for FP+ and ParaTAA, the early-stopping step is selected based on insights from Figure \ref{fig:main-result}, whereas for FP, early-stopping is not employed, and the step value indicates the average number of inference steps required to satisfy the stopping criterion. }
    \label{tab:time} 
\end{table*}

\textbf{Setting.} We run these experiments using 8 A800 GPUs, each with 80GB of memory. We set the window size $w$ to match the number of sampling steps for all scenarios, except in the case of the DDPM 100 steps for SD, where we select $w=40$ to aim for an acceptable wall-clock time speedup. In all scenarios, we employ classifier-free guidance \cite{ho2022classifier} with a guidance scale of 5.

\textbf{Evaluation.} For DiT models, we assess the quality of sampled images using the FID score \cite{heusel2017gans} and the Inception Score (IS) \cite{salimans2016improved} with 5000 generated samples. For SD models, we generate random text prompts combining a color and an animal, such as "green duck," and evaluate the quality of the sampled images by computing the CLIP Score (CS) \cite{radford2021learning} with 1000 samples.

\begin{figure}[htpb]
	\centering
    \begin{subfigure}[b]{0.48\textwidth}
        \centering
        \includegraphics[width=\textwidth]{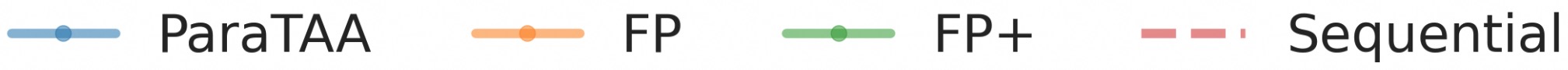}
    \end{subfigure}
	\begin{subfigure}[b]{0.155\textwidth}
	\centering
	\includegraphics[width=\textwidth]{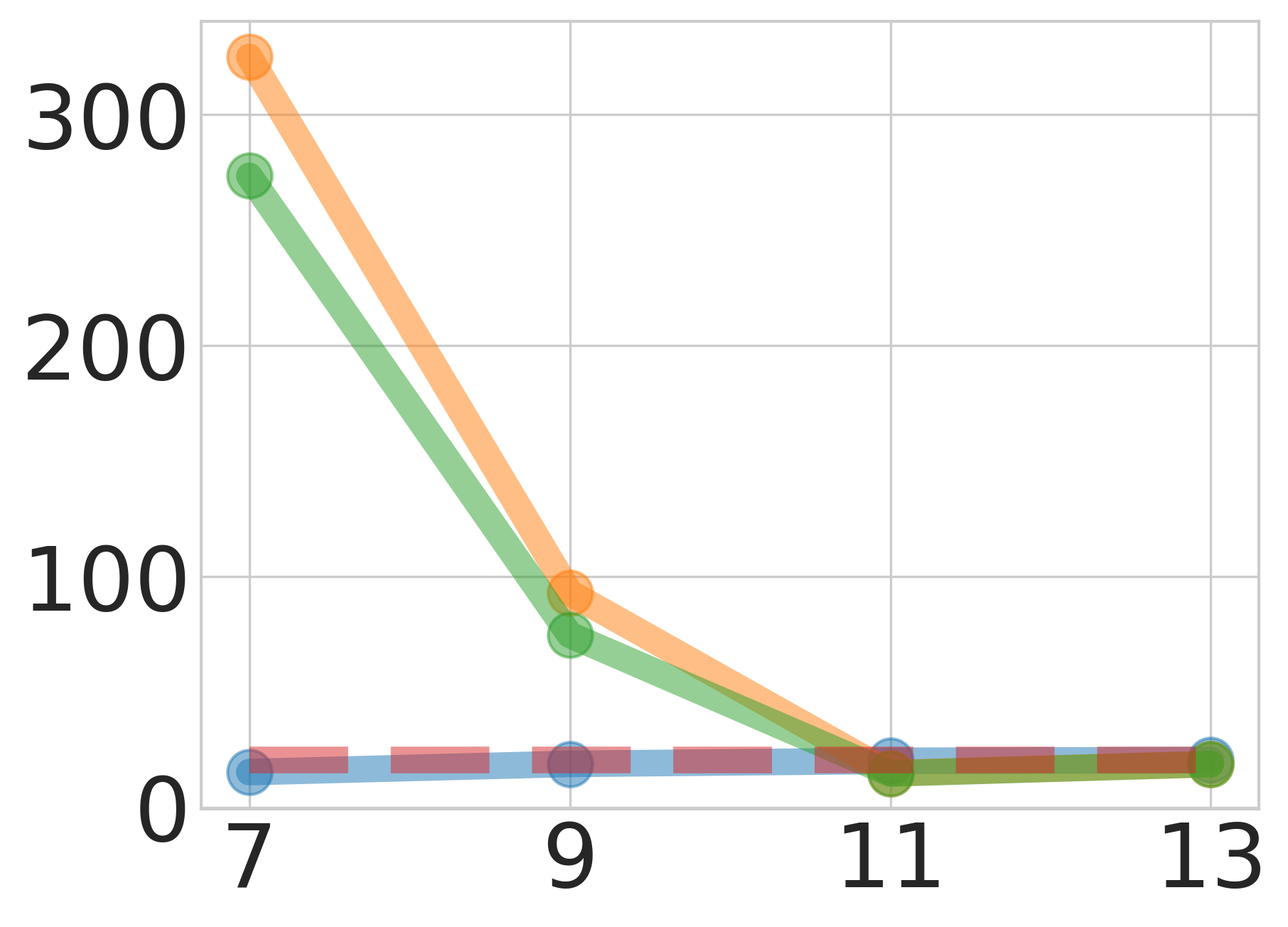}
\end{subfigure}
	\begin{subfigure}[b]{0.155\textwidth}
	\centering
	\includegraphics[width=\textwidth]{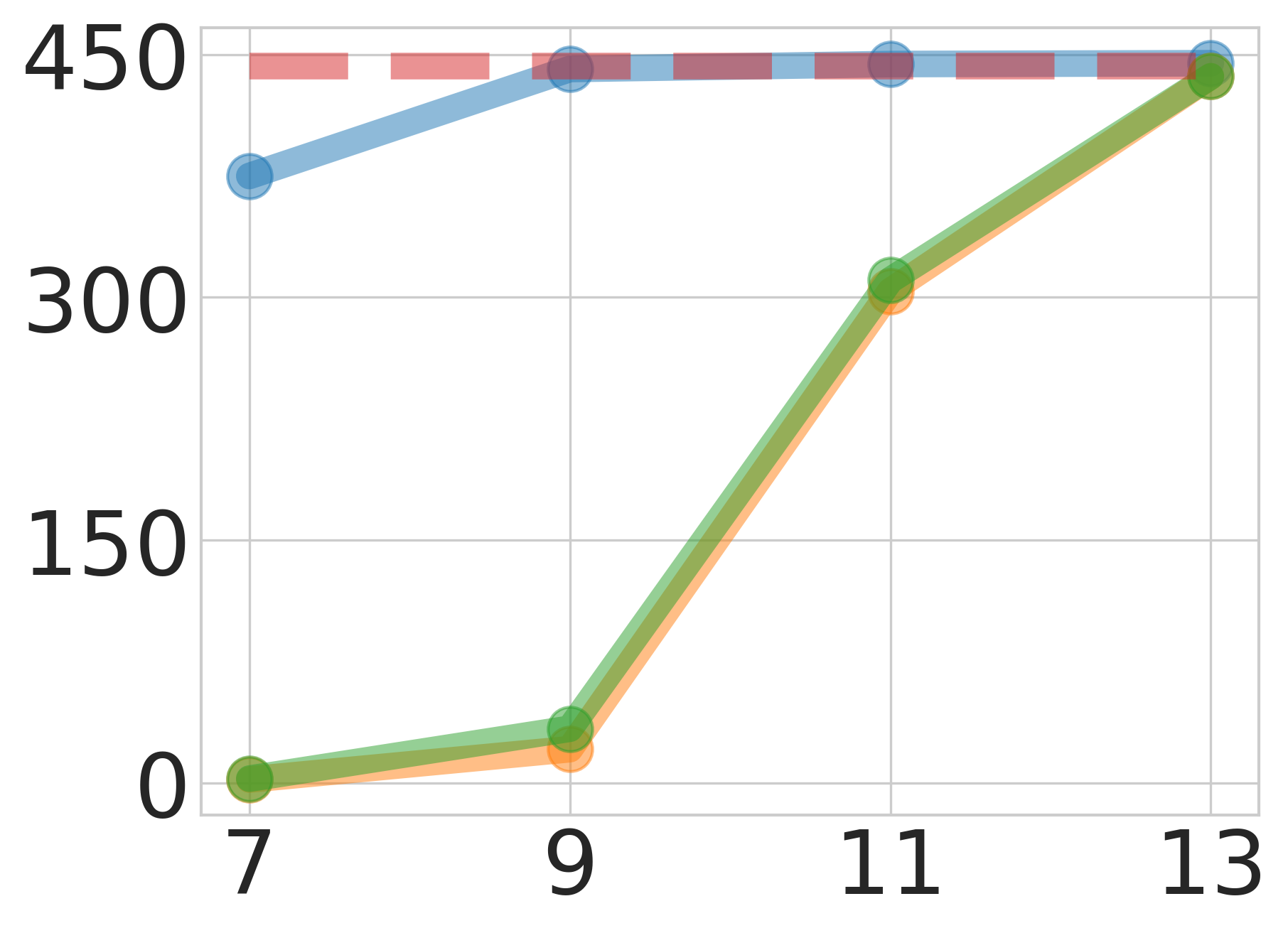}
\end{subfigure}
\begin{subfigure}[b]{0.155\textwidth}
	\centering
	\includegraphics[width=\textwidth]{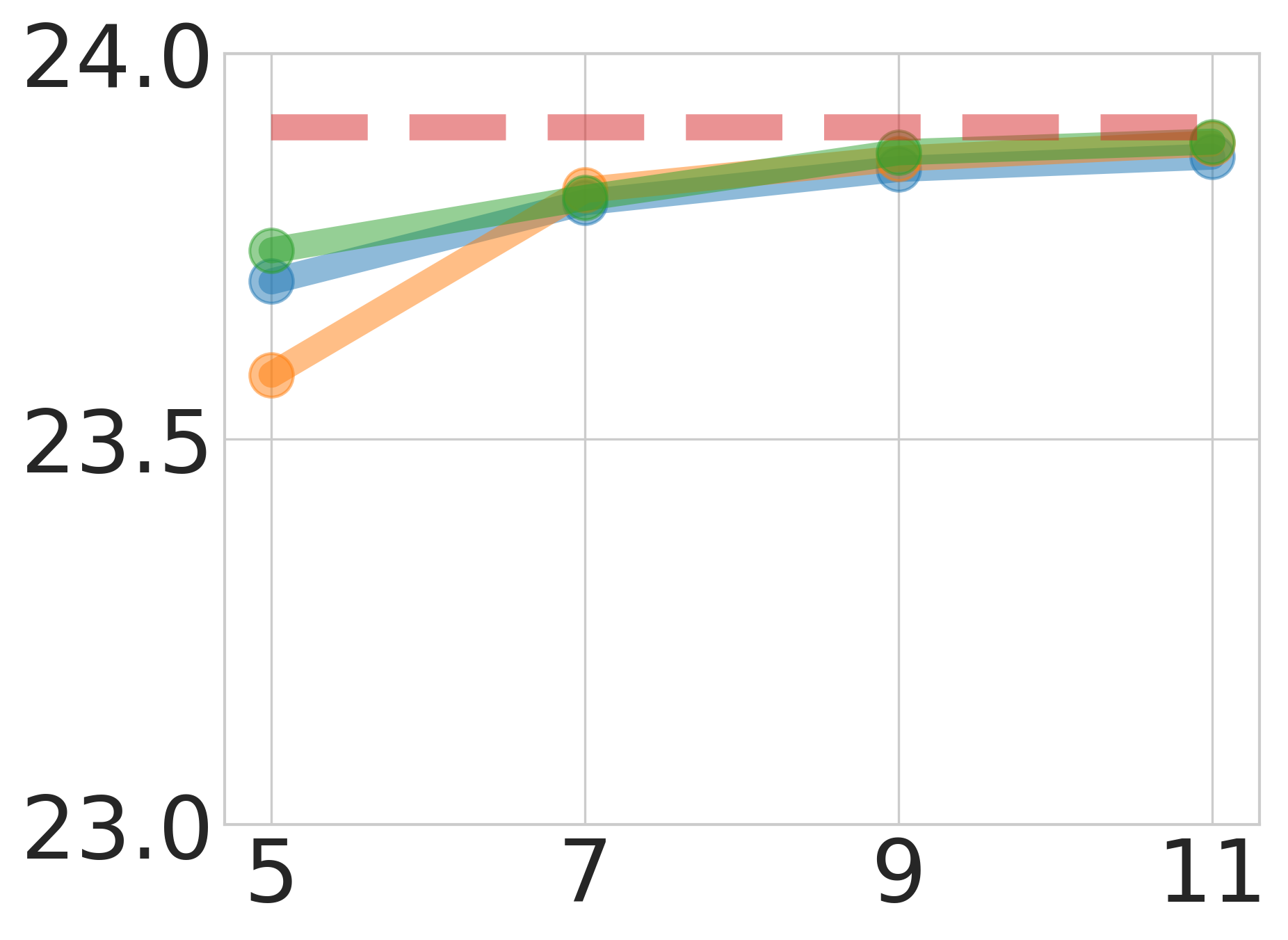}
\end{subfigure}
	\begin{subfigure}[b]{0.155\textwidth}
	\centering
	\includegraphics[width=\textwidth]{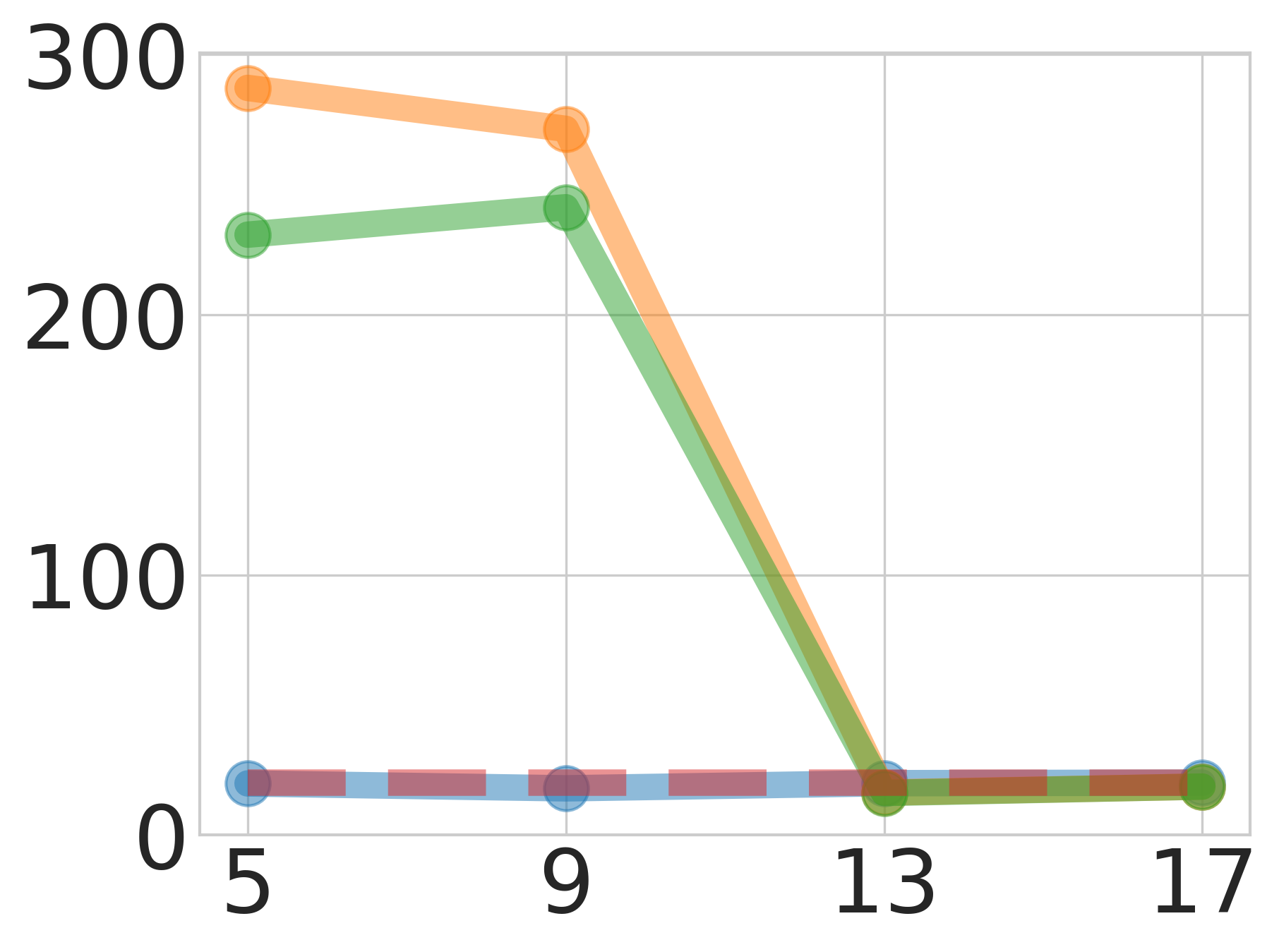}
\end{subfigure}
\begin{subfigure}[b]{0.155\textwidth}
	\centering
	\includegraphics[width=\textwidth]{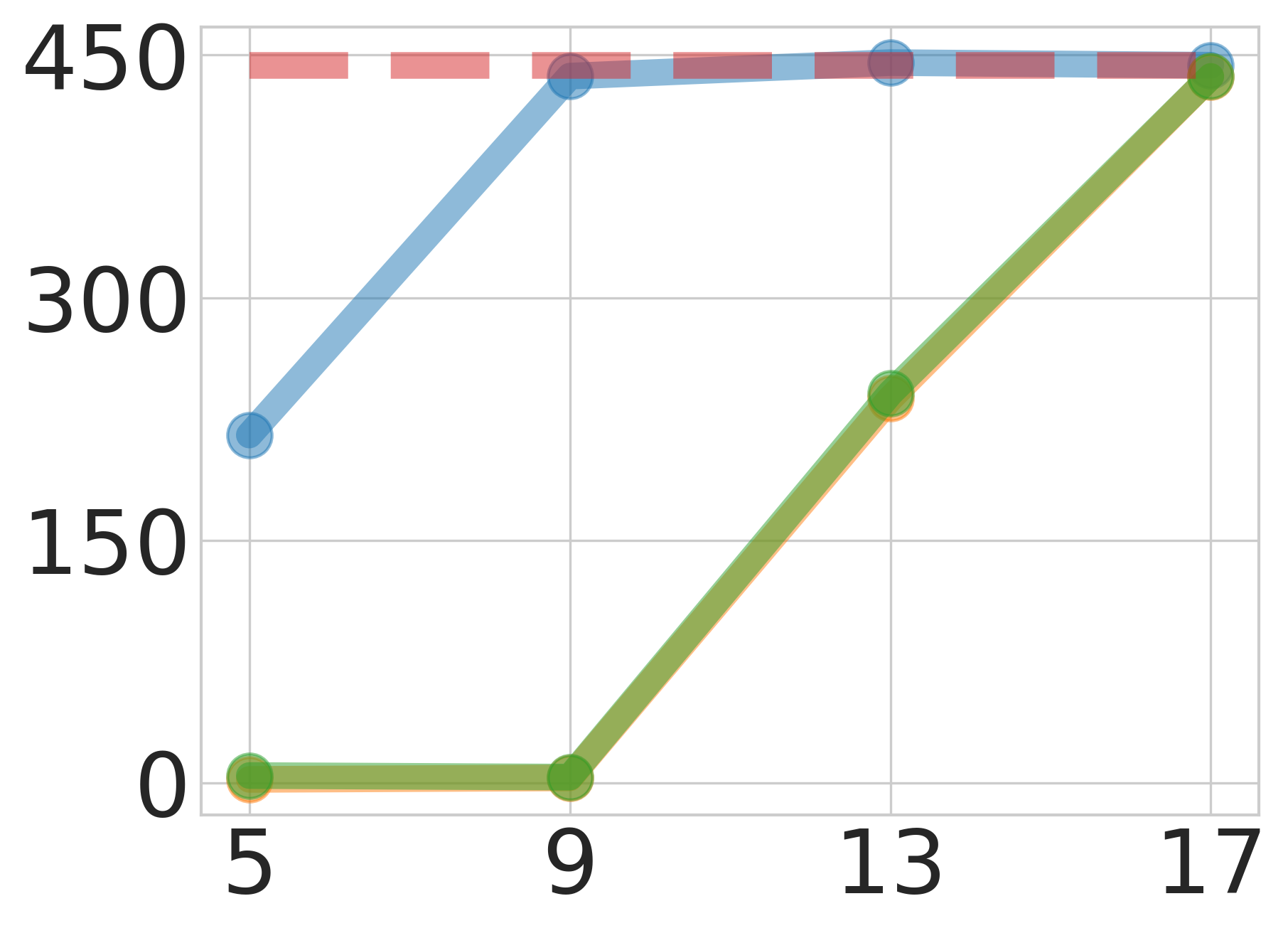}
\end{subfigure}
	\begin{subfigure}[b]{0.155\textwidth}
	\centering
	\includegraphics[width=\textwidth]{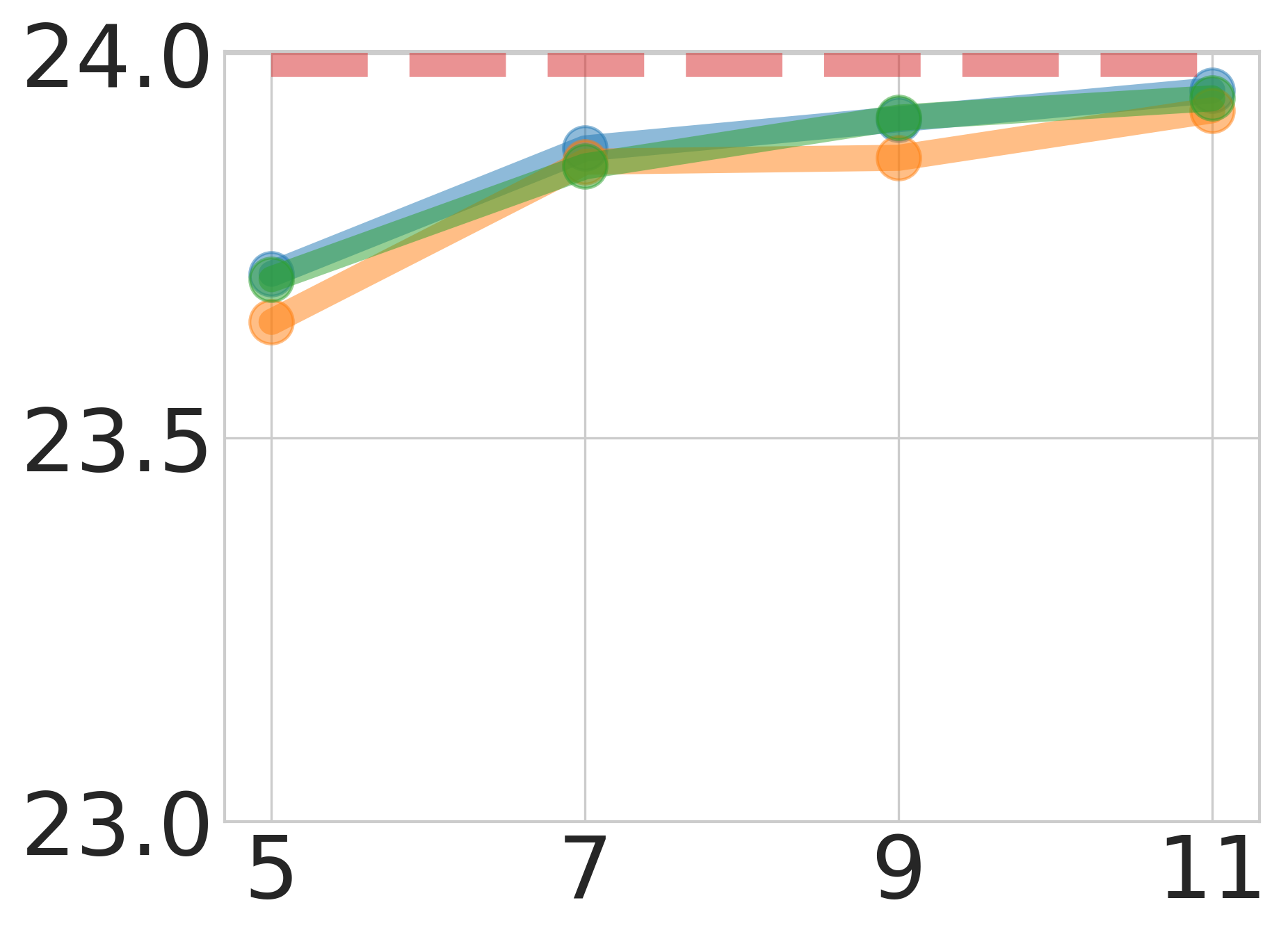}
\end{subfigure}
\begin{subfigure}[b]{0.155\textwidth}
	\centering
	\includegraphics[width=\textwidth]{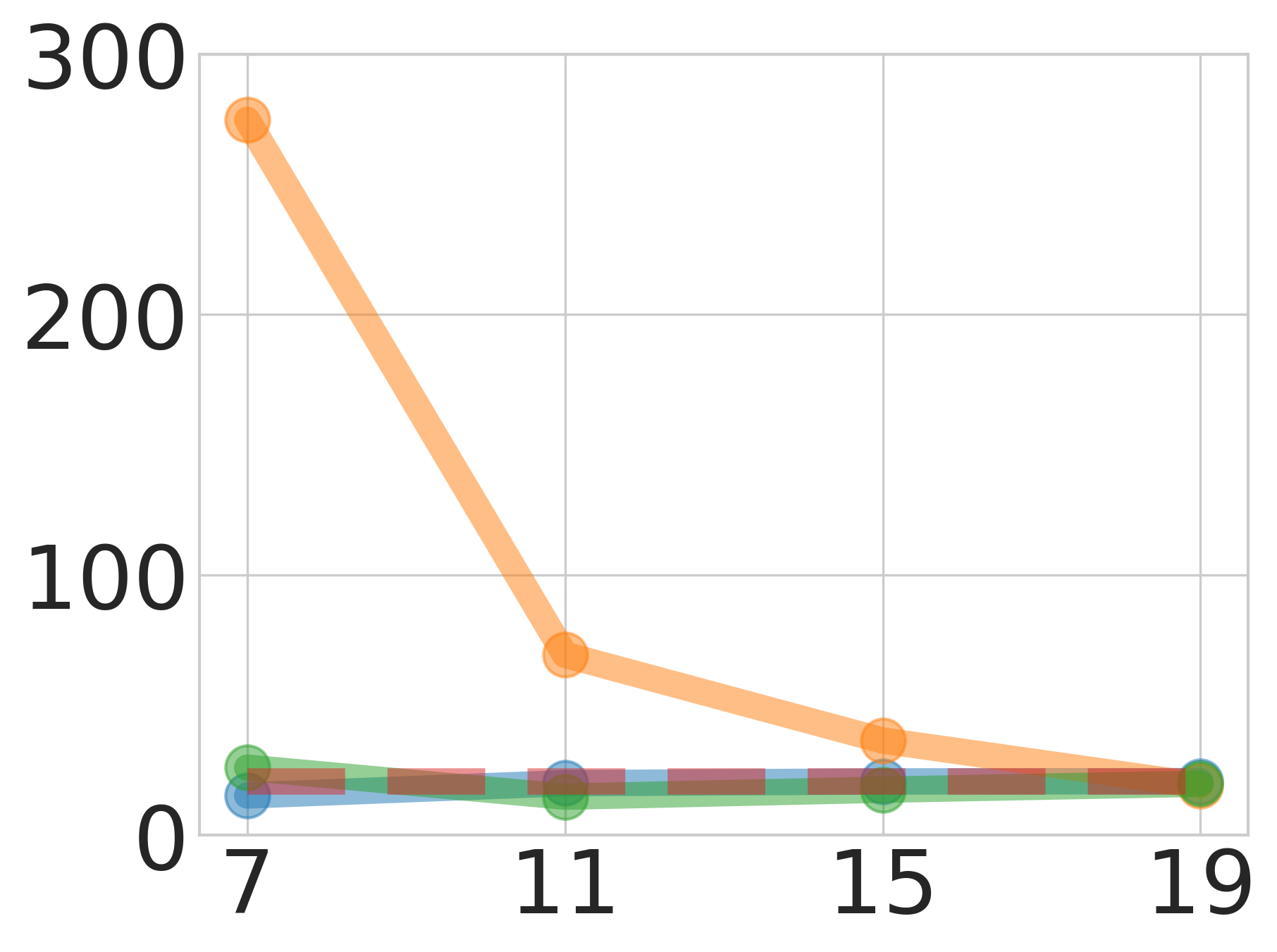}
\end{subfigure}
	\begin{subfigure}[b]{0.155\textwidth}
	\centering
	\includegraphics[width=\textwidth]{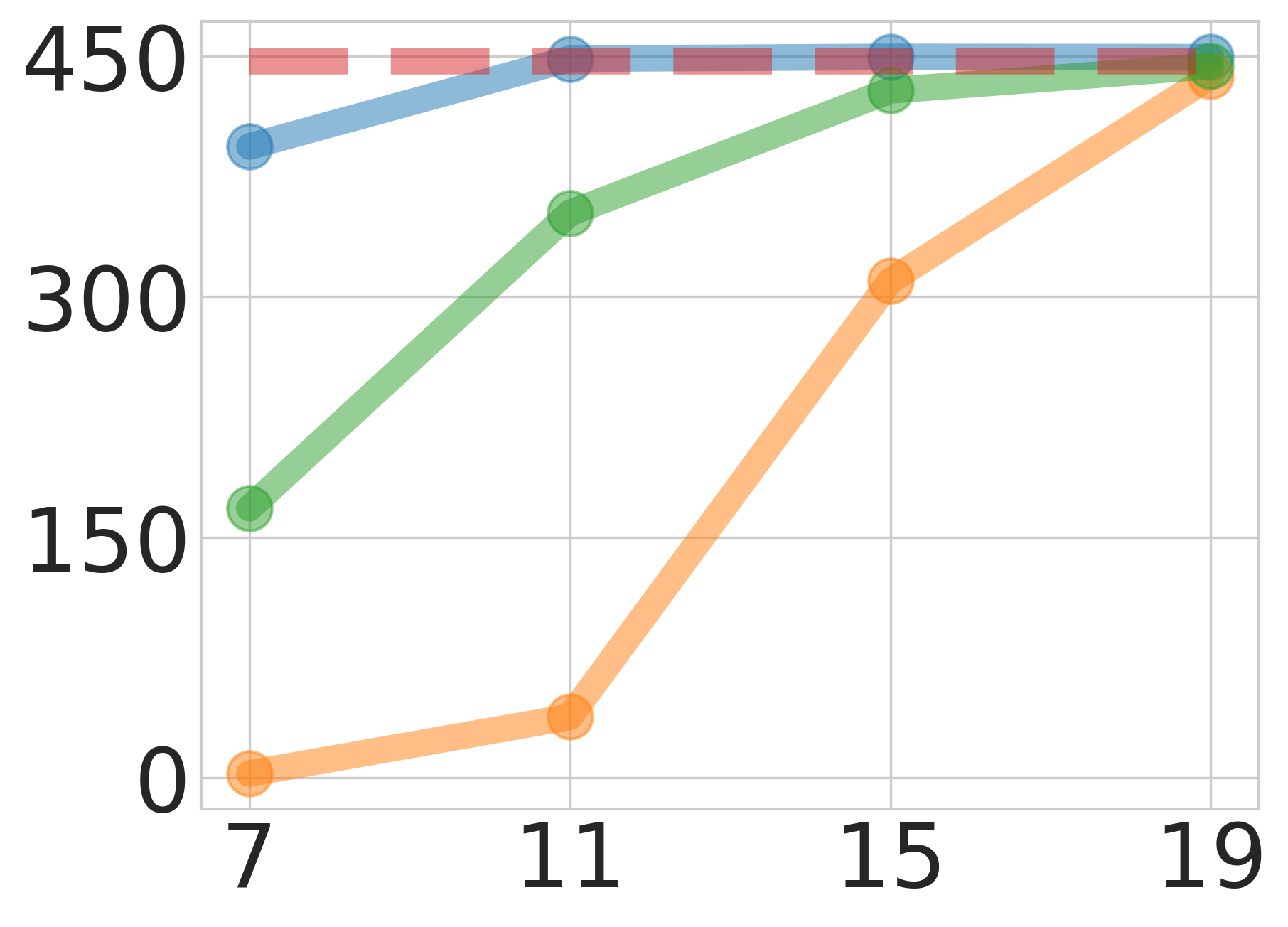}
\end{subfigure}
\begin{subfigure}[b]{0.155\textwidth}
	\centering
	\includegraphics[width=\textwidth]{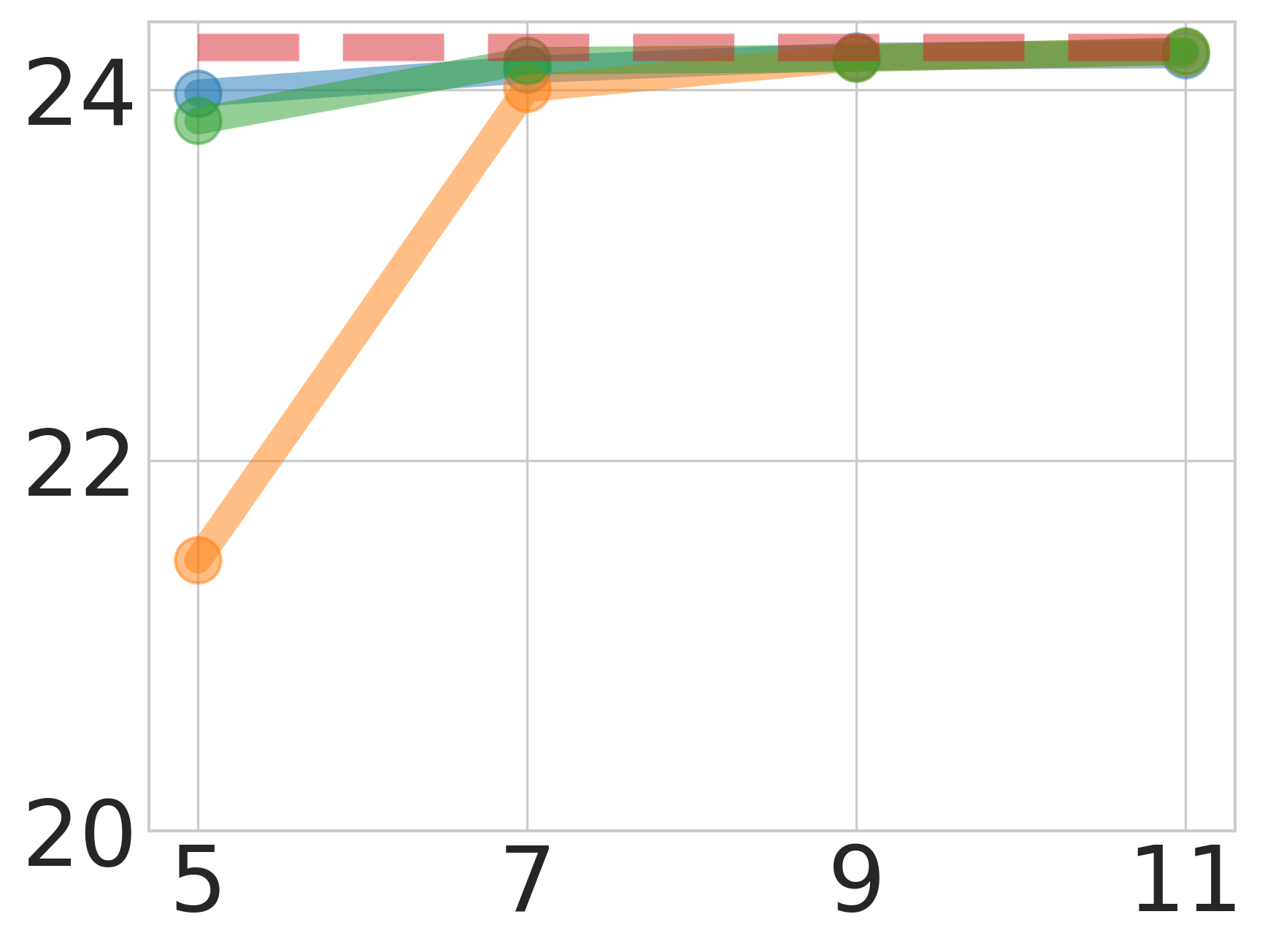}
\end{subfigure}
	\begin{subfigure}[b]{0.155\textwidth}
	\centering
	\includegraphics[width=\textwidth]{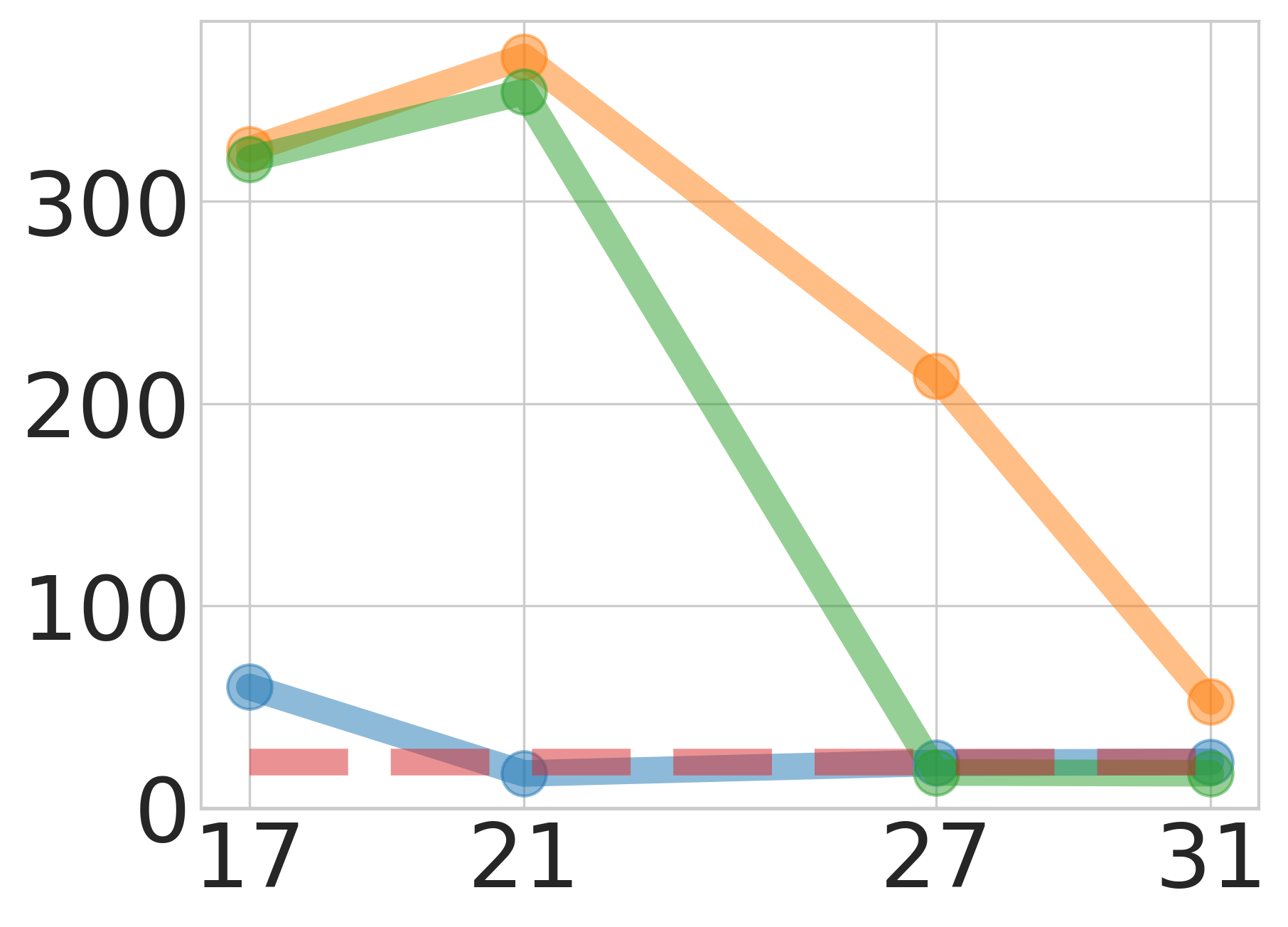}
\end{subfigure}
\begin{subfigure}[b]{0.155\textwidth}
	\centering
	\includegraphics[width=\textwidth]{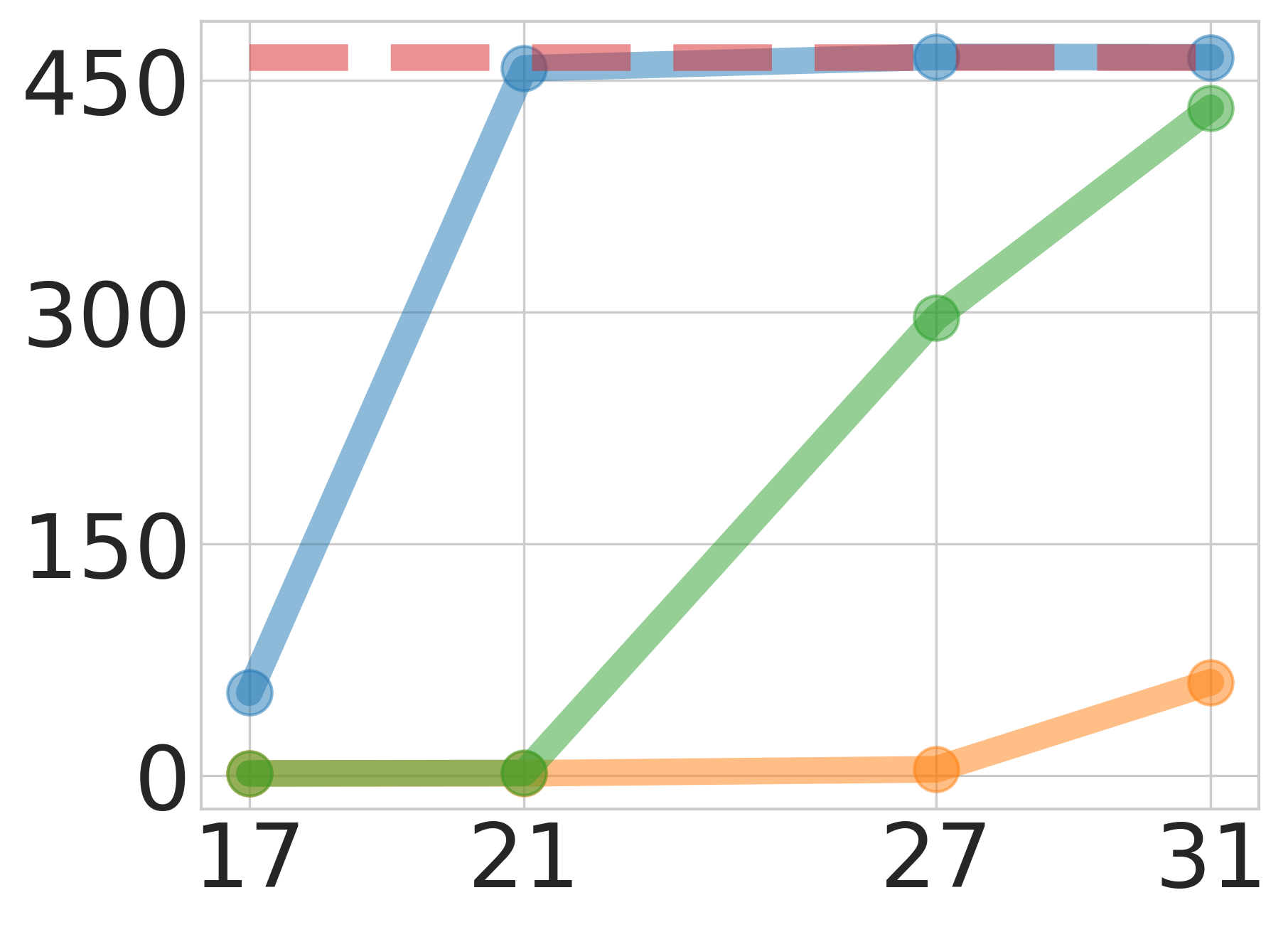}
\end{subfigure}
	\begin{subfigure}[b]{0.155\textwidth}
	\centering
	\includegraphics[width=\textwidth]{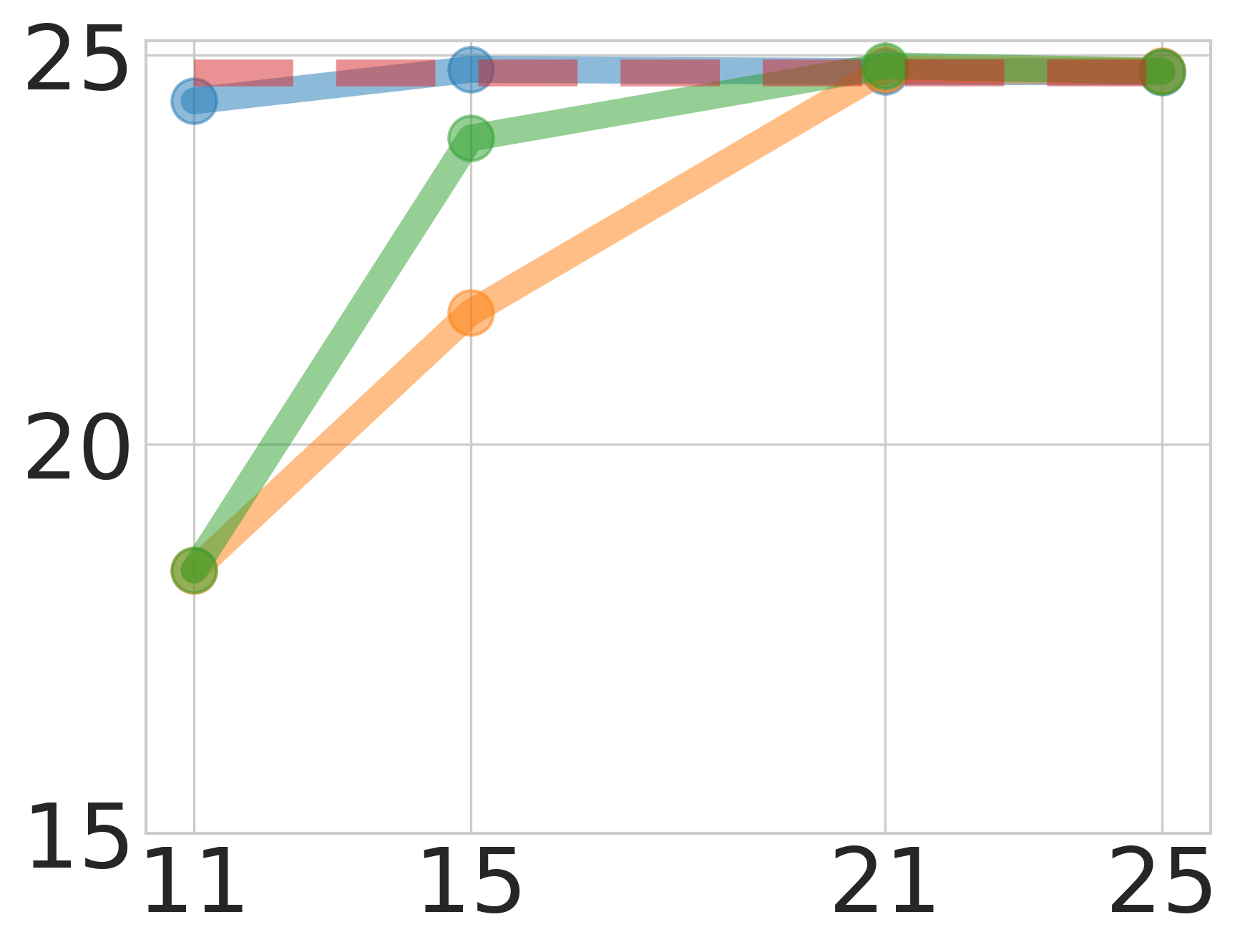}
\end{subfigure}
\caption{Comparison of parallel sampling methods and sequential sampling across various scenarios. The x-axis for all plots represents the maximum number of steps, $s_{\max}$. The first two columns from the left show the FID and IS scores for the DiT model, respectively, while the third column depicts the CS for the SD model. The rows, from top to bottom, correspond to the scenarios with DDIM 25 steps, DDIM 50 steps, DDIM 100 steps, and DDPM 100 steps, respectively. For visual examples of generated images related to these results, please refer to Appendix \ref{app:generated_image}.}
	\label{fig:main-result}
\end{figure}

\begin{figure}[htpb] 
	\centering
	\begin{subfigure}[b]{0.19\textwidth}
	\centering
	\includegraphics[width=\textwidth]{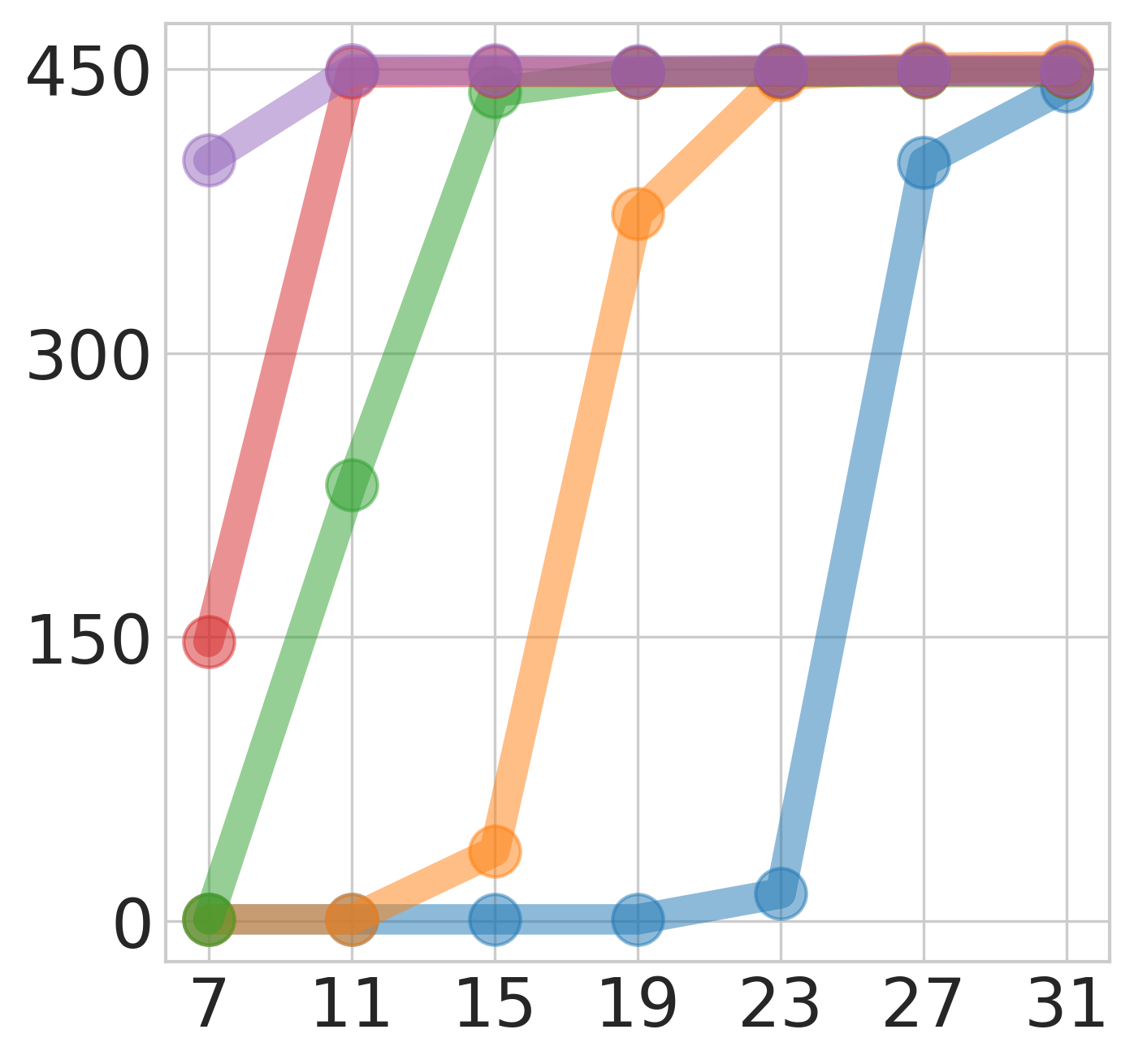}
 	\caption{DiT - IS}
	\label{}
\end{subfigure}
	\begin{subfigure}[b]{0.185\textwidth}
	\centering
	\includegraphics[width=\textwidth]{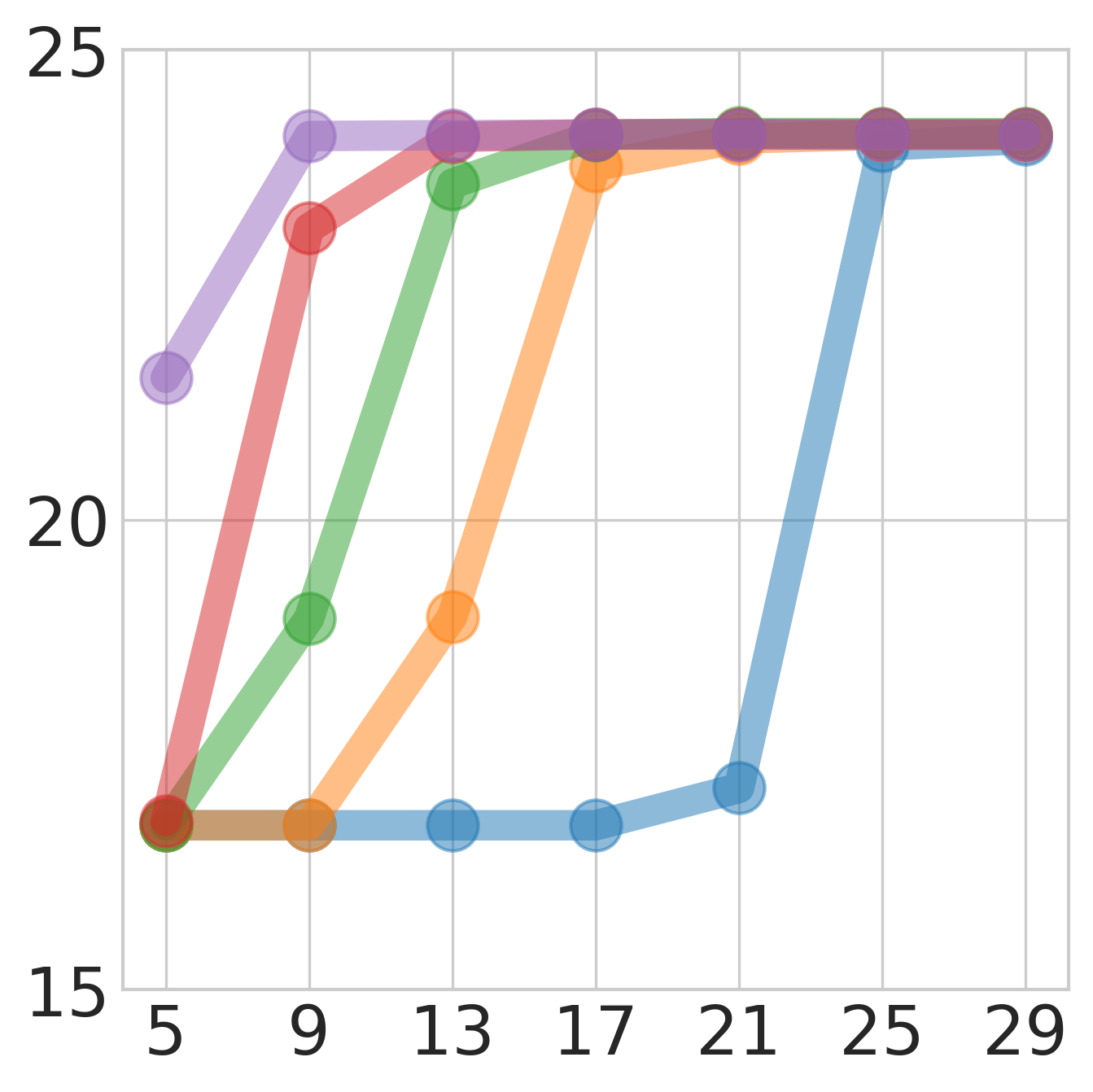}
	\caption{SD - CS}
	\label{}
\end{subfigure}
\begin{subfigure}[b]{0.09\textwidth}
	\centering
	\includegraphics[width=\textwidth]{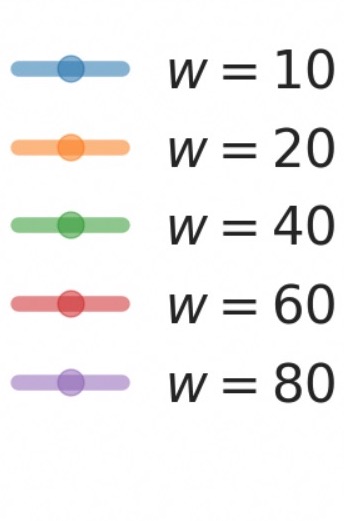}
    
\end{subfigure}
\caption{Convergence of ParaTAA under different window sizes. The x-axis and y-axis are the same as Figure \ref{fig:main-result}}
	\label{fig:window_size}
\end{figure}

\textbf{Results. }Our primary findings are detailed in Figure \ref{fig:main-result} and Table \ref{tab:time}, offering several insightful observations. Firstly, Figure \ref{fig:main-result} corroborates that early-stopping is a valid approach\footnote{For the information on the number of inference steps to reach the stopping criterion, we refer readers to Appendix \ref{app:hyperparameter}.}. Across all algorithms, the quality metrics of the generated images match those of sequentially sampled images in significantly fewer steps. By comparing FP and FP+, we can clearly see the importance of choosing a properly order $k$ for nonlinear equations \eqref{p:orderk}. Furthermore, our proposed ParaTAA outperforms both fixed-point algorithms substantially, underscoring the effectiveness of our Triangular Anderson Acceleration technique. In Table \ref{tab:time}, we encapsulate the key outcomes from Figure \ref{fig:main-result}, including data on wall-clock time and inference steps. Notably, "Steps" in Table \ref{tab:time} refers to the number of parallelizable inference steps for the neural network $\epsilon_\theta$. It is evident that all parallel sampling algorithms greatly reduce inference steps, particularly for larger $T$ scenarios, with ParaTAA consistently spending the fewest steps in every case and cutting down the steps required by sequential sampling by \textbf{4$\sim$14x}! Additionally, it is apparent that, generally, DDPM needs more steps to converge compared to DDIM. In terms of wall-clock time speedup, ParaTAA can achieve a \textbf{1.5$\sim$2.9x} improvement.

\begin{remark}
    The wall-clock time reported in Table \ref{tab:time}  can be further enhanced with optimized implementation, computing devices, and inter-GPU communication environments. Theoretically, the achievable speedup is determined by the ratio of inference steps required by sequential versus parallel sampling, ranging from 4 to 14 times as discussed earlier.
\end{remark}
 
\begin{remark}
    Our own implementation of the fixed-point iteration achieves results comparable to those in \cite{shih2023parallel}. However, we opted not to adjust the stopping criterion, as we observed it impacts the uniqueness of the generated image. In our SD experiments, we used 16-bit precision instead of the 32-bit used in \cite{shih2023parallel}, which made our measured wall-clock time significantly faster.
\end{remark}

\begin{remark}
    A key advantage of parallel sampling over other acceleration methods is its ability to produce images that are (almost) identical to those from sequential sampling. Theorem \ref{thm:unique} provides a guarantee for this assertion. For real examples on this point, please refer to Appendix \ref{app:generated_image}.
\end{remark}

\begin{figure}[htpb] 
	\centering
	\begin{subfigure}[b]{0.32\columnwidth}
	\centering
	\includegraphics[width=\columnwidth]{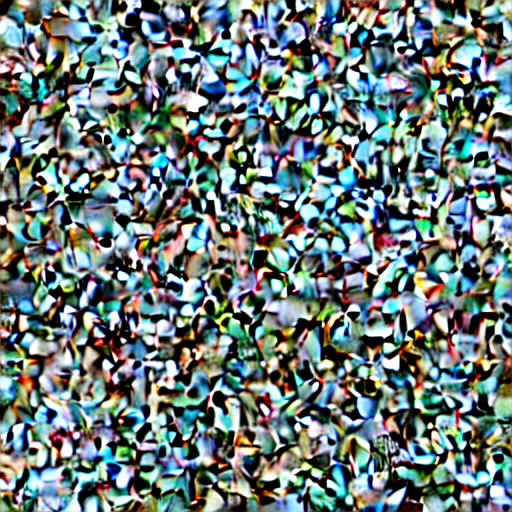}
\end{subfigure}
\begin{subfigure}[b]{0.32\columnwidth}
	\centering
	\includegraphics[width=\columnwidth]{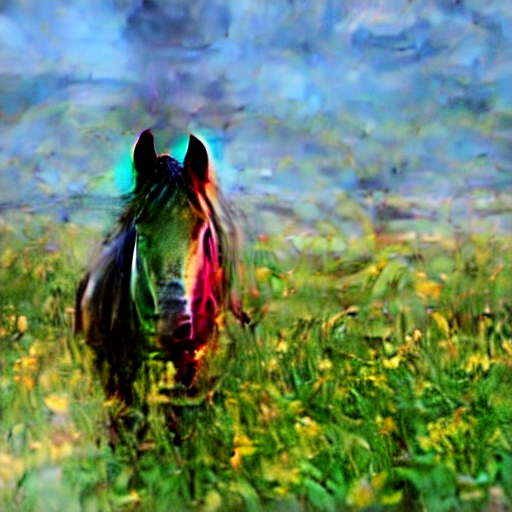}
\end{subfigure}
\begin{subfigure}[b]{0.32\columnwidth}
	\centering
	\includegraphics[width=\columnwidth]{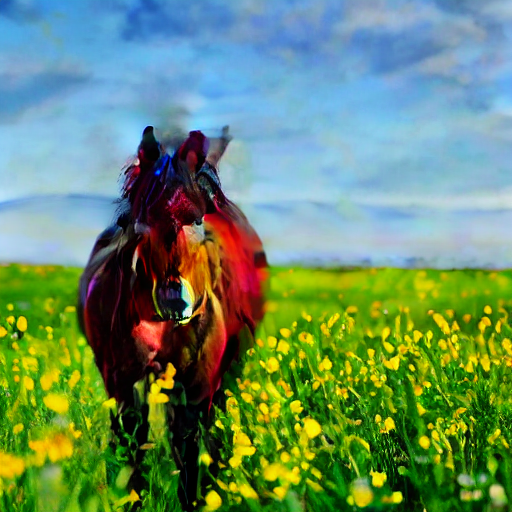}
\end{subfigure}

\begin{subfigure}[b]{0.32\columnwidth}
	\centering
	\includegraphics[width=\columnwidth]{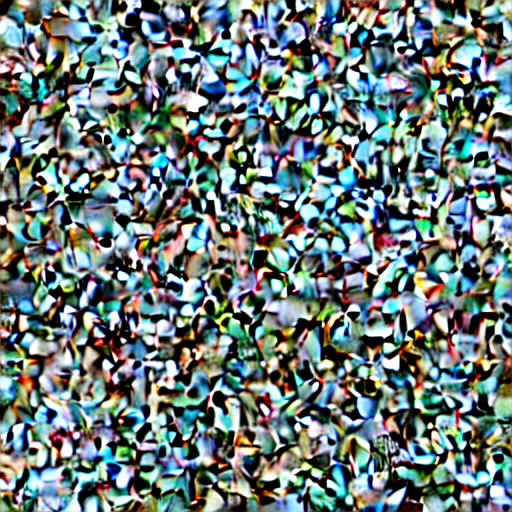}
\end{subfigure}
\begin{subfigure}[b]{0.32\columnwidth}
	\centering
	\includegraphics[width=\columnwidth]{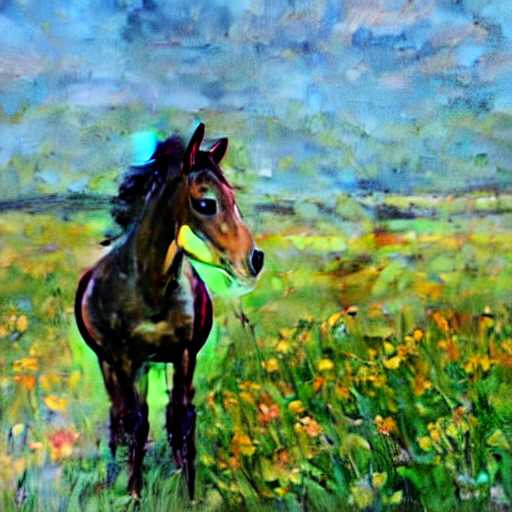}
\end{subfigure}
\begin{subfigure}[b]{0.32\columnwidth}
	\centering
	\includegraphics[width=\columnwidth]{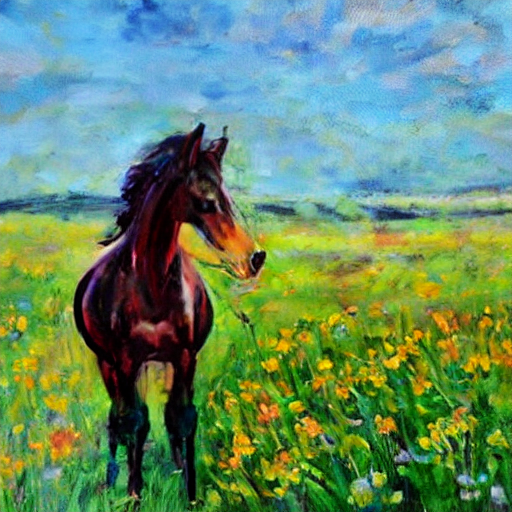}
\end{subfigure}

\begin{subfigure}[b]{0.32\columnwidth}
	\centering
	\includegraphics[width=\columnwidth]{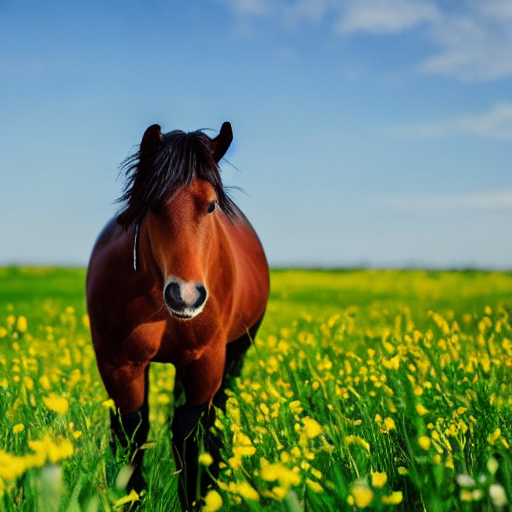}
\end{subfigure}
\begin{subfigure}[b]{0.32\columnwidth}
	\centering
	\includegraphics[width=\columnwidth]{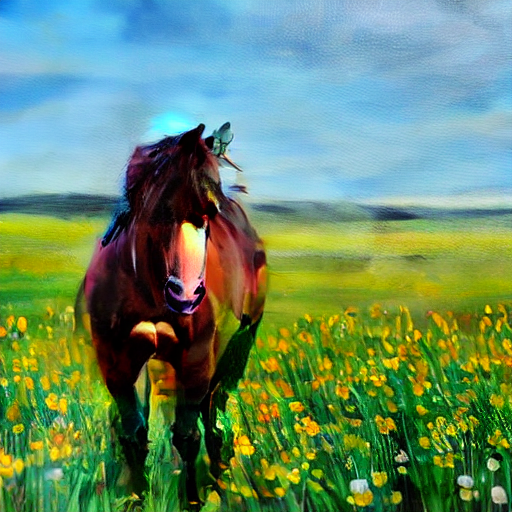}
\end{subfigure}
\begin{subfigure}[b]{0.32\columnwidth}
	\centering
	\includegraphics[width=\columnwidth]{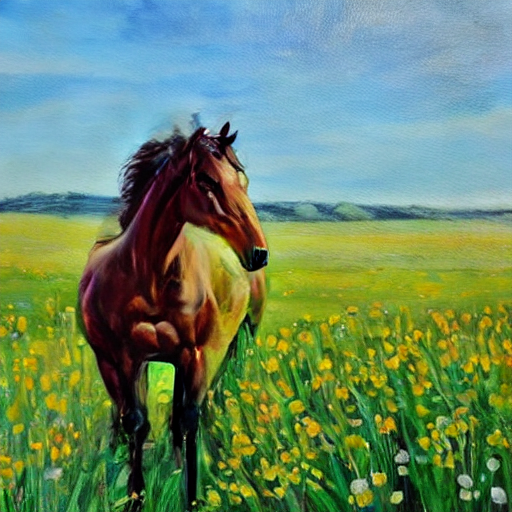}
\end{subfigure}

\begin{subfigure}[b]{0.32\columnwidth}
	\centering
	\includegraphics[width=\columnwidth]{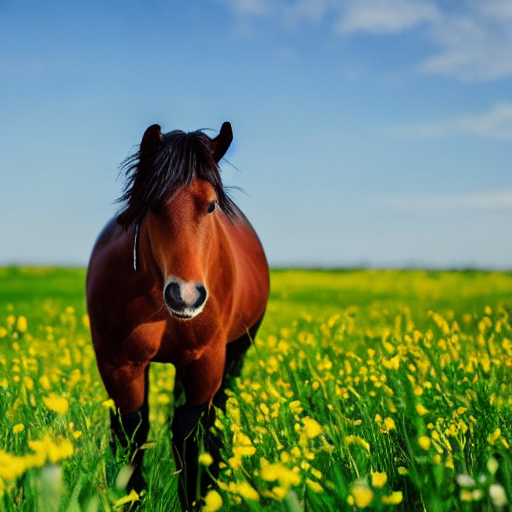}
    \caption{Initialization}
\end{subfigure}
\begin{subfigure}[b]{0.32\columnwidth}
	\centering
	\includegraphics[width=\columnwidth]{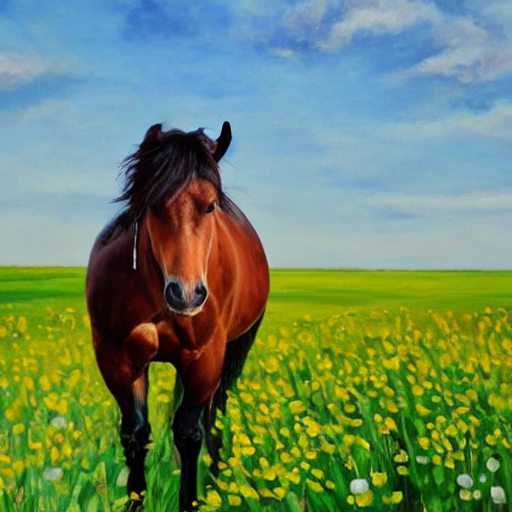}
    \caption{After 3 steps}
\end{subfigure}
\begin{subfigure}[b]{0.32\columnwidth}
	\centering
	\includegraphics[width=\columnwidth]{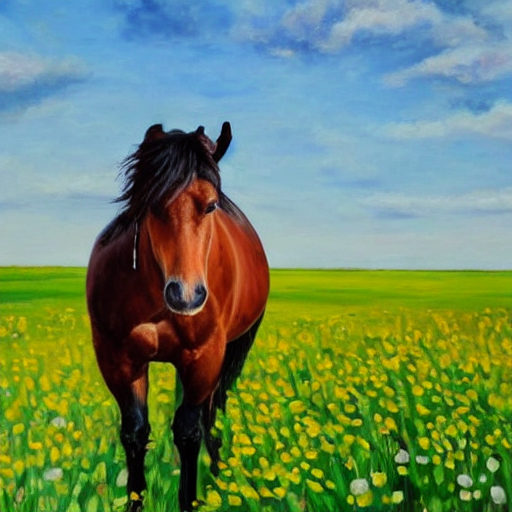}
    \caption{After 5 steps}
\end{subfigure}

\caption{Iterations of ParaTAA with different initializations.  The rows from top to bottom shows: 1. Sampling with P1 with random initialization; 2. Sampling with P2 with random initialization. 3. Sampling with P2 with trajectory of P1 as initialization and $T_{\text{init}}=50$. 4. Same as 3 except that  $T_{\text{init}}=35$. For optimal viewing, please zoom in on the figure.}
	\label{fig:early_stop}
\end{figure}

\subsection{Effect of Window Size}
\label{sec:window_size}

In this section, we examine how the window size $w$ affects the trade-off between convergence and computation on the DDIM 100 steps scenario for both DiT and SD models, testing ParaTAA with varying window sizes.

\textbf{Results.}
As depicted in Figure \ref{fig:window_size}, the relationship between the increase in window size and the reduction in inference steps is not proportional. For instance, with SD, at $w=10$, ParaTAA needs 25 steps to achieve the desired CS level, which is $4$x fewer than that of sequential sampling. However, when we double the computation by setting  $w=20$, the inference steps reduce only marginally to 21. This implies that users should select a window size that balances convergence speed and computational effort to optimize wall-clock time speedup.

\subsection{Initialization from Existing Trajectory}
\label{sec:init}

This section explores the impact of initializing parallel sampling using a pre-existing trajectory through a case study. We conduct an experiment utilizing the SD model with DDIM 50 steps and two similar prompts, P1: "A 4k detailed photo of a horse in a field of flowers", P2: "An oil painting of a horse in a field of flowers". Our objective is to investigate the difference in image generation for P2 when starting from a random initialization versus using the trajectory of P1. Additionally, we assess how the initial step count, $T_{\text{init}}$, influences the convergence of sampling for P2 with this initialization.

\textbf{Results.} As shown in Figure \ref{fig:early_stop}, using random initialization for prompts P1 and P2, ParaTAA does not yield high-quality images within the first 5 steps. In contrast, initializing the sampling of P2 with the trajectory from P1 results in a considerably better image by the 5th step. By setting $T_{\text{init}}$ to 35, ParaTAA manages to produce a good image by the 3rd step, with a smooth transition from the initial image. Hence, we conclude that starting parallel sampling with an existing trajectory can significantly decrease the number of inference steps needed for the sampling process to converge. For an extended and  quantitative evaluation of these findings, please refer to Appendix \ref{app:quant_init}.



\section{Conclusion}

In this study, we frame parallel sampling for diffusion models as solving a system of triangular nonlinear equations. We introduce a novel parallel sampling algorithm, ParaTAA, which can substantially decrease the inference steps required by sequential sampling while maintaining image quality. Moreover, the triangular Anderson acceleration technique developed in this work could be a subject of independent interest, and we expect that the optimization research community will be interested in further exploring its theoretical aspects in the near future.

While this work primarily demonstrates the acceleration for image diffusion models, we anticipate that our proposed method could have broader applications on tasks that involve an autoregressive process, and one notable example is autoregressive video generative models in \cite{ho2022imagen,esser2023structure,gupta2023photorealistic}. 

Currently, for large models like SD, ParaTAA requires the use of multiple GPUs to achieve considerable speedup in wall-clock time. Nonetheless, as advancements in GPU technology and parallel computing infrastructures evolve, we anticipate that the cost will be significantly lower and ParaTAA will become increasingly important for accelerating the sampling of large-scale diffusion models.

\section*{Acknowledgements}
This work was supported by Alibaba Group through Alibaba Research Intern Program. The work was also supported by Shenzhen Science and Technology Program under Grant No. ZDSYS20230626091302006 and RCJC20210609104448114, and by Guangdong Provincial Key Laboratory of Big Data Computing.

\section*{Impact Statement} This work focuses on accelerating the sampling process of existing diffusion generative models. As far as we can see, there is no foreseeable negative impact on the society.



\nocite{*}
\bibliography{example_paper} 

\bibliographystyle{icml2024}

\newpage
\appendix
\onecolumn
\section{Proof}
\label{app:proof}
\begin{proof}[Proof of Theorem \ref{thm:unique}]
    Initially, it is simple to verify that the sequential sampling procedure \eqref{p:autoregressive} has a unique solution. Considering the initial conditions $x_T=y_T=\xi_T$, let us assume for the sake of contradiction that there exist two distinct solutions $x_{0:T-1}$ and $y_{0:T-1}$. 
    \begin{align*}
        x_{t-1} &= a_tx_t+b_t\epsilon_\theta(x_t,t)+c_{t-1}\xi_{t-1},\ t=1,...,T,\\
        y_{t-1} &= a_ty_t+b_t\epsilon_\theta(y_t,t)+c_{t-1}\xi_{t-1},\ t=1,...,T.
    \end{align*}
    Using an induction argument, let us assume that for some $0 < t \leq T$, we have $x_t = y_t$. Under this assumption, we can show that
    \begin{align*}
        x_{t-1} &= a_tx_t+b_t\epsilon_\theta(x_t,t)+c_{t-1}\xi_{t-1}\\
        &= a_ty_t+b_t\epsilon_\theta(y_t,t)+c_{t-1}\xi_{t-1}\\
        &= y_{t-1}.
    \end{align*}
    Hence the two solutions $x_{0:T-1}$ and $y_{0:T-1}$ are the same.

We will now demonstrate that for any $1 \leq k \leq T$, the nonlinear equations given by \eqref{p:orderk} are equivalent. This implies that all sets of nonlinear equations share the same unique solution, since the case of $k=1$ corresponds to the sequential procedure outlined in \eqref{p:autoregressive}.

For the purpose of this proof, we define two sets of nonlinear equations to be equivalent if any solution to one set is also a solution to the other, and vice versa. To simplify the exposition, we will prove the equivalence of the 1st order equations to the 2nd order equations, while noting that the proof that $k$-th order equations are equivalent to $(k+1)$-th order equations follows a similar procedure. Assume that $x_{0:T-1}$ is a solution to the 1st order equations. It follows directly that $x_{0:T-1}$ satisfies the 2nd order equations as well, which can be seen by \eqref{p:order2}. Conversely, if we consider $x_{0:T-1}$ as a solution to the 2nd order equations, it can be shown that
\begin{align*}
    x_{t-1} &= \begin{cases}
        a_t\bigg({{ a_{t+1}x_{t+1}+b_{t+1}\epsilon_\theta(x_{t+1},t+1)+c_{t}\xi_{t}}}\bigg)+b_t\epsilon_\theta(x_t,t)+c_{t-1}\xi_{t-1},\ t<T,\\
        a_tx_t+b_t\epsilon_\theta(x_t,t)+c_{t-1}\xi_{t-1},\ t=T.
    \end{cases}.
\end{align*} 
With $x_{T-1}=a_Tx_T+b_T\epsilon_\theta(x_T,T)+c_{T-1}\xi_{T-1}$, we can show that 
\begin{align*}
    x_{T-2}&=a_{T-1}\bigg({{ a_Tx_T+b_T\epsilon_\theta(x_T,T)+c_{T-1}\xi_{T-1}}}\bigg)+b_{T-1}\epsilon_\theta(x_{T-1},T-1)+c_{T-2}\xi_{T-2}\\
    &=a_{T-1}x_{T-1}+b_{T-1}\epsilon_\theta(x_{T-1},T-1)+c_{T-2}\xi_{T-2}.
\end{align*}
With the same procedure, we can show that $x_{t-1}=a_tx_t+b_t\epsilon_\theta(x_t,t)+c_{t-1}\xi_{t-1}$ for  $t=1,...,T-2$. Hence, $x_{0:T-1}$ is a solution of $1$st order equations.

\end{proof}

\begin{proof}[Proof of Theorem \ref{thm:G_upper}]

 Since $T^i=-I+Q^i$, the inverse multisecant condition \eqref{p:inverse_multisecant} can be written as
\begin{align}
    \label{p:q_inv}
    Q^i \Fc^i_{t_1:t_2} = \Xc^i_{t_1:t_2} + \Fc^i_{t_1:t_2}.
\end{align}
Since $Q^i$ is a block upper triangular matrix, the condition \eqref{p:q_inv} can be further simplified as
\begin{align}
    \label{p:q_inv2}
    Q^i[t':t'',t':] \Fc^i_{t:t_2} = \Xc^i_{t:t_2} + \Fc^i_{t:t_2},\quad t=t_1,t_1+1,...,t_2.
\end{align}

As one can see,  for each $t$, the linear equations $Q^i[t':t'',t':] \Fc^i_{t:t_2} = \Xc^i_{t:t_2} + \Fc^i_{t:t_2}$ is underdertermined, because $Q^i[t':t'',t':]\in \Rbb^{d\times(t_2-t)d}$, $\Fc^i_{t:t_2}\in\Rbb^{(t_2-t)d\times m_i}$, $\Xc^i_{t:t_2}\in\Rbb^{(t_2-t)d\times m_i}$, rank$(\Fc^i_{t:t_2})=m_i$ and $m_i=\min\{m,i\}<d$. 

As a classical result in linear regression analysis \cite{dennis1979least}, the minimum-norm solution for $Q^i[t':t'',t':]$ is given by
\begin{align}
    Q^i[t':t'',t':] &= \underset{Q\Fc^i_{t:t_2} = \Xc^i_{t:t_2} + \Fc^i_{t:t_2}}{\arg\min} \|Q\|_F \\
   &= (\Xc^i_{t}+\Fc^i_{t})(\Fc^{i\top}_{t:t_2} \Fc^i_{t:t_2})^{-1}\Fc^{i\top}_{t:t_2}.
\end{align}

Therefore,  
$\left\|T^i+I\right\|_F$ is minimal among all matrices satisfying both the inverse multisecant condition \eqref{p:inverse_multisecant} and the block upper triangular condition in Definition \ref{def:block_upper_triangular}.

\end{proof}

\begin{proof}[Proof of Theorem \ref{thm:convergence}]
    In this proof, we aim to establish that following $t$ iterations of the update rule, the variables $x_{T-t}, \ldots, x_{T-1}$ converge. We will demonstrate this result using inductive reasoning. At the initial step of the induction, given that $R^0_T$ is identically zero (denoted by $R^0_T\equiv \boldsymbol{0}$), it follows that $$G^0[(T-1)d:,(T-1)d:]=-I.$$ Therefore, the update of $x_{T-1}$ is given by $$x_{T-1}^1=x_{T-1}^0 - (-I) R_{T-1}^0=x_{T-1}^0 + R_{T-1}^0 = x_{T-1}^0 +  F^{(k)}_{T-1}(x_{T}) - x_{t}^{0}=F^{(k)}_{T-1}(x_{T}).$$
    Therefore, $x_{T-1}^1$ converges and hence $R_{T-1}^1=F^{(k)}_{T-1}(x_{T}) - x_{t}^{1}=\boldsymbol{0}$. Now we suppose that after $t<T$ steps, $x_{T-t},...,x_{T-1}$ converges, i.e., $R_{T-t}^t=R_{T-1}^t=\boldsymbol{0}$. Then, we have 
    $$G^{t}[(T-t-1)d:,(T-t-1)d:]=-I.$$ Hence similarly, we have
    \begin{align*}
        x_{T-t-1}^{t+1}&=x_{T-t-1}^{t} - (-I) R_{T-t-1}^t=x_{T-t-1}^{t} + R_{T-t-1}^t \\
        &= x_{T-t-1}^{t} +  F^{(k)}_{T-t-1}(x_{T-t}^{t},...,x_{{(T-t)}_k}^{t}) - x_{T-t-1}^{t}
        \\ &=F^{(k)}_{T-t-1}(x_{T-t}^{t},...,x_{{(T-t)}_k}^{t}) ,
    \end{align*} 
    and hence $R_{T-t-1}^{t+1}=F^{(k)}_{T-t-1}(x_{T-t}^{t+1},...,x_{{(T-t)}_k}^{t+1})-x_{T-t-1}^{t+1}=F^{(k)}_{T-t-1}(x_{T-t}^{t},...,x_{{(T-t)}_k}^{t})-x_{T-t-1}^{t+1}=\boldsymbol{0}$. Thus $x_{T-t-1}$ converges after $T+1$ steps. By this induction, we can conclude that all the variables $x_{0},...,x_{T-1}$ converges after $T$ steps.

\end{proof}

\section{Further Exploration}
\label{app:triangular_anderson}

In this section, we delve deeper into the study of the nonlinear equations \eqref{p:orderk} and the performance of Algorithm \ref{alg:ParaTAA} in this section. For all the experiments in this section, we adopt the DDPM with 100 steps as the sequential sampling algorithm and employing DiT models. We present our findings in Figure \ref{fig:app_taa}.

In Figure \ref{fig:residual_pattern}, we plot the convergence behavior of the variables $x_0, \ldots, x_{T-1}$ under the fixed-point iteration \eqref{p:fixedpoint}, and notice that their residuals do not converge uniformly. This is largely attributed to the triangular structure present in \eqref{p:orderk}. Specifically, the earlier step variables $x_{81}, \ldots, x_{100}$ reach convergence within fewer than 10 steps, while the later step variables $x_{0}, \ldots, x_{20}$ take approximately 35 steps to converge. This observation reinforces our motivation for introducing Triangular Anderson Acceleration—to prevent the updating of near-converged variables with information from those that have not yet converged.

In Figure \ref{fig:safeguard}, we examine the impact of the safeguarding technique described in Theorem \ref{thm:convergence}. While this technique offers a worst-case guarantee, we find that it does not detract from the empirical effectiveness of Triangular Anderson Acceleration.

The third figure, Figure \ref{fig:diagnal}, demonstrates that simply extracting the upper triangular portion of the original Anderson Acceleration matrix \eqref{p:G} (denoted as AA+), despite of an improvement over the standard Anderson Acceleration, still falls short of our proposed Triangular Anderson Acceleration. More importantly, as shown in \eqref{p:G}, utilizing only the upper triangular component of $G^i$ does not ensure that $x^{i+1}_t$ in \eqref{p:anderson} is exclusively updated using information from previous iterations $x_{j}^i$ where $j \geq t$. This is because the inputs from $x_{j}^i$, with $j < t$, are still incorporated into $G^i$ through the inversion of the matrix $(\Fc^{i\top}_{t_1:t_2} \Fc^i_{t_1:t_2})^{-1}$. It is also important to note that for these experiments, we utilize 32-bit precision in our computations, as the AA and AA+ methods do not exhibit stability with 16-bit precision.

\begin{figure}[htpb]
	\centering
	\begin{subfigure}[b]{0.3\textwidth}
	\centering
	\includegraphics[width=\textwidth]{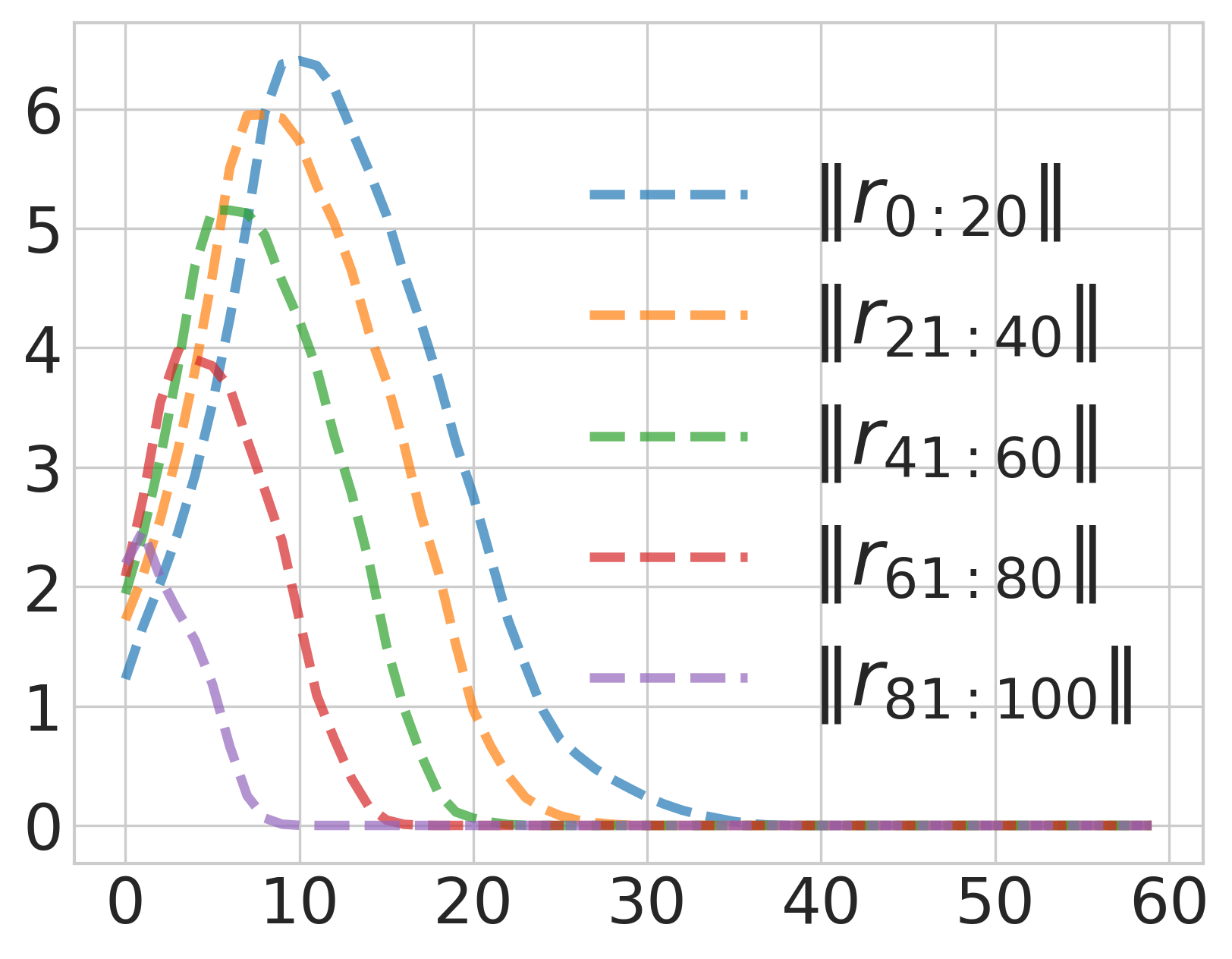}
 	\caption{Convergence of residuals under different timesteps}
	\label{fig:residual_pattern}
\end{subfigure}
	\begin{subfigure}[b]{0.3\textwidth}
	\centering
	\includegraphics[width=\textwidth]{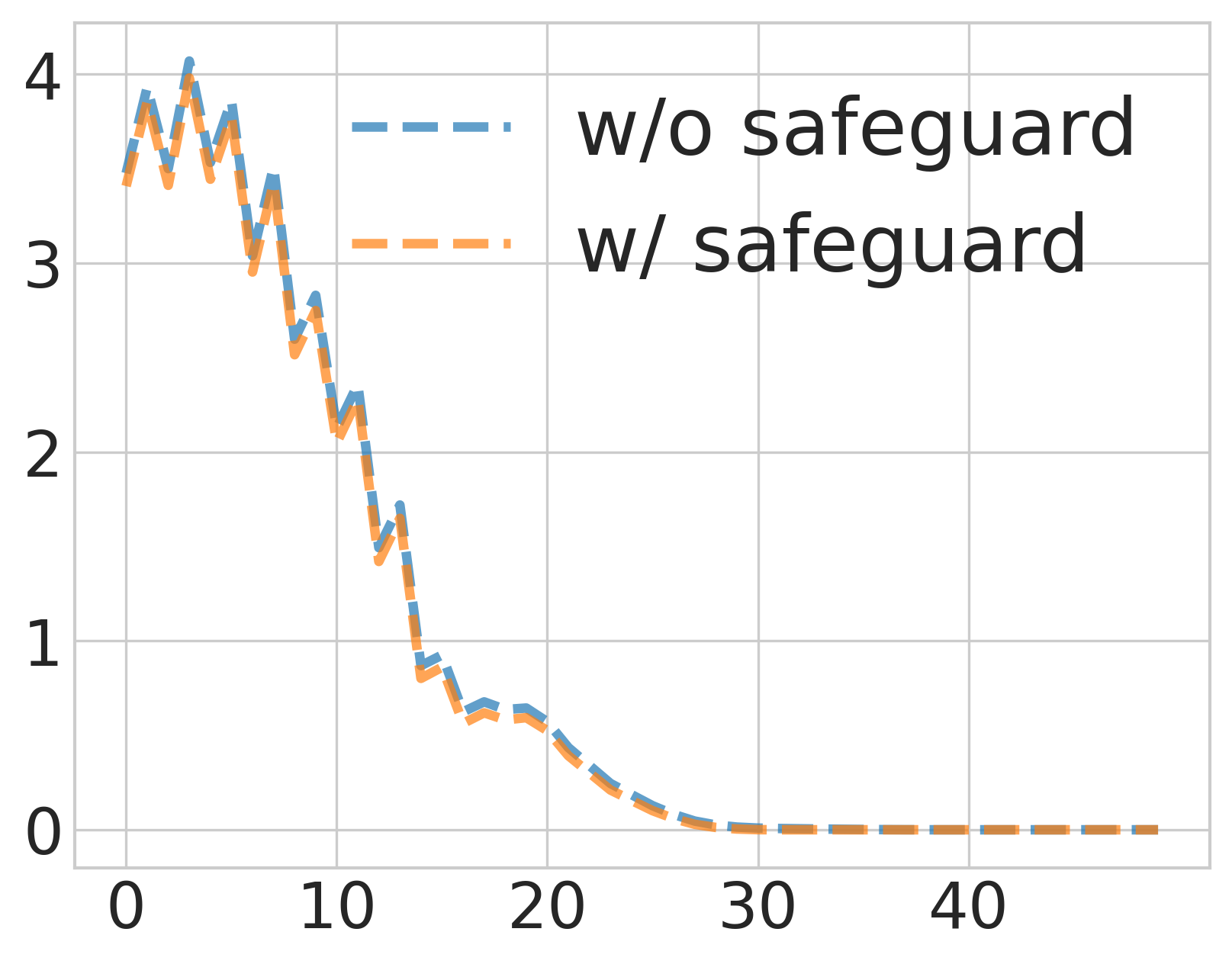}
	\caption{TAA with/without Safeguarding}
	\label{fig:safeguard}
\end{subfigure}
\begin{subfigure}[b]{0.3\textwidth}
	\centering
	\includegraphics[width=\textwidth]{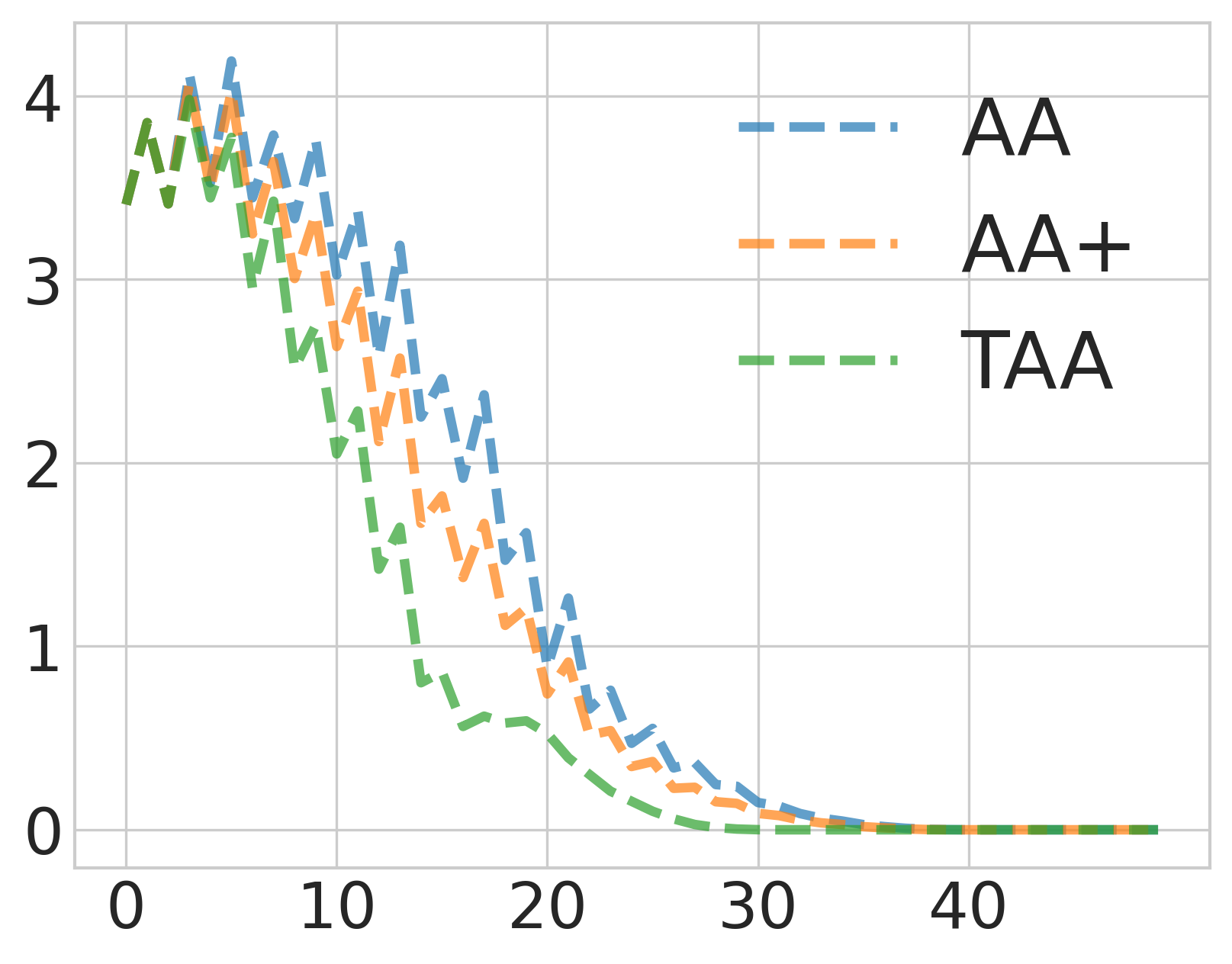}
	\caption{Comparing TAA to AA+}
	\label{fig:diagnal}
\end{subfigure}
\caption{More investigation on TAA.}
	\label{fig:app_taa}
\end{figure}

\section{Hyperparameter Analysis}
\label{app:hyperparameter}

In this section, we present grid search results for sampling with DiT models  under the four scenarios outlined in Section \ref{sec:main_exp}: DDIM with 25 steps, DDIM with 50 steps, DDIM with 100 steps, and DDPM with 100 steps. We fixed the window size $w$ to match the total number of sequential sampling steps. The grid search is performed for the order $k$ and history size $m$ in Algorithm \ref{alg:ParaTAA}. We used the average number of steps required to achieve convergence for 100 different seeds as a metric to assess the performance of different hyperparameters.
 It is important to observe that when $m=1$, Algorithm \ref{alg:ParaTAA} reverts to the fixed-point iteration \eqref{p:fixedpoint} since it does not utilize historical information. The results are summarized in Figure \ref{fig:app_hyperparameter}.

Based on the grid search results shown in Figure \ref{fig:app_hyperparameter}, we can draw several conclusions. Firstly, the optimal history size $m$ appears to be between 2 and 4, as utilizing additional historical information may be detrimental to performance. Secondly, for $m\geq 2$, the algorithm becomes quite resilient to changes in the order $k$, provided that $k$ is sufficiently large. Conversely, with $m=1$, corresponding to fixed-point iteration, the algorithm performs best with a smaller $k$.

An interesting observation from Figure \ref{fig:app_hyperparameter} is that for all DDIM scenarios (25, 50, and 100 steps), the ParaTAA algorithm tends to converge in roughly the same number of steps. Furthermore, we note that the DDPM typically demands more steps to reach convergence compared to the DDIM. We speculate that the inclusion of a noise term in the nonlinear equations \eqref{p:orderk} may exacerbate the optimization landscape for the fixed-point iteration. 

Since the SD models exhibit a similar pattern of hyperparameters as shown in Figure \ref{fig:app_hyperparameter}, we choose to use the same optimal hyperparameters for both the DiT and SD experiments in Section \ref{sec:main_exp}.

\begin{figure}[htpb]
	\centering
	\begin{subfigure}[b]{0.24\textwidth}
	\centering
	\includegraphics[width=\textwidth]{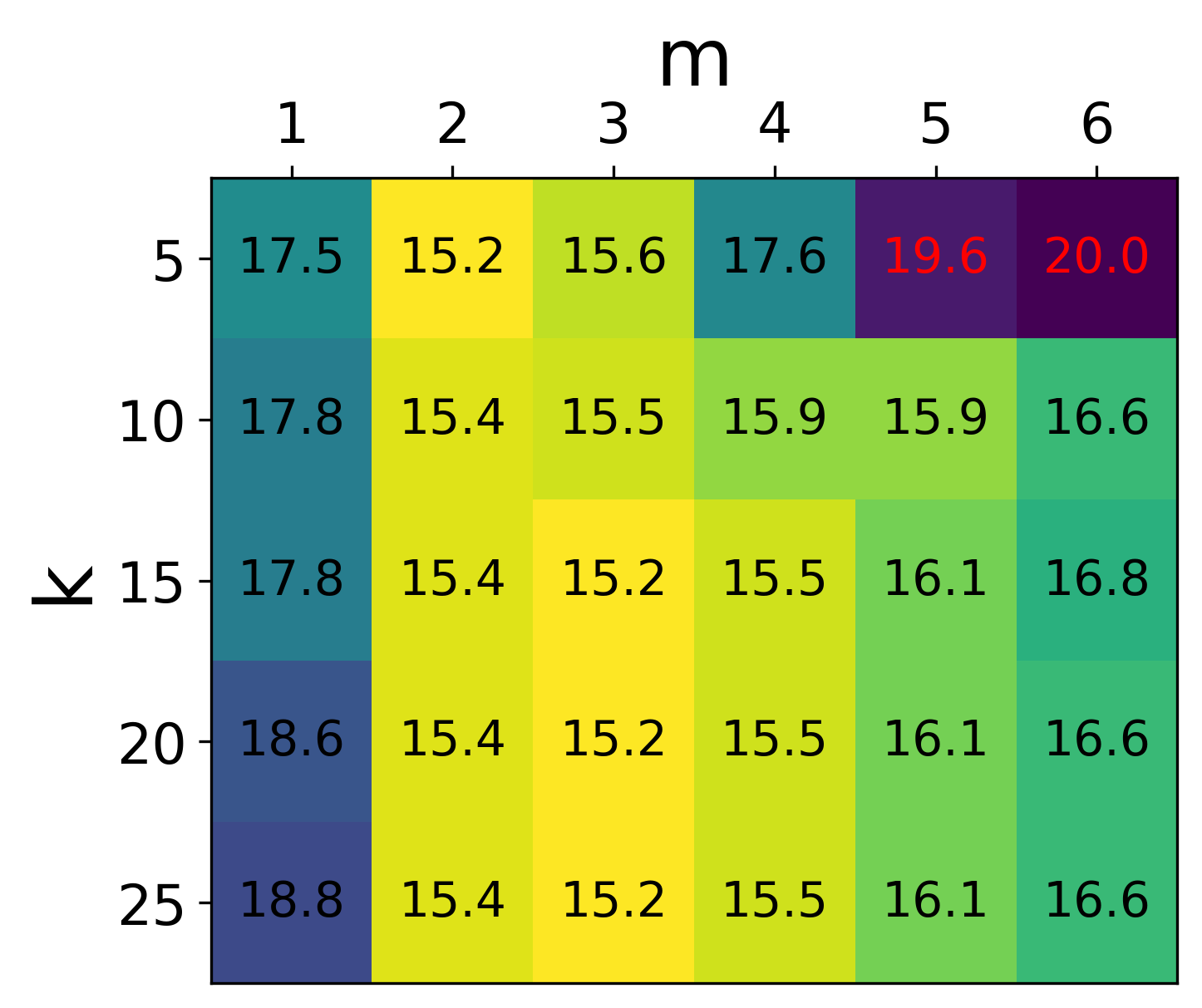}
 	\caption{DDIM 25 steps}
	\label{}
\end{subfigure}
	\begin{subfigure}[b]{0.24\textwidth}
	\centering
	\includegraphics[width=\textwidth]{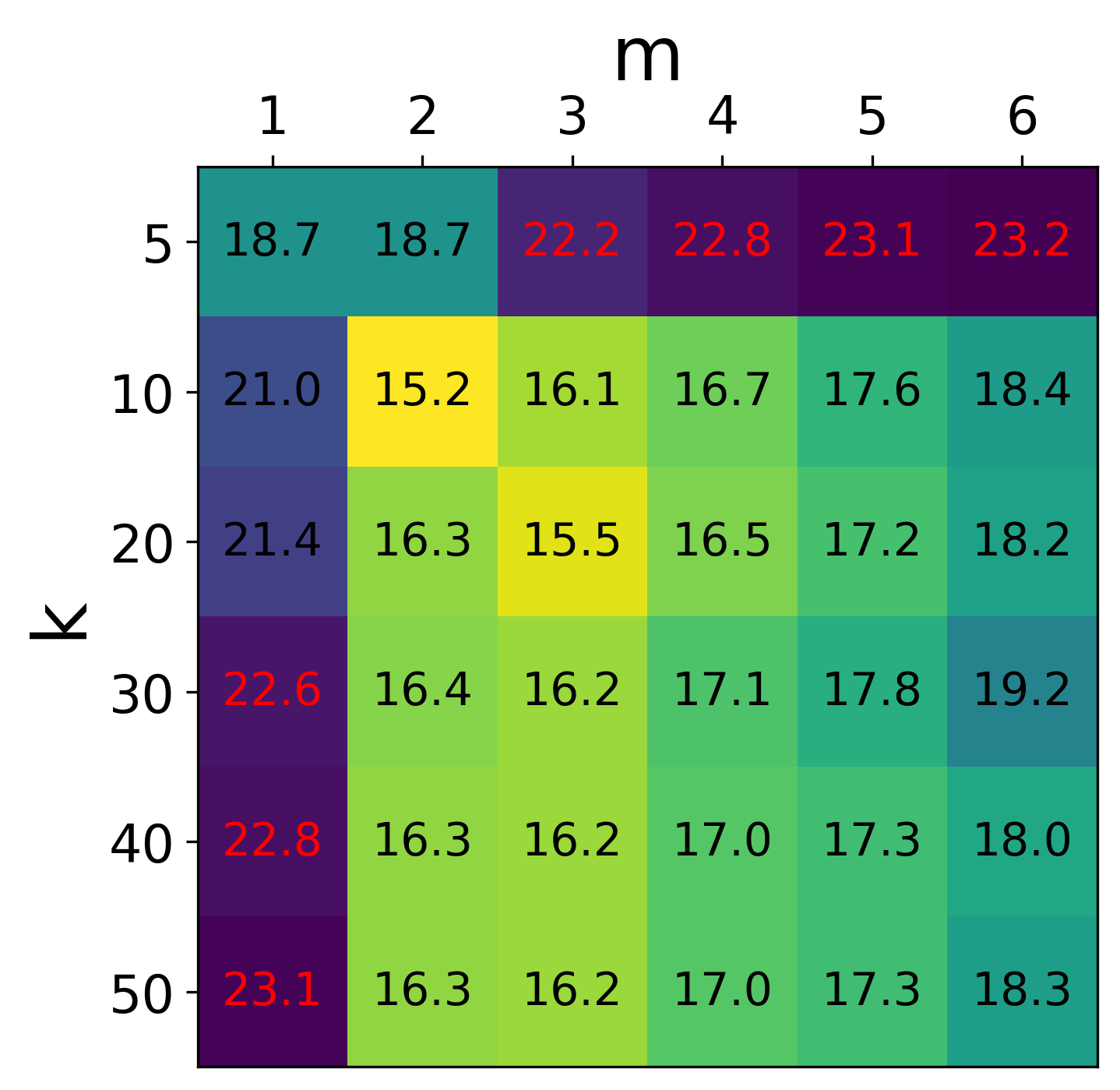}
	\caption{DDIM 50 steps}
	\label{}
\end{subfigure}
\begin{subfigure}[b]{0.24\textwidth}
	\centering
	\includegraphics[width=\textwidth]{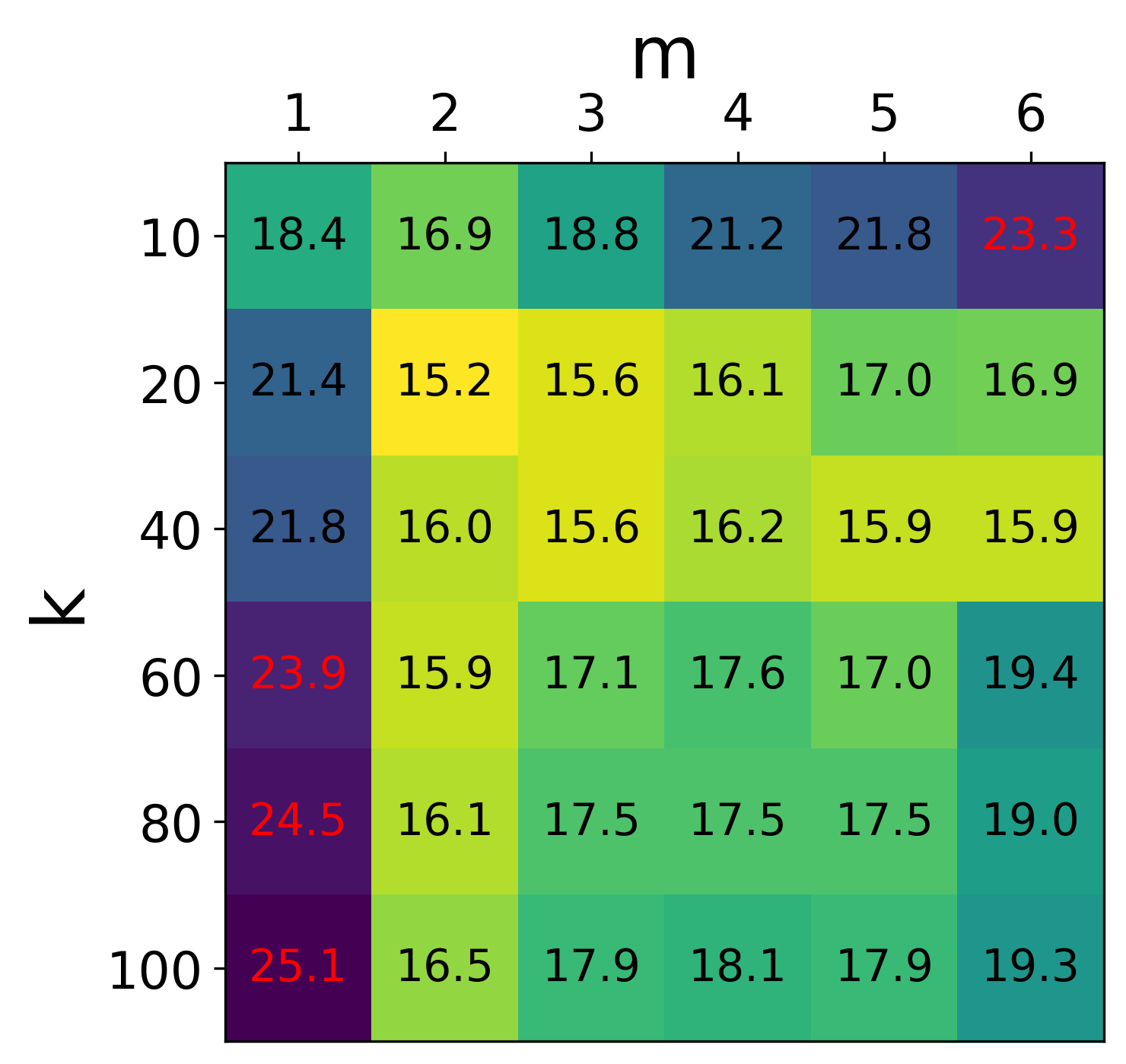}
 	\caption{DDIM 100 steps}
	\label{}
\end{subfigure}
	\begin{subfigure}[b]{0.24\textwidth}
	\centering
	\includegraphics[width=\textwidth]{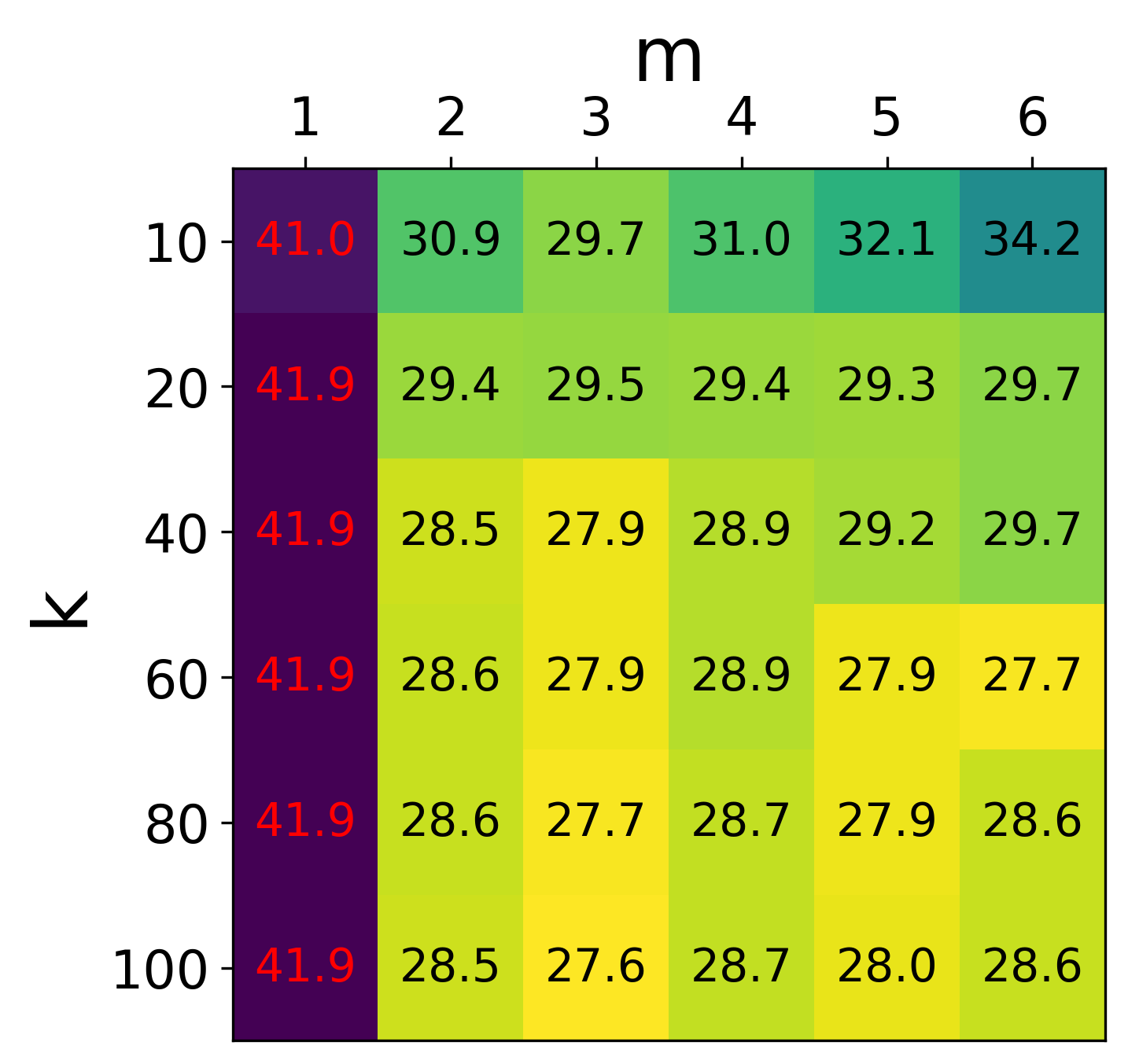}
	\caption{DDPM 100 steps}
	\label{}
\end{subfigure}
\caption{Hyperparameter Analysis for ParaTAA.}
	\label{fig:app_hyperparameter}
\end{figure}

\section{Qualitative Comparison}
\label{app:generated_image}

This section presents a qualitative visual comparison of the convergence behaviors of ParaTAA, FP, and FP+ as illustrated in Figure \ref{fig:main-result}. We feature examples from the following scenarios: DiT with DDIM at 100 steps, DiT with DDPM at 100 steps, SD with DDIM at 100 steps, and SD with DDPM at 100 steps. The images displayed showcase the convergence process at different iteration stages for each algorithm. Sequentially generated images are provided in Figure \ref{fig:demo_sequential} for comparison, while the images generated through parallel sampling are depicted in Figures \ref{fig:demo_imagenet_ddim100}, \ref{fig:demo_imagenet_ddpm100}, \ref{fig:demo_stable_ddim100}, and \ref{fig:demo_stable_ddpm100}, corresponding to the four aforementioned scenarios.

As is evident from the visualizations, it is clear that our proposed ParaTAA algorithm significantly outperforms the naive fixed-point iteration (FP) and its variant with optimal order (FP+). Moreover, for both DiT and SD models with DDIM at 100 steps, ParaTAA successfully produces images of similar quality to those obtained via sequential sampling within a mere 7 iterations. In the case of DDPM at 100 steps, ParaTAA achieves comparable results to sequential sampling within only 21 iterations.

\begin{figure}[htpb]
	\centering
	\begin{subfigure}[b]{0.2\textwidth}
	\centering
	\includegraphics[width=1\textwidth]{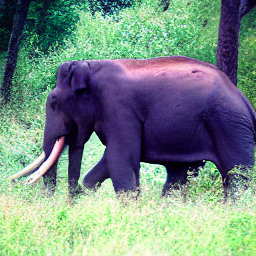}
 \caption{DiT DDIM-100}
\end{subfigure}
	\begin{subfigure}[b]{0.2\textwidth}
	\centering
	\includegraphics[width=1\textwidth]{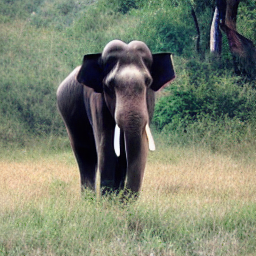}
    \caption{DiT DDPM 100}
\end{subfigure}
\begin{subfigure}[b]{0.2\textwidth}
	\centering
	\includegraphics[width=1\textwidth]{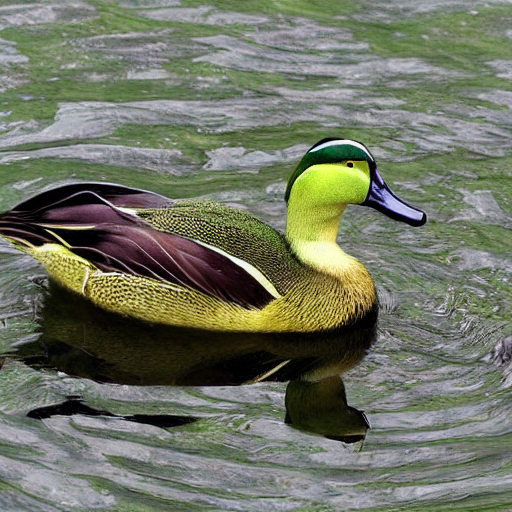}
    \caption{SD DDIM 100}
\end{subfigure}
	\begin{subfigure}[b]{0.2\textwidth}
	\centering
	\includegraphics[width=1\textwidth]{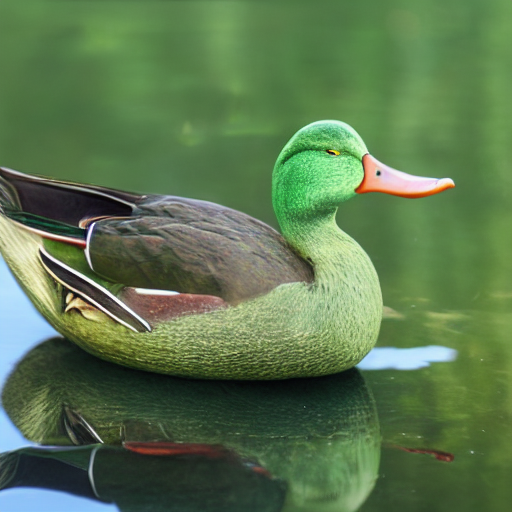}
    \caption{SD DDPM 100}
\end{subfigure}
\caption{Generated Images from Sequential Sampling. For DiT model, we use the class for "elephant" as the input condition. For SD model, we use the "green duck" as the text prompt.}
	\label{fig:demo_sequential}
\end{figure}

\begin{figure}[htpb]
	\centering
	\begin{subfigure}[b]{0.2\textwidth}
	\centering
	\includegraphics[width=0.9\textwidth]{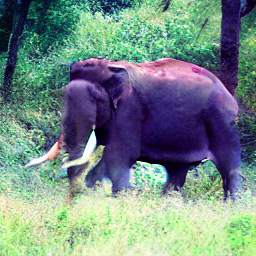}
 
\end{subfigure}
	\begin{subfigure}[b]{0.2\textwidth}
	\centering
	\includegraphics[width=0.9\textwidth]{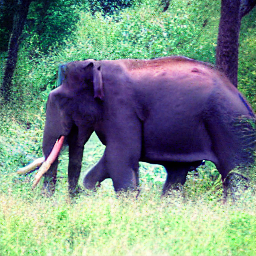}
\end{subfigure}
\begin{subfigure}[b]{0.2\textwidth}
	\centering
	\includegraphics[width=0.9\textwidth]{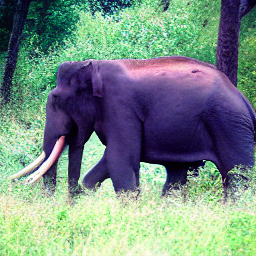}
 
\end{subfigure}
	\begin{subfigure}[b]{0.2\textwidth}
	\centering
	\includegraphics[width=0.9\textwidth]{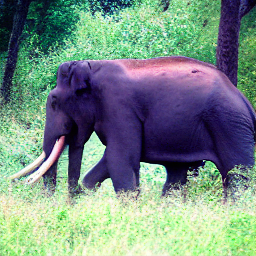}
\end{subfigure}
\begin{subfigure}[b]{0.2\textwidth}
	\centering
	\includegraphics[width=0.9\textwidth]{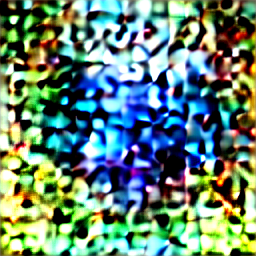}
 
\end{subfigure}
	\begin{subfigure}[b]{0.2\textwidth}
	\centering
	\includegraphics[width=0.9\textwidth]{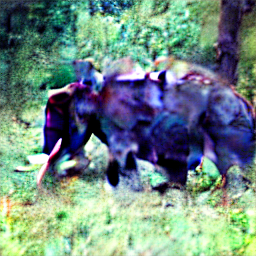}
\end{subfigure}
\begin{subfigure}[b]{0.2\textwidth}
	\centering
	\includegraphics[width=0.9\textwidth]{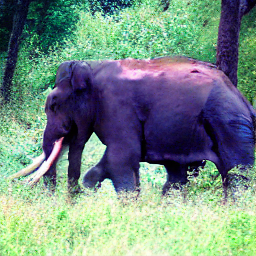}
 
\end{subfigure}
	\begin{subfigure}[b]{0.2\textwidth}
	\centering
	\includegraphics[width=0.9\textwidth]{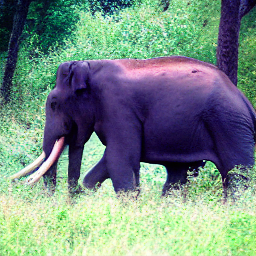}
\end{subfigure}
\begin{subfigure}[b]{0.2\textwidth}
	\centering
	\includegraphics[width=0.9\textwidth]{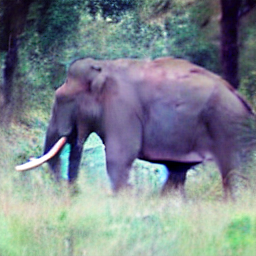}
    \caption{After 7 steps}
\end{subfigure}
	\begin{subfigure}[b]{0.2\textwidth}
	\centering
	\includegraphics[width=0.9\textwidth]{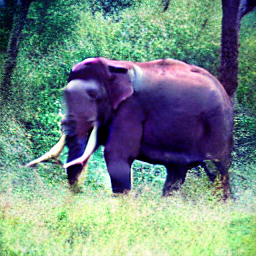}
    \caption{After 11 steps}
\end{subfigure}
\begin{subfigure}[b]{0.2\textwidth}
	\centering
	\includegraphics[width=0.9\textwidth]{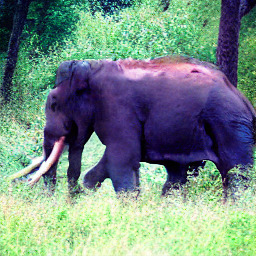}
    \caption{After 15 steps}
\end{subfigure}
	\begin{subfigure}[b]{0.2\textwidth}
	\centering
	\includegraphics[width=0.9\textwidth]{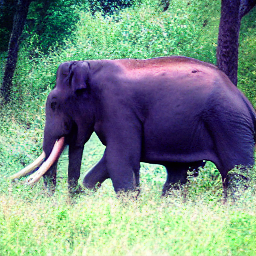}
    \caption{After 19 steps}
\end{subfigure}
\caption{Iterations of parallel sampling for DDIM 100 steps with DiT model. From top to bottom, the images are generated by ParaTAA, FP and FP+ respectively.}
	\label{fig:demo_imagenet_ddim100}
\end{figure}

\begin{figure}[htpb]
	\centering
	\begin{subfigure}[b]{0.2\textwidth}
	\centering
	\includegraphics[width=0.9\textwidth]{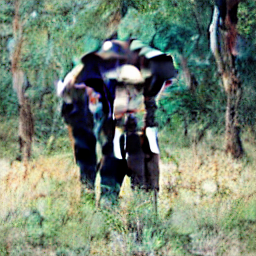}
 
\end{subfigure}
	\begin{subfigure}[b]{0.2\textwidth}
	\centering
	\includegraphics[width=0.9\textwidth]{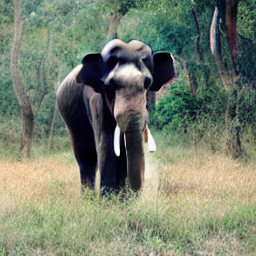}
\end{subfigure}
\begin{subfigure}[b]{0.2\textwidth}
	\centering
	\includegraphics[width=0.9\textwidth]{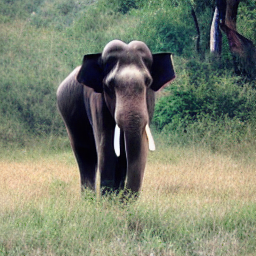}
 
\end{subfigure}
	\begin{subfigure}[b]{0.2\textwidth}
	\centering
	\includegraphics[width=0.9\textwidth]{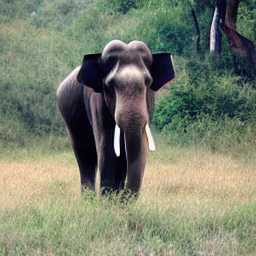}
\end{subfigure}
\begin{subfigure}[b]{0.2\textwidth}
	\centering
	\includegraphics[width=0.9\textwidth]{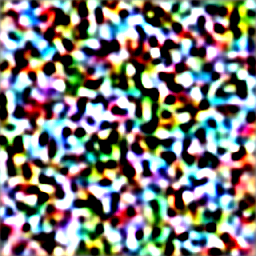}
 
\end{subfigure}
	\begin{subfigure}[b]{0.2\textwidth}
	\centering
	\includegraphics[width=0.9\textwidth]{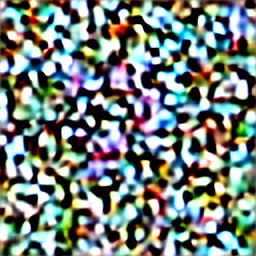}
\end{subfigure}
\begin{subfigure}[b]{0.2\textwidth}
	\centering
	\includegraphics[width=0.9\textwidth]{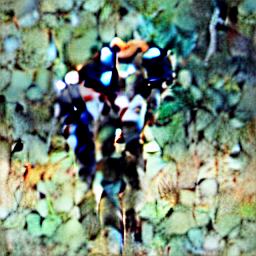}
 
\end{subfigure}
	\begin{subfigure}[b]{0.2\textwidth}
	\centering
	\includegraphics[width=0.9\textwidth]{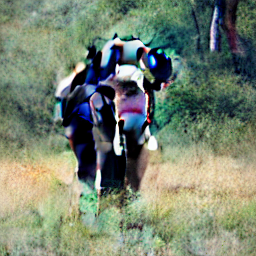}
\end{subfigure}
\begin{subfigure}[b]{0.2\textwidth}
	\centering
	\includegraphics[width=0.9\textwidth]{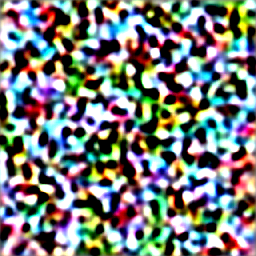}
    \caption{After 17 steps}
\end{subfigure}
	\begin{subfigure}[b]{0.2\textwidth}
	\centering
	\includegraphics[width=0.9\textwidth]{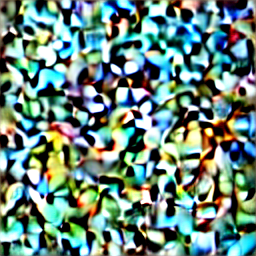}
    \caption{After 21 steps}
\end{subfigure}
\begin{subfigure}[b]{0.2\textwidth}
	\centering
	\includegraphics[width=0.9\textwidth]{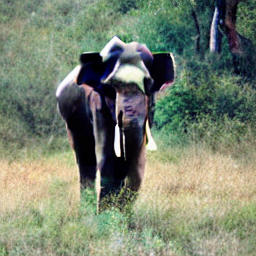}
    \caption{After 27 steps}
\end{subfigure}
	\begin{subfigure}[b]{0.2\textwidth}
	\centering
	\includegraphics[width=0.9\textwidth]{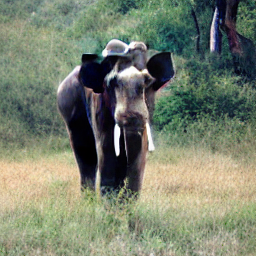}
    \caption{After 31 steps}
\end{subfigure}
\caption{Iterations of parallel sampling for DDPM 100 steps with DiT model. From top to bottom, the images are generated by ParaTAA, FP and FP+ respectively.}
	\label{fig:demo_imagenet_ddpm100}
\end{figure}

\begin{figure}[htpb]
	\centering
	\begin{subfigure}[b]{0.2\textwidth}
	\centering
	\includegraphics[width=0.9\textwidth]{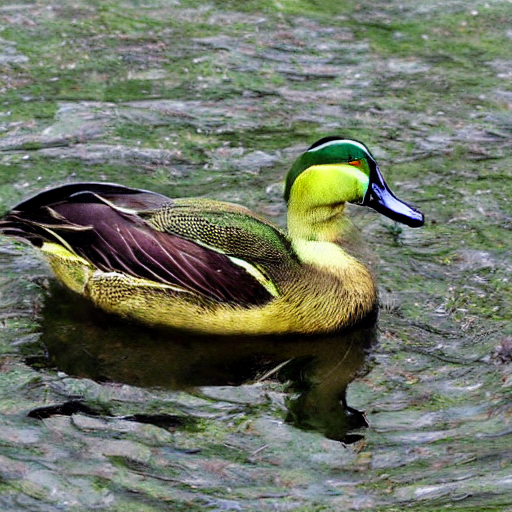}
 
\end{subfigure}
	\begin{subfigure}[b]{0.2\textwidth}
	\centering
	\includegraphics[width=0.9\textwidth]{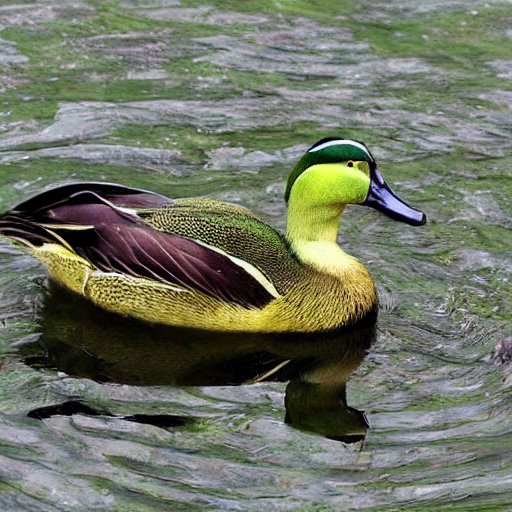}
\end{subfigure}
\begin{subfigure}[b]{0.2\textwidth}
	\centering
	\includegraphics[width=0.9\textwidth]{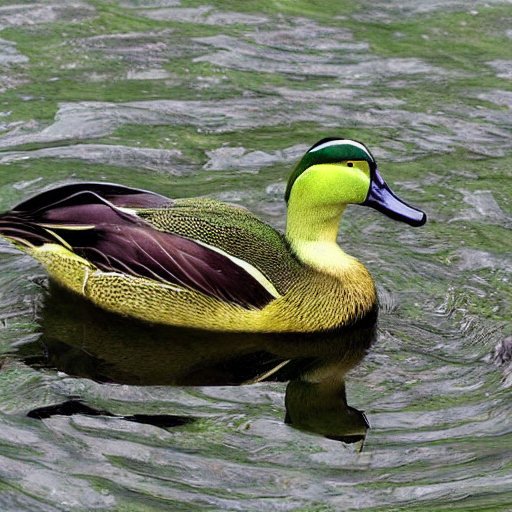}
 
\end{subfigure}
	\begin{subfigure}[b]{0.2\textwidth}
	\centering
	\includegraphics[width=0.9\textwidth]{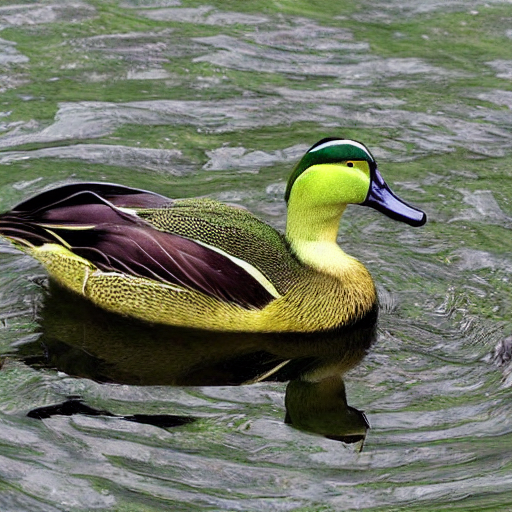}
\end{subfigure}
\begin{subfigure}[b]{0.2\textwidth}
	\centering
	\includegraphics[width=0.9\textwidth]{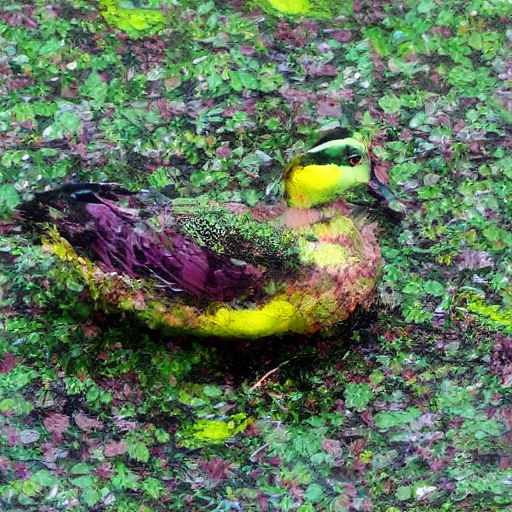}
 
\end{subfigure}
	\begin{subfigure}[b]{0.2\textwidth}
	\centering
	\includegraphics[width=0.9\textwidth]{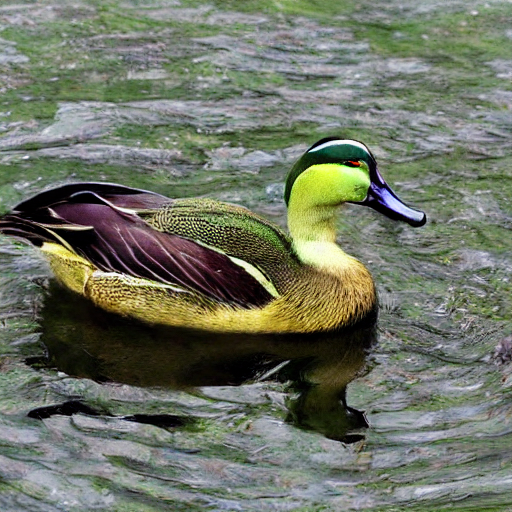}
\end{subfigure}
\begin{subfigure}[b]{0.2\textwidth}
	\centering
	\includegraphics[width=0.9\textwidth]{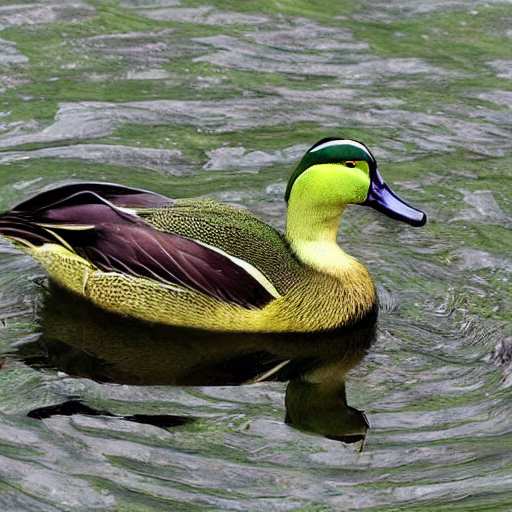}
 
\end{subfigure}
	\begin{subfigure}[b]{0.2\textwidth}
	\centering
	\includegraphics[width=0.9\textwidth]{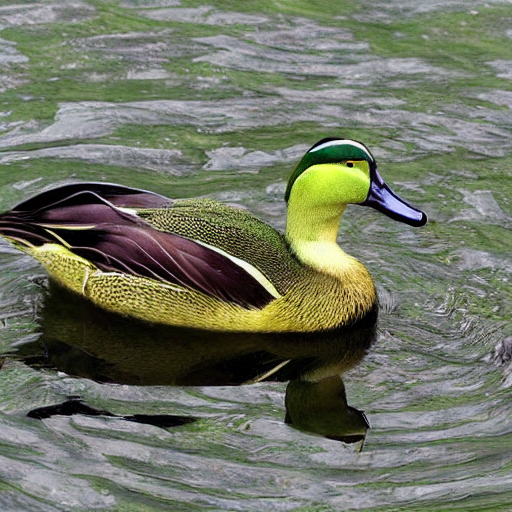}
\end{subfigure}
\begin{subfigure}[b]{0.2\textwidth}
	\centering
	\includegraphics[width=0.9\textwidth]{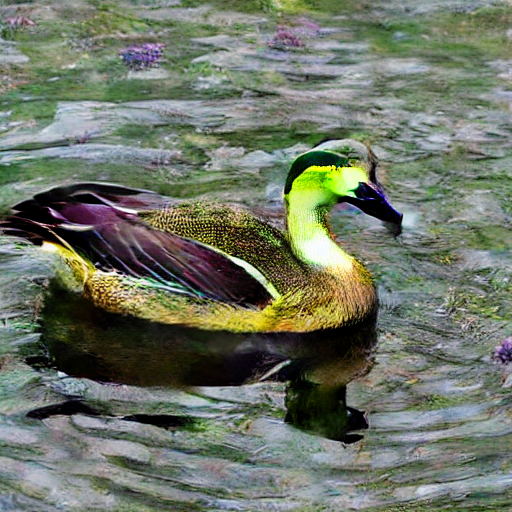}
    \caption{After 5 steps}
\end{subfigure}
	\begin{subfigure}[b]{0.2\textwidth}
	\centering
	\includegraphics[width=0.9\textwidth]{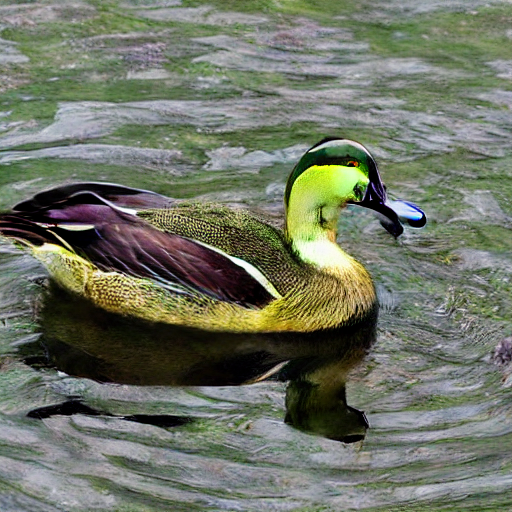}
    \caption{After 7 steps}
\end{subfigure}
\begin{subfigure}[b]{0.2\textwidth}
	\centering
	\includegraphics[width=0.9\textwidth]{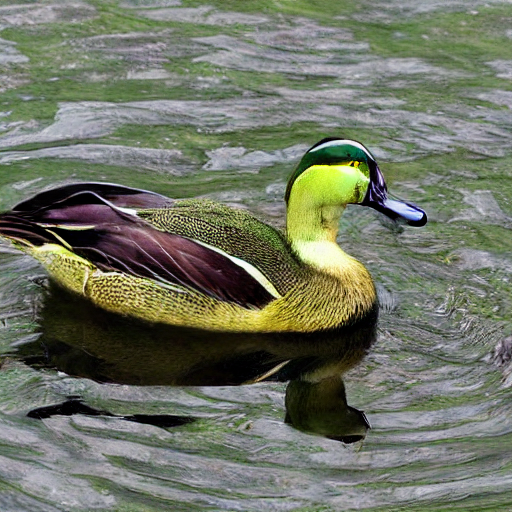}
    \caption{After 9 steps}
\end{subfigure}
	\begin{subfigure}[b]{0.2\textwidth}
	\centering
	\includegraphics[width=0.9\textwidth]{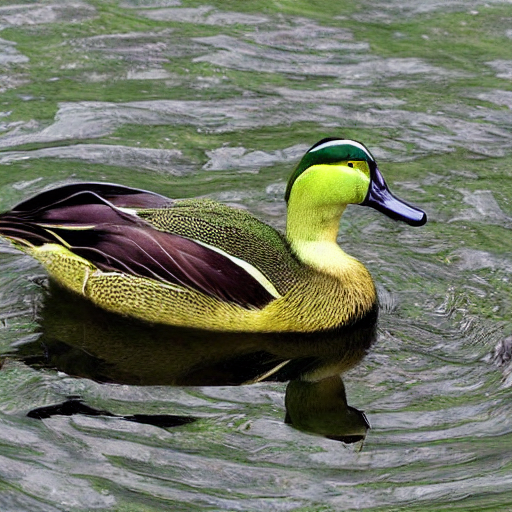}
    \caption{After 11 steps}
\end{subfigure}

\caption{Iterations of parallel sampling for DDIM 100 steps with SD model. From top to bottom, the images are generated by ParaTAA, FP and FP+ respectively.}
	\label{fig:demo_stable_ddim100}
\end{figure}

\begin{figure}[htpb]
	\centering
	\begin{subfigure}[b]{0.2\textwidth}
	\centering
	\includegraphics[width=0.9\textwidth]{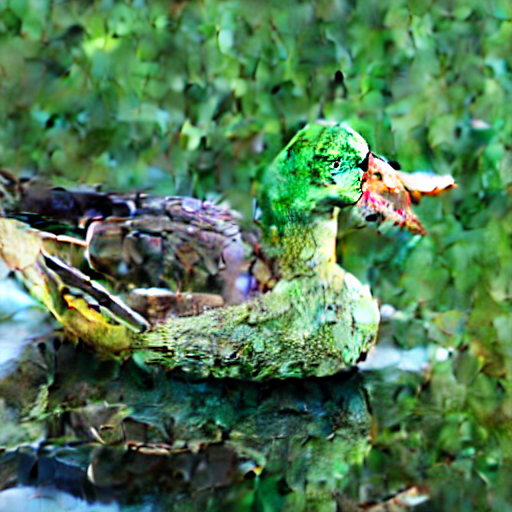}
 
\end{subfigure}
	\begin{subfigure}[b]{0.2\textwidth}
	\centering
	\includegraphics[width=0.9\textwidth]{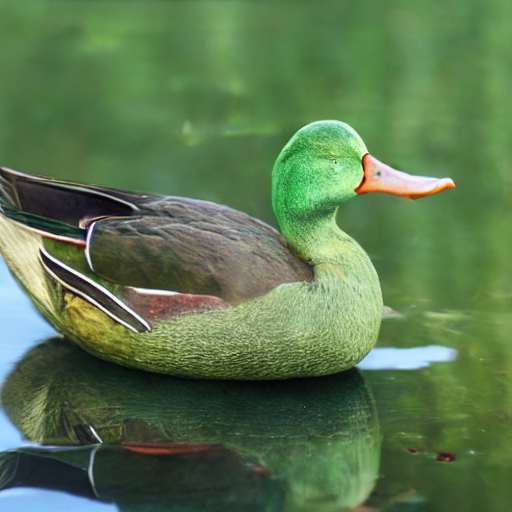}
\end{subfigure}
\begin{subfigure}[b]{0.2\textwidth}
	\centering
	\includegraphics[width=0.9\textwidth]{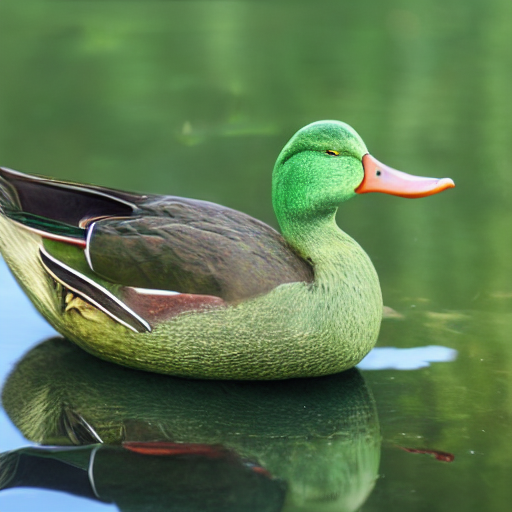}
 
\end{subfigure}
	\begin{subfigure}[b]{0.2\textwidth}
	\centering
	\includegraphics[width=0.9\textwidth]{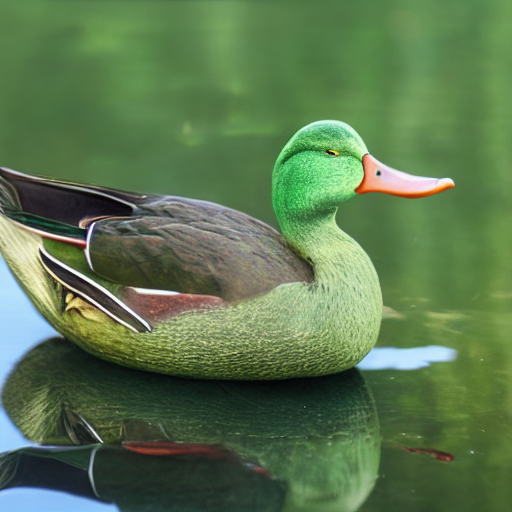}
\end{subfigure}
\begin{subfigure}[b]{0.2\textwidth}
	\centering
	\includegraphics[width=0.9\textwidth]{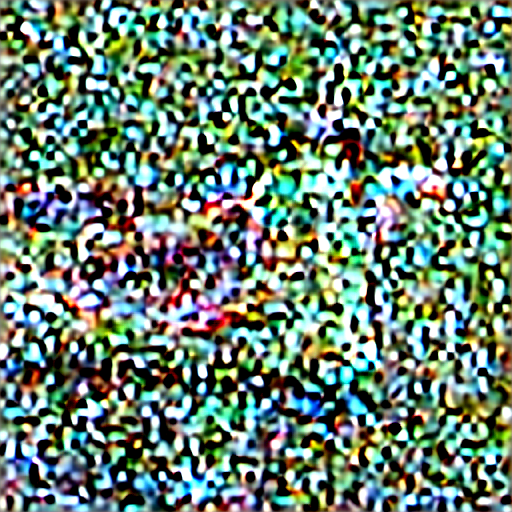}
 
\end{subfigure}
	\begin{subfigure}[b]{0.2\textwidth}
	\centering
	\includegraphics[width=0.9\textwidth]{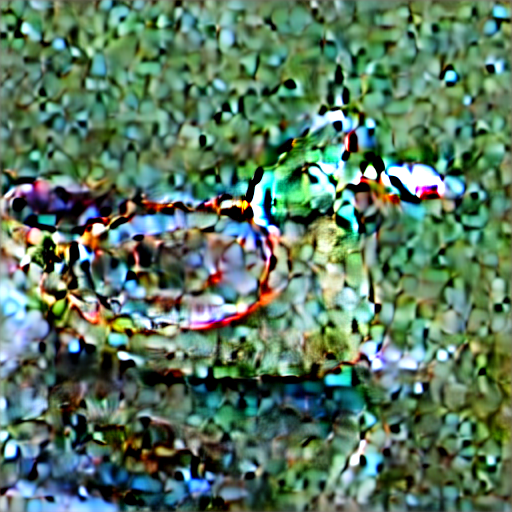}
\end{subfigure}
\begin{subfigure}[b]{0.2\textwidth}
	\centering
	\includegraphics[width=0.9\textwidth]{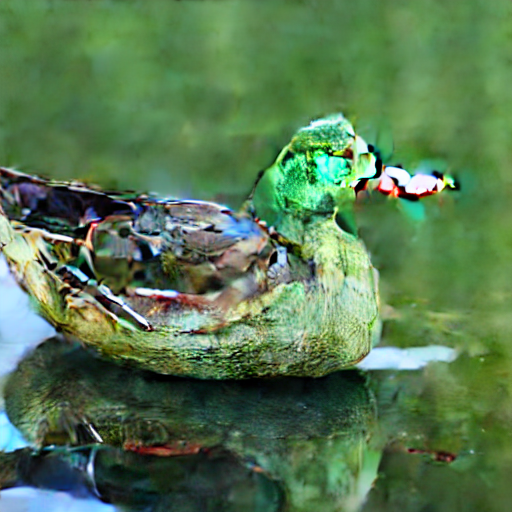}
 
\end{subfigure}
	\begin{subfigure}[b]{0.2\textwidth}
	\centering
	\includegraphics[width=0.9\textwidth]{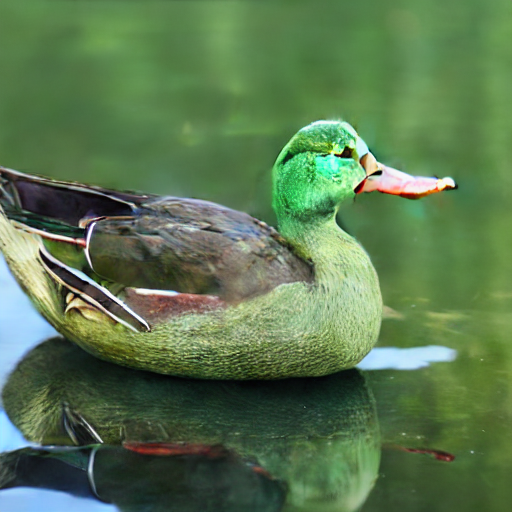}
\end{subfigure}
\begin{subfigure}[b]{0.2\textwidth}
	\centering
	\includegraphics[width=0.9\textwidth]{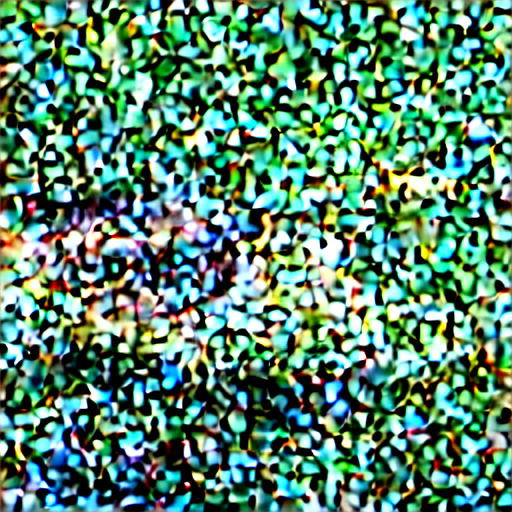}
    \caption{After 11 steps}
\end{subfigure}
	\begin{subfigure}[b]{0.2\textwidth}
	\centering
	\includegraphics[width=0.9\textwidth]{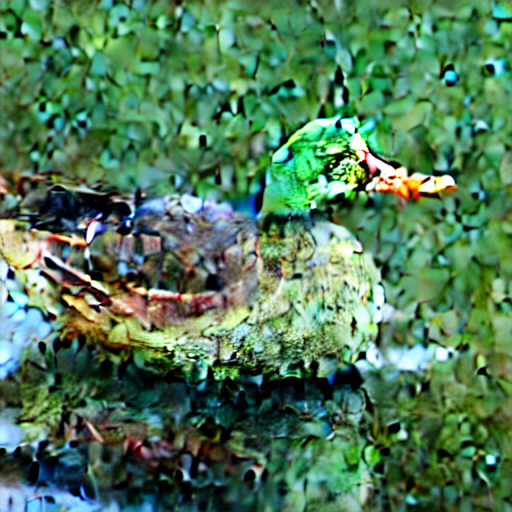}
    \caption{After 15 steps}
\end{subfigure}
\begin{subfigure}[b]{0.2\textwidth}
	\centering
	\includegraphics[width=0.9\textwidth]{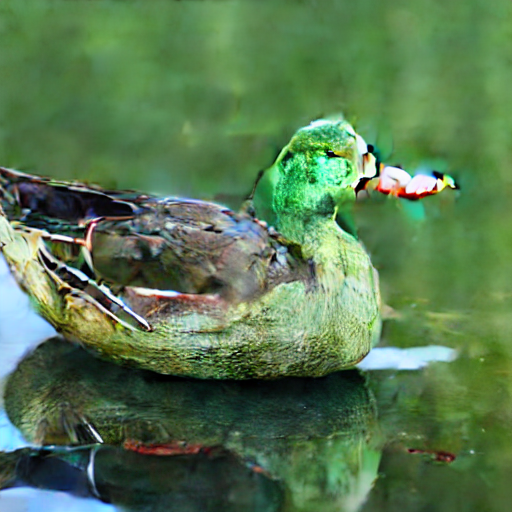}
    \caption{After 21 steps}
\end{subfigure}
	\begin{subfigure}[b]{0.2\textwidth}
	\centering
	\includegraphics[width=0.9\textwidth]{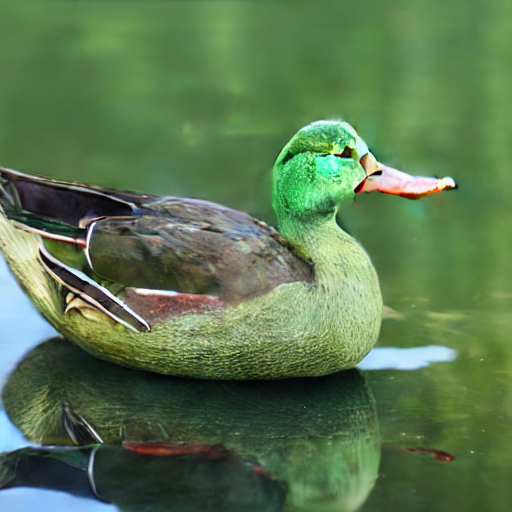}
    \caption{After 25 steps}
\end{subfigure}
\caption{Iterations of parallel sampling for DDPM 100 steps with SD model. From top to bottom, the images are generated by ParaTAA, FP and FP+ respectively.}
	\label{fig:demo_stable_ddpm100}
\end{figure}

\section{Quantitative Evaluation of Initialization from Existing Trajectory}
\label{app:quant_init}

In this section, we expand the results discussed in Section \ref{sec:init}. We present a more detailed set of convergence images in Figure \ref{fig:early_stop_app} as a fine-grained complement to Figure \ref{fig:early_stop}. This allows for a clearer observation on convergence when initialized with different methods. Additionally, we conduct a quantitative evaluation of the results shown in Figure \ref{fig:early_stop_app}, which is depicted in Figure \ref{fig:from_init_quant}. Specifically, Figure \ref{fig:from_init_quant} illustrates the progression of CLIP scores in relation to the second prompt P2. It is evident that initializing with the trajectory leads to significantly faster convergence in terms of the CLIP scores compared to initializing from noise.

\begin{figure}[htpb]
	\centering
	\begin{subfigure}[b]{0.16\columnwidth}
	\centering
	\includegraphics[width=\columnwidth]{figures/image_variation_stable/6/0_0.04375648498535156.png}
\end{subfigure}
\begin{subfigure}[b]{0.16\columnwidth}
	\centering
	\includegraphics[width=\columnwidth]{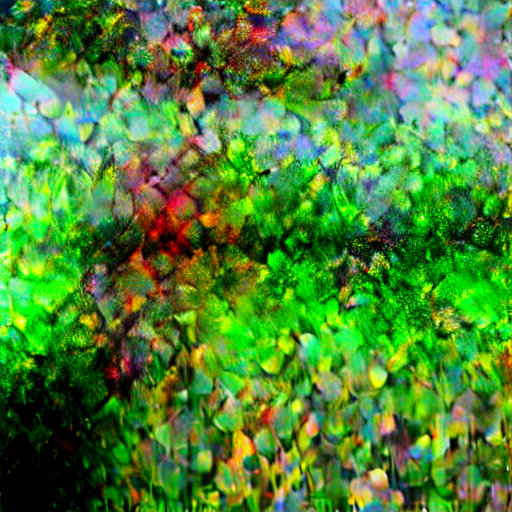}
\end{subfigure}
\begin{subfigure}[b]{0.16\columnwidth}
	\centering
	\includegraphics[width=\columnwidth]{figures/image_variation_stable/6/3_2.533912420272827.png}
\end{subfigure}
\begin{subfigure}[b]{0.16\columnwidth}
	\centering
	\includegraphics[width=\columnwidth]{figures/image_variation_stable/6/5_5.11636209487915.png}
\end{subfigure}
\begin{subfigure}[b]{0.16\columnwidth}
	\centering
	\includegraphics[width=\columnwidth]{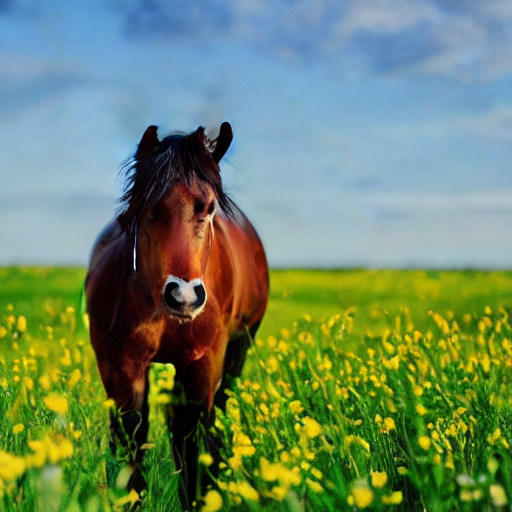}
\end{subfigure}
\begin{subfigure}[b]{0.16\columnwidth}
	\centering
	\includegraphics[width=\columnwidth]{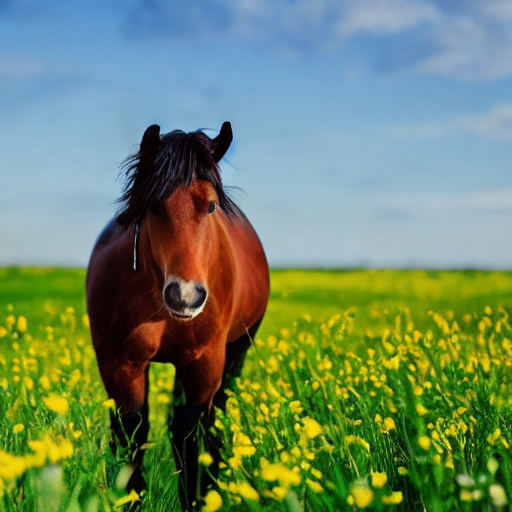}
\end{subfigure}

\begin{subfigure}[b]{0.16\columnwidth}
	\centering
	\includegraphics[width=\columnwidth]{figures/image_variation_stable/7/0_0.026067733764648438.png}
\end{subfigure}
\begin{subfigure}[b]{0.16\columnwidth}
	\centering
	\includegraphics[width=\columnwidth]{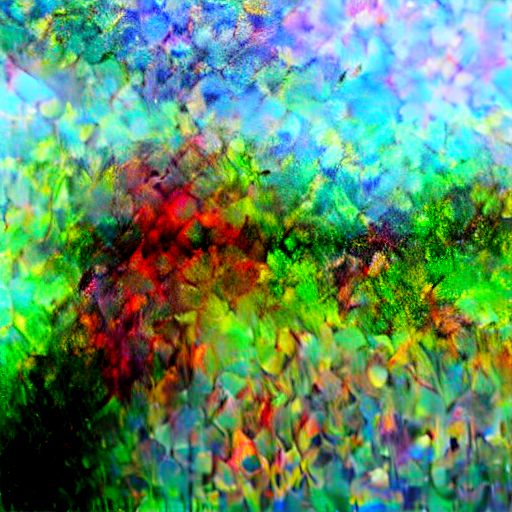}
\end{subfigure}
\begin{subfigure}[b]{0.16\columnwidth}
	\centering
	\includegraphics[width=\columnwidth]{figures/image_variation_stable/7/3_2.478393316268921.png}
\end{subfigure}
\begin{subfigure}[b]{0.16\columnwidth}
	\centering
	\includegraphics[width=\columnwidth]{figures/image_variation_stable/7/4_3.836852550506592.png}
\end{subfigure}
\begin{subfigure}[b]{0.16\columnwidth}
	\centering
	\includegraphics[width=\columnwidth]{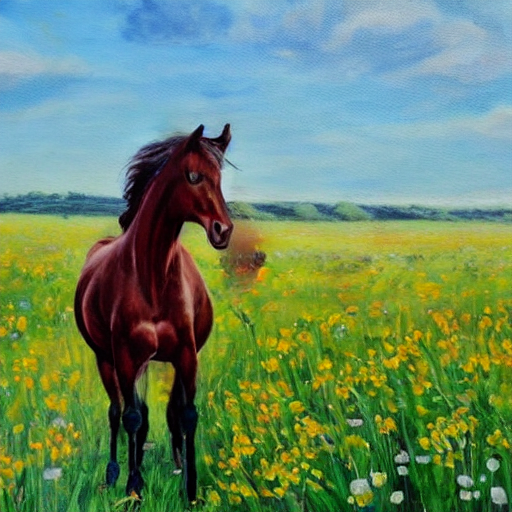}
\end{subfigure}
\begin{subfigure}[b]{0.16\columnwidth}
	\centering
	\includegraphics[width=\columnwidth]{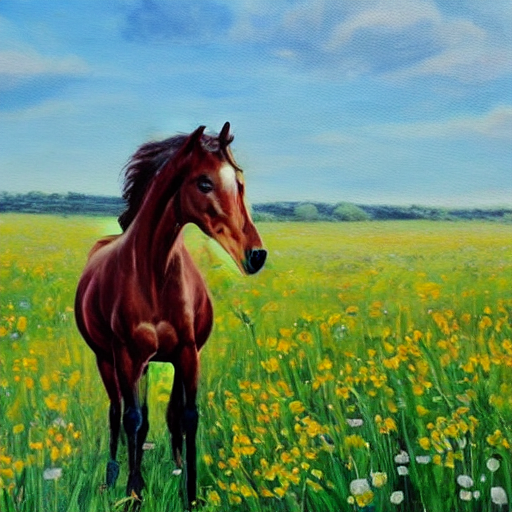}
\end{subfigure}

\begin{subfigure}[b]{0.16\columnwidth}
	\centering
	\includegraphics[width=\columnwidth]{figures/image_variation_stable/1/0_0.026752710342407227.png}
\end{subfigure}
\begin{subfigure}[b]{0.16\columnwidth}
	\centering
	\includegraphics[width=\columnwidth]{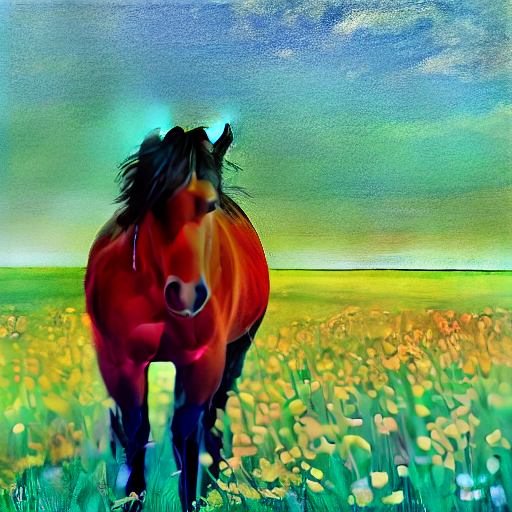}
\end{subfigure}
\begin{subfigure}[b]{0.16\columnwidth}
	\centering
	\includegraphics[width=\columnwidth]{figures/image_variation_stable/1/3_3.097738742828369.png}
\end{subfigure}
\begin{subfigure}[b]{0.16\columnwidth}
	\centering
	\includegraphics[width=\columnwidth]{figures/image_variation_stable/1/5_4.702269077301025.png}
\end{subfigure}
\begin{subfigure}[b]{0.16\columnwidth}
	\centering
	\includegraphics[width=\columnwidth]{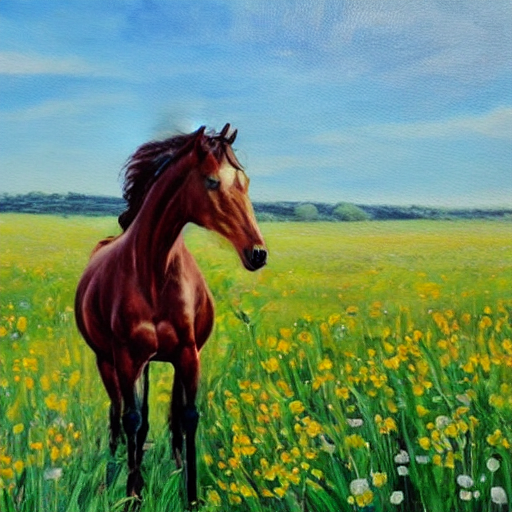}
\end{subfigure}
\begin{subfigure}[b]{0.16\columnwidth}
	\centering
	\includegraphics[width=\columnwidth]{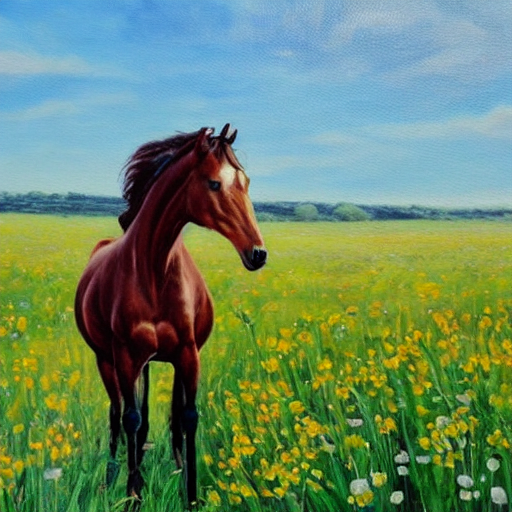}
\end{subfigure}

\begin{subfigure}[b]{0.16\columnwidth}
	\centering
	\includegraphics[width=\columnwidth]{figures/image_variation_stable/2/0_0.025284767150878906.png}
    \caption{Initialization}
\end{subfigure}
\begin{subfigure}[b]{0.16\columnwidth}
	\centering
	\includegraphics[width=\columnwidth]{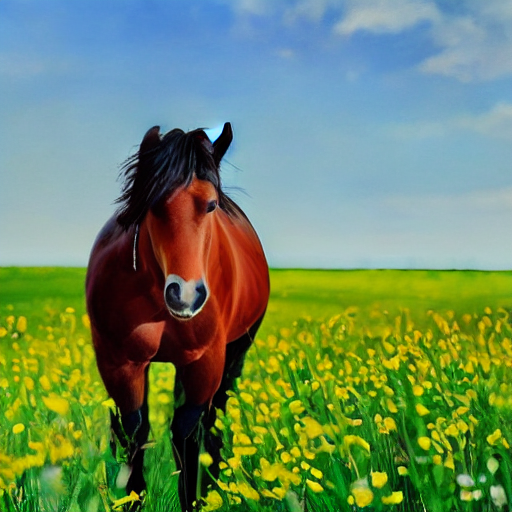}
    \caption{After 1 steps}
\end{subfigure}
\begin{subfigure}[b]{0.16\columnwidth}
	\centering
	\includegraphics[width=\columnwidth]{figures/image_variation_stable/2/3_2.1664271354675293.png}
    \caption{After 3 steps}
\end{subfigure}
\begin{subfigure}[b]{0.16\columnwidth}
	\centering
	\includegraphics[width=\columnwidth]{figures/image_variation_stable/2/5_4.052406311035156.png}
    \caption{After 5 steps}
\end{subfigure}
\begin{subfigure}[b]{0.16\columnwidth}
	\centering
	\includegraphics[width=\columnwidth]{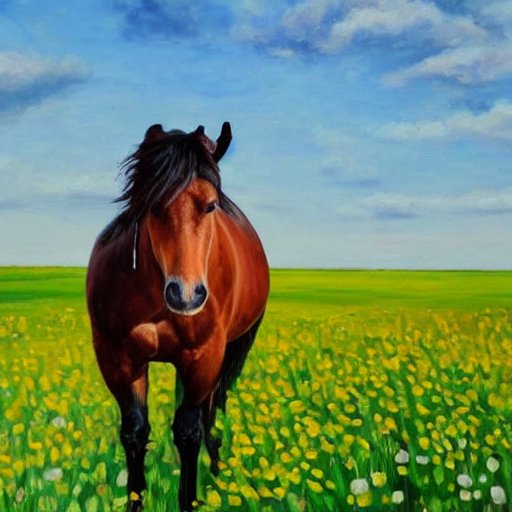}
    \caption{After 7 steps}
\end{subfigure}
\begin{subfigure}[b]{0.16\columnwidth}
	\centering
	\includegraphics[width=\columnwidth]{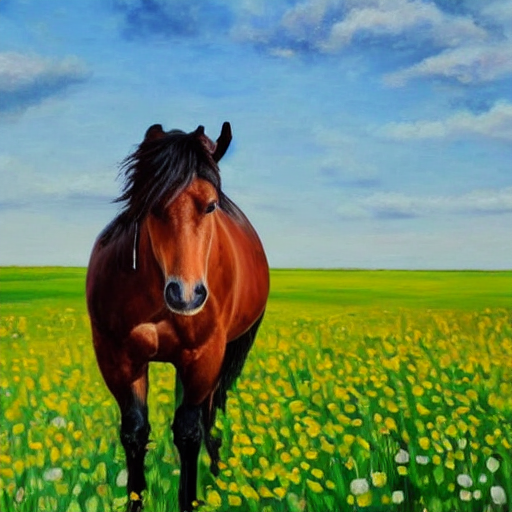}
    \caption{After 9 steps}
\end{subfigure}

\caption{Iterations of ParaTAA with different initaizliations. P1: "A 4k detailed photo of a horse in a field of flowers". P2: "An oil painting of a horse in a field of flowers".  From top to bottom, the rows represents: 1. Sampling with P1 with random initialization; 2. Sampling with P2 with random initialization. 3. Sampling with P2 with trajectory of P1 as initialization and $T_{\text{init}}=50$. 4. Sampling with P2 with trajectory of P1 as initialization and $T_{\text{init}}=35$. }
	\label{fig:early_stop_app}
\end{figure}

\begin{figure}[htpb]
	\centering
	\includegraphics[width=0.4\textwidth]{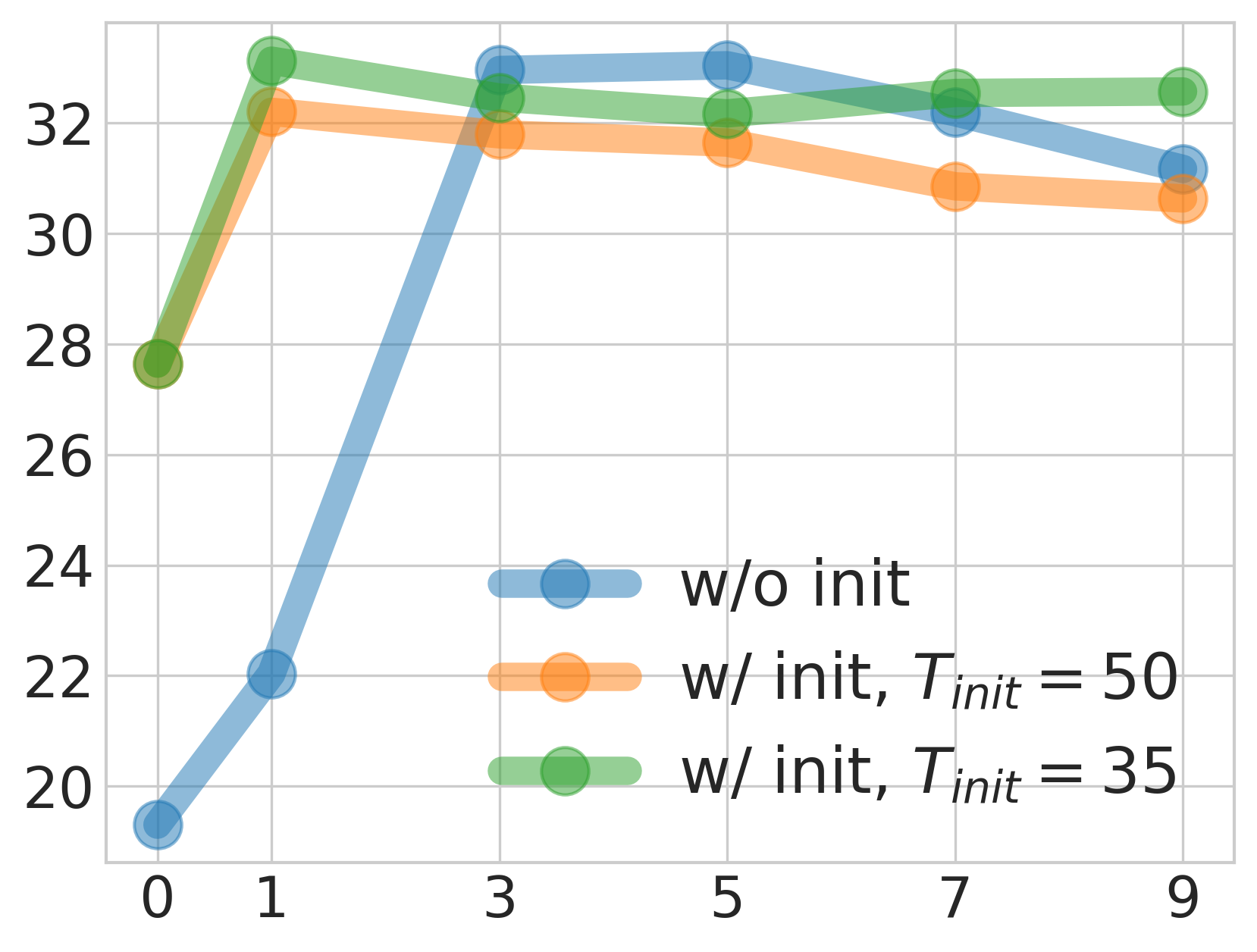}
\caption{Quantative evaluation for the three settings: 1. Sampling with P2 with random initialization. 2. Sampling with P2 with trajectory of P1 as initialization and $T_{\text{init}}=50$. 3. Sampling with P2 with trajectory of P1 as initialization and $T_{\text{init}}=35$. The y-axis is the CLIP scores w.r.t the prompt "An oil painting of a horse in a field of flowers".}
	\label{fig:from_init_quant}
\end{figure}

\section{Additional Examples of Smooth Image Variation}
\label{app:image_variation}

In Figure \ref{fig:demo_image_variation4}, additional examples are provided to demonstrate the capability of ParaTAA in facilitating smooth image transitions. Specifically, for DDIM with 50 steps, we utilize ParaTAA between two similar prompts, P1 and P2. Initially, we generate a trajectory from P1 using ParaTAA, which is then employed as the starting point for sampling from P2, with the initialization timestep $T_{\text{init}}$ set between 35 and 40. The results indicate that ParaTAA can lead  transformation from the source to the target image in a seamless manner along the image manifold within very few iteration steps.

\begin{figure}[htpb]
	\centering
	\begin{subfigure}[b]{0.85\columnwidth}
	\centering
	\includegraphics[width=0.245\columnwidth]{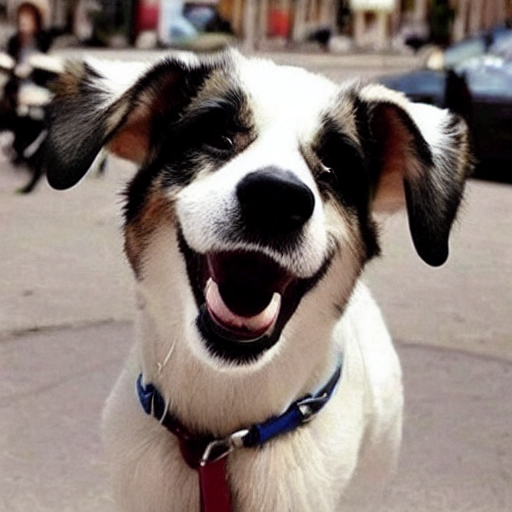}
\hfill
	\includegraphics[width=0.245\columnwidth]{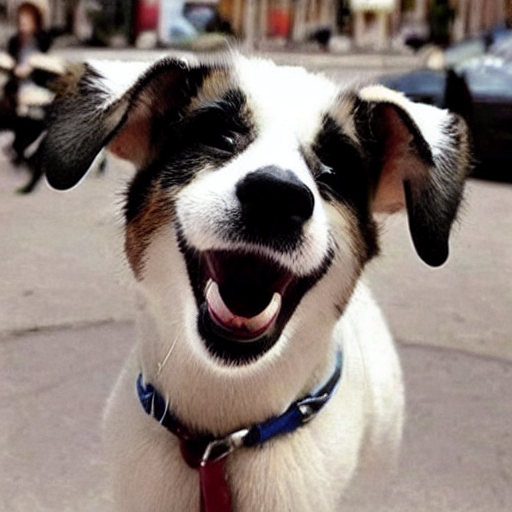}
\hfill
	\includegraphics[width=0.245\columnwidth]{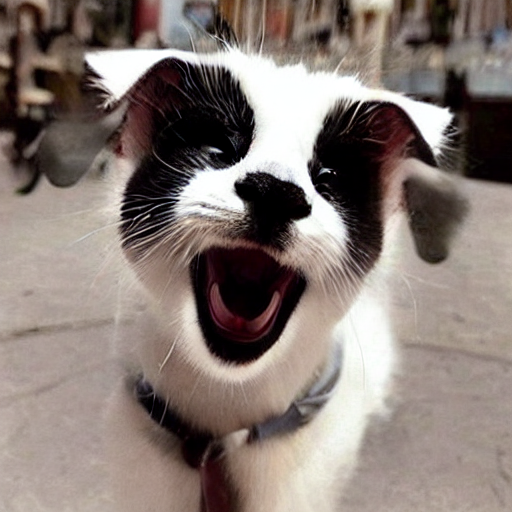}
\hfill
	\includegraphics[width=0.245\columnwidth]{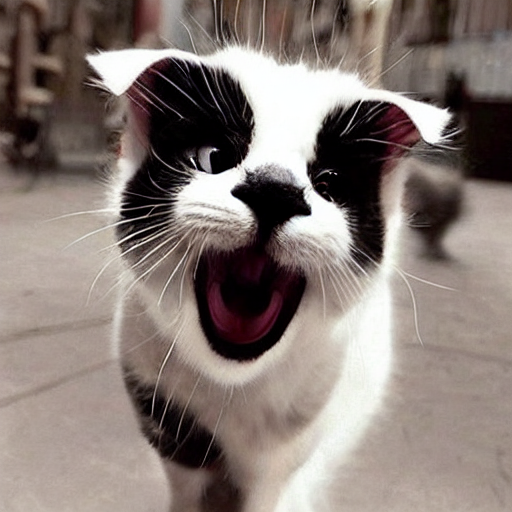}
    \caption{P1: "A cute dog" $\rightarrow$ P2: "A cute cat" }
\end{subfigure}
	\begin{subfigure}[b]{0.85\columnwidth}
	\centering
	\includegraphics[width=0.245\columnwidth]{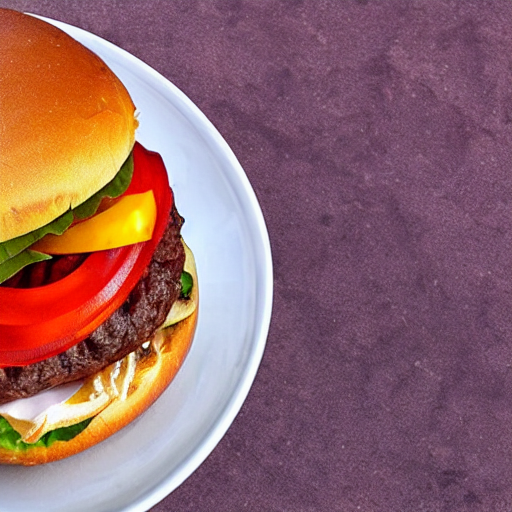}
\hfill
	\includegraphics[width=0.245\columnwidth]{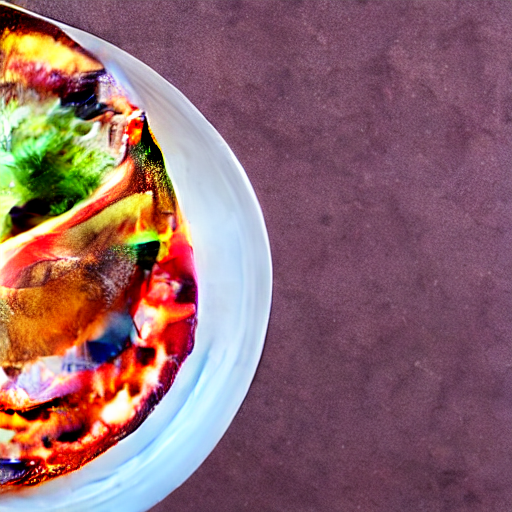}
\hfill
	\includegraphics[width=0.245\columnwidth]{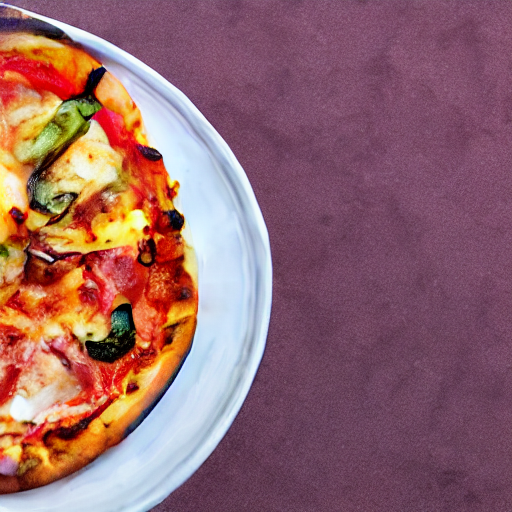}
\hfill
	\includegraphics[width=0.245\columnwidth]{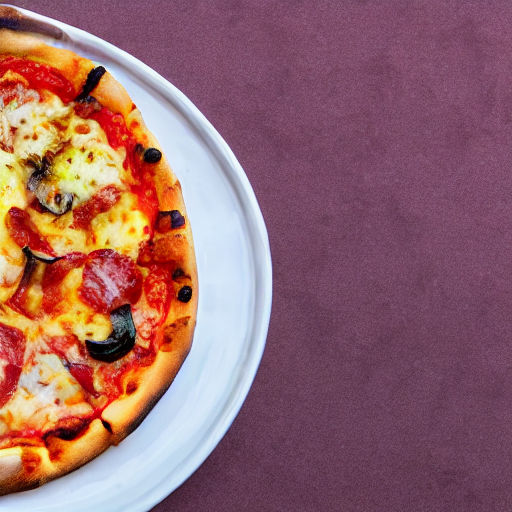}
    \caption{P1: "A delicious hamburger" $\rightarrow$ P2: "A delicious pizza" }
\end{subfigure}
	\begin{subfigure}[b]{0.85\columnwidth}
	\centering
	\includegraphics[width=0.245\columnwidth]{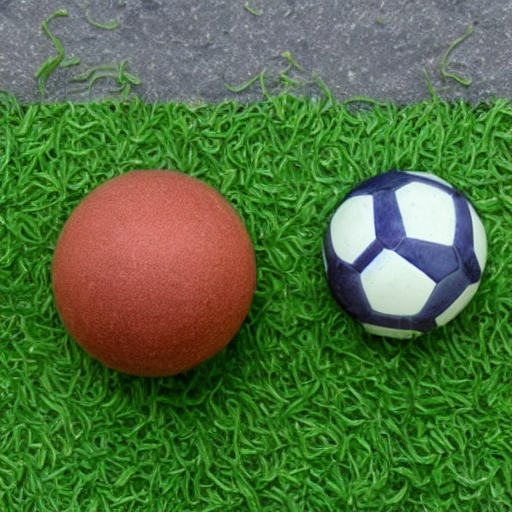}
    \hfill
	\includegraphics[width=0.245\columnwidth]{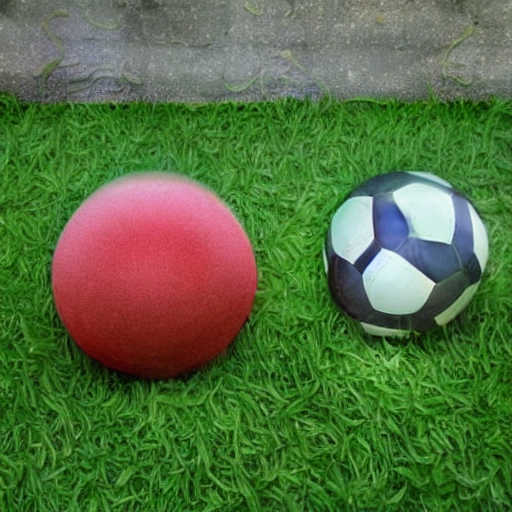}
    \hfill
	\includegraphics[width=0.245\columnwidth]{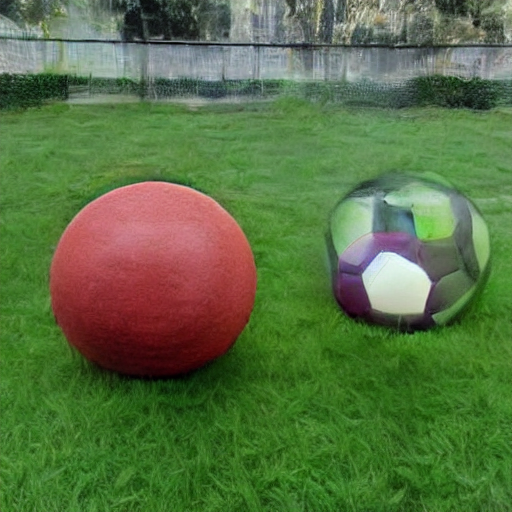}
    \hfill
	\includegraphics[width=0.245\columnwidth]{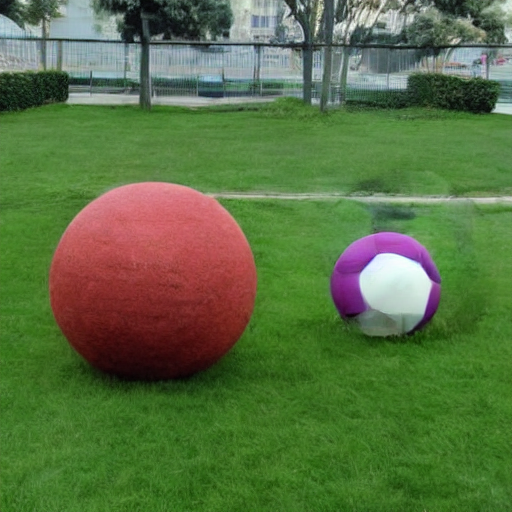}
    \caption{P1: "Two small balls" $\rightarrow$ P2: "Two huge balls" }
\end{subfigure}
	\begin{subfigure}[b]{0.85\columnwidth}
	\centering
	\includegraphics[width=0.245\columnwidth]{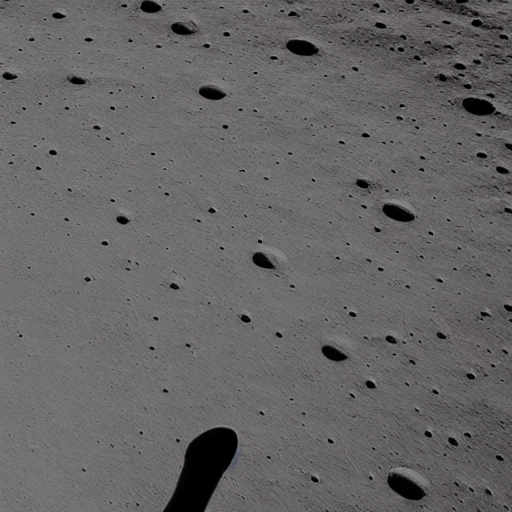}
    \hfill
	\includegraphics[width=0.245\columnwidth]{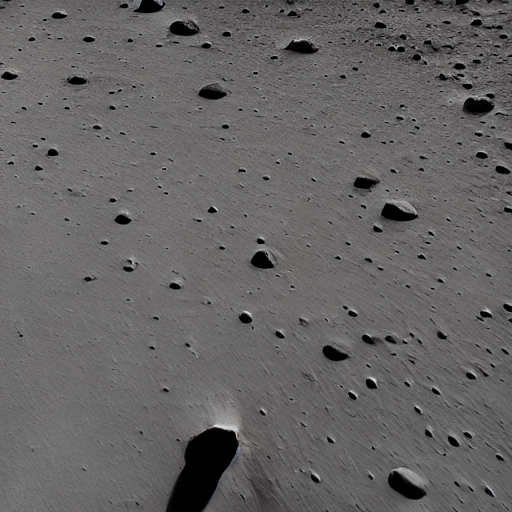}
    \hfill
	\includegraphics[width=0.245\columnwidth]{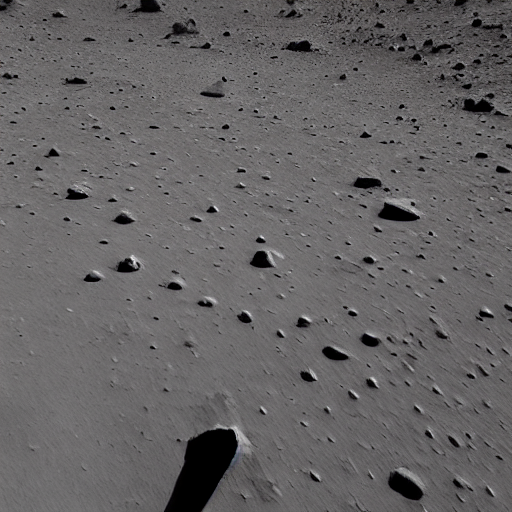}
\hfill
	\includegraphics[width=0.245\columnwidth]{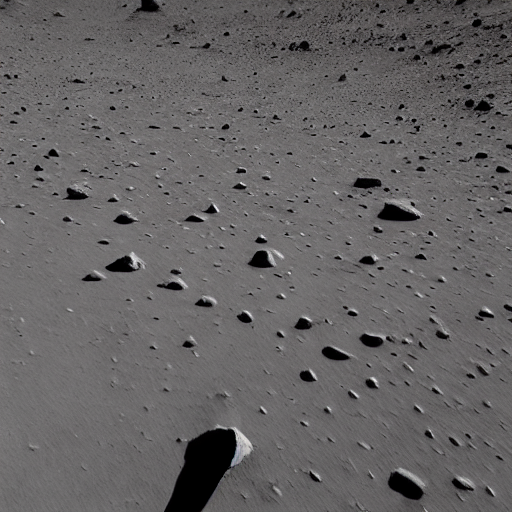}
    \caption{P1: "Walking on Moon" $\rightarrow$ P2: "Walking on Mars" }
\end{subfigure}
\caption{Iterations of ParaTAA using an existing trajectory for initialization. From left to right, the columns represent the initial image, the image after 1 step, the image after 3 steps, and the image after 5 steps.}
\label{fig:demo_image_variation4}
\end{figure}

\end{document}